\def \P{{\mathbb P}}
\def \epsilon{\varepsilon}
\let\tilde\widetilde
\let\hat\widehat
\newcommand{\abs}[1]{|#1|}
\newcommand{\norm}[1]{\left\|#1\right\|}
\newcommand{\wt}[1]{\widetilde{#1}}
\newcommand{\wh}[1]{\widehat{#1}}
\newcommand{\set}[1]{\left\{#1\right\}}
\newcommand{\T}{\top}
\def \vec{\text{vec}}
\DeclarePairedDelimiter\ip{\langle}{\rangle}
\newcommand{\C}{\mathbb{C}}
\newcommand{\R}{\mathbb{R}}
\newcommand{\E}{\mathbb{E}}
\newcommand{\N}{\mathcal{N}}
\newcommand{\A}{\mathcal{A}}
\newcommand{\tS}{\mathcal{S}}
\newcommand{\bS}{\mathbb{S}}
\newcommand{\Var}{\mathbb{V}\text{ar}}
\renewcommand{\r}{r}
\newcommand{\mA}{\boldsymbol{A}}
\newcommand{\overbar}[1]{\mkern 1.5mu\overline{\mkern-2.5mu#1\mkern-1mu}\mkern 1.5mu}
\newcommand{\Zbar}{\overbar{Z}}
\newcommand{\zbar}{\bar{z}}
\newcommand{\Ztilde}{\tilde{Z}}
\newcommand{\ztilde}{\tilde{z}}
\newcommand{\Zstar}{Z^\star}
\newcommand{\Xstar}{X^\star}
\newcommand{\zstar}{z^\star}
\newcommand{\Ubar}{\bar{U}}
\DeclareMathOperator{\diag}{\text{diag}}
\DeclareMathOperator{\rank}{\text{rank}}
\DeclareMathOperator{\trace}{\text{tr}}
\DeclareMathOperator*{\argmin}{arg\,min}
\DeclareMathOperator*{\argmax}{arg\,max}
\newtheorem{theorem}{Theorem}
\newtheorem{lemma}{Lemma}
\newtheorem{coro}{Corollary}
\newtheorem{defn}{Definition}
\title{\Large\bf A Convergent Gradient Descent Algorithm for \\ Rank Minimization
and Semidefinite Programming \\ from Random Linear Measurements}
\author{
\normalsize Qinqing Zheng \quad John Lafferty\\
\normalsize University of Chicago \\
}
\date{}
\begin{document}
\maketitle

\begin{abstract}
We propose a simple, scalable, and fast gradient descent algorithm to
optimize a nonconvex objective for the rank minimization problem and a
closely related family of semidefinite programs.  With $O(\r^3
\kappa^2 n \log n)$ random measurements of a positive semidefinite
$n\times n$ matrix of rank $\r$ and condition number $\kappa$, our
method is guaranteed to converge linearly to the global optimum.
\end{abstract}

\section{Introduction}
Semidefinite programming has become a key optimization tool in many
areas of applied mathematics, signal processing and machine learning.
SDPs often arise naturally from the problem structure, or are derived
as surrogate optimizations that are relaxations of difficult
combinatorial problems \citep{Aspremont:04,amini:09,Goemans:1995}.  In
spite of the importance of SDPs in principle---promising efficient
algorithms with polynomial runtime guarantees---it is widely recognized
that current optimization algorithms based on interior point methods
can handle only relatively small problems.  Thus, a considerable gap
exists between the theory and applicability of SDP formulations.
Scalable algorithms for semidefinite programming, and closely related
families of nonconvex programs more generally, are greatly needed.

A parallel development is the surprising effectiveness of simple
classical procedures such as gradient descent for large
scale problems, as explored in the recent machine learning literature.
In many areas of machine learning and signal processing such as
classification, deep learning, and phase retrieval, gradient descent
methods, in particular first order stochastic optimization, have led
to remarkably efficient algorithms that can attack very large scale
problems \cite{Bach:11,bach:14,hoffman:13,CanLiSol14}. In this
paper we build on this work to develop first-order algorithms for
solving the rank minimization problem under random measurements and
a closely related family of semidefinite programs. Our algorithms
are efficient and scalable, and we prove that they attain linear
convergence to the global optimum under natural assumptions.

The affine rank minimization problem is to find a matrix 
$X^\star \in \R^{n\times p}$ of minimum rank satisfying constraints $\A(X^\star) = b$,
where $\A: \R^{n \times p} \longrightarrow \R^m$ is an affine transformation.
The underdetermined case where $m \ll np$ is of particular interest, and
can be formulated as the optimization
\begin{equation}
    \label{eq:affine_rank_min}
\begin{aligned}
    & \min_{X \in \R^{n \times p} } && \rank(X)\\
    & \text{subject to} && \A(X) = b. 
\end{aligned}
\end{equation}
This problem is a direct generalization of compressed sensing,
and subsumes many machine learning problems such as image compression, low rank matrix completion and low-dimensional metric
embedding
\cite{RecFazPar10, JaiNetSan13}. While the problem is natural
and has many applications, the optimization is nonconvex and challenging to solve.
Without conditions on the transformation $\A$ or the minimum rank
solution $X^\star$, it is generally NP hard \cite{MekJaiCar08}. 

Existing methods, such as nuclear norm relaxation \cite{RecFazPar10},
singular value projection (\texttt{SVP}) \cite{JaiMekDhi10}, and alternating least
squares (\texttt{AltMinSense}) \cite{JaiNetSan13}, assume that a certain
restricted isometry property (RIP) holds for $\A$. In the random measurement
setting, this essentially means that at least $O(\r (n+p) \log(n+p) )$
measurements are available, where $\r = \rank(\Xstar)$ \cite{RecFazPar10}.
In this work, we assume that
(i) $X^\star$ is positive semidefinite and
(ii) $\A: \R^{n \times n} \longrightarrow \R^m$ is defined as $\A(X)_i = \trace(A_i X)$, where
each $A_i$ is a random $n \times n$ symmetric matrix from the Gaussian Orthogonal
Ensemble (GOE), with
$(A_i)_{jj} \sim \N(0,2)$ and 
$(A_i)_{jk} \sim \N(0,1)$ for $j \neq k$. 
Our goal is thus to solve the optimization
\begin{equation}
    \label{eq:affine_rank_psd}
\begin{aligned}
    &\min_{X \succeq 0 } &&\rank(X)\\
    &\text{subject to} && \trace(A_i X) = b_i, \;\; i = 1, \ldots, m. \\
\end{aligned}
\end{equation}
In addition to the wide applicability of affine rank minimization,
the problem is also closely connected to
a class of semidefinite programs.
In Section \ref{sec:conn_sdp}, we show that the minimizer
of a particular class of SDP can be obtained by a linear transformation
of $\Xstar$. Thus, efficient algorithms for problem \eqref{eq:affine_rank_psd}
can be applied in this setting as well.

Noting that a
rank-$\r$ solution $\Xstar$ 
to \eqref{eq:affine_rank_psd} 
can be decomposed as $\Xstar = \Zstar {\Zstar}^\T$
where $\Zstar \in
\R^{n \times \r}$, our approach is based on minimizing
the squared residual
\[ 
    f(Z) = \frac{1}{4m}\norm{\A(ZZ^\T) - b}^2 = \frac{1}{4m} \sum_{i=1}^m
    \left(\trace(Z^\T A_i Z) - b_i \right)^2.
\]
While this is a nonconvex function, we take motivation from recent work for phase retrieval
by \citet{CanLiSol14}, and develop a gradient descent algorithm for
optimizing $f(Z)$, using a carefully constructed initialization and step size. 
Our main contributions concerning this algorithm are as follows.
\begin{itemize}
    \item We prove that with $O(\r^3 n\log n)$ constraints our
      gradient descent scheme can  
      exactly recover $\Xstar$ with
      high probability. Empirical experiments show that this bound
      may potentially be improved to $O(\r n \log n)$.

    \item We show that our method converges linearly, and 
     has lower computational cost compared with previous methods.
        
    \item We carry out a detailed comparison of rank minimization
      algorithms, and demonstrate that when the measurement matrices
      $A_i$ are sparse, our gradient method significantly
      outperforms alternative approaches.
\end{itemize}
In Section \ref{sec:related} we briefly
review related work. In Section \ref{sec:algo} we discuss the gradient
scheme in detail. Our main analytical results are presented in
Section~\ref{sec:rst},
with detailed proofs contained in the supplementary material.
Our experimental results are presented in Section \ref{sec:expr}, 
and we conclude with a brief discussion of future work in Section
\ref{sec:conclude}.

%
%
%

\section{Semidefinite Programming and Rank Minimization}
\label{sec:conn_sdp}
Before reviewing related work and presenting our algorithm, we pause
to explain the connection between semidefinite programming and rank
minimization. This connection enables our scalable gradient descent
algorithm to be applied and analyzed for certain classes of SDPs.

Consider a standard form semidefinite program
\begin{equation}
    \label{eq:sdp_C_psd}
\begin{aligned}
    & \min_{\tilde{X} \succeq 0}  && \trace(\tilde{C} \tilde{X})\\
    & \text{subject to} && \trace(\tilde{A}_i \tilde{X}) = b_i, \;\; i = 1, \ldots, m \\
\end{aligned}
\end{equation}
where $\tilde{C}, \tilde{A}_1, \ldots, \tilde{A}_m \in \bS^n$.
If $\tilde{C}$ is positive definite, then we can write
$\tilde{C} = LL^\T$ where $L \in \R^{n \times n}$
is invertible. It follows that the minimum of problem \eqref{eq:sdp_C_psd} is
the same as
\begin{equation}
    \label{eq:sdp_no_C}
\begin{aligned}
    & \min_{X \succeq 0} && \trace(X)\\
    & \text{subject to} && \trace(A_i X) = b_i, \;\; i = 1, \ldots, m \\
\end{aligned}
\end{equation}
where $A_i = L^{-1} \tilde{A}_i {L^{-1}}^\top$. In particular, minimizers $\tilde{X}^*$ of
\eqref{eq:sdp_C_psd} are obtained from minimizers $X^*$ 
of \eqref{eq:sdp_no_C} via the transformation
\[ \tilde{X}^* =  {L^{-1}}^\top X^* L^{-1}. \] 
Since $X$ is positive semidefinite, $\trace(X)$ is equal to $\norm{X}_*$.
Hence, problem \eqref{eq:sdp_no_C} is the nuclear norm relaxation of problem
\eqref{eq:affine_rank_psd}. 
Next, we characterize the specific cases where $X^* = X^\star$, so
that the SDP and rank minimization solutions coincide.  The following 
result is from \citet{RecFazPar10}.
\begin{theorem}
Let $\A: \R^{n \times n} \longrightarrow \R^m$ be a linear map. For every
integer $k$ with $1\leq k \leq n$, define the $k$-restricted isometry
constant to be the smallest value $\delta_k$ such that 
\[ (1-\delta_k) \norm{X}_F \leq \norm{\A(X)} \leq (1+\delta_k) \norm{X}_F  \]
holds for any matrix $X$ of rank at most $k$.
Suppose that there
exists a rank $r$ matrix $X^\star$ such that $\A(X^\star) = b$. If
$\delta_{2r} < 1$, then $X^\star$ is the only matrix of rank at most $r$ satisfying $\A(X)
= b$. Furthermore, if $\delta_{5r} < 1 / 10$, then $X^\star$ can be attained by
minimizing $\norm{X}_*$ over the affine subset.
\end{theorem}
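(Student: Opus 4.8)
The statement consists of two independent claims, which I would treat in increasing order of difficulty. The uniqueness claim under $\delta_{2r} < 1$ is immediate from the definition: if $X'$ were another matrix of rank at most $r$ with $\A(X') = b = \A(\Xstar)$, then $D := X' - \Xstar$ has rank at most $2r$ and satisfies $\A(D) = 0$, so the lower restricted isometry bound at order $2r$ gives $(1 - \delta_{2r})\norm{D}_F \le \norm{\A(D)} = 0$, and $\delta_{2r} < 1$ forces $D = 0$.

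For the nuclear-norm recovery claim under $\delta_{5r} < 1/10$, I would follow the standard null-space ``cone plus shelling'' strategy, adapted from compressed sensing to matrices. Let $\hat X$ be any minimizer of $\norm{X}_*$ over $\{X : \A(X) = b\}$ and set $R = \hat X - \Xstar$, so $\A(R) = 0$; the goal is $R = 0$. Fix a compact SVD $\Xstar = U\Sigma V^\T$ with $U, V \in \R^{n\times r}$ and let $T$ be the linear subspace of matrices of the form $UM^\T + NV^\T$. Decompose $R = R_T + R_c$ into its orthogonal projections onto $T$ and $T^\perp$. Two structural facts drive the argument: (i) $\rank(R_T) \le 2r$, since each of the two summands defining an element of $T$ has rank at most $r$; and (ii) $R_c$ has column space orthogonal to that of $\Xstar$ and row space orthogonal to that of $\Xstar$, which makes the nuclear norm additive, $\norm{\Xstar + R_c}_* = \norm{\Xstar}_* + \norm{R_c}_*$. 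Combining (ii) with optimality $\norm{\hat X}_* \le \norm{\Xstar}_*$ and the triangle inequality yields the cone condition $\norm{R_c}_* \le \norm{R_T}_*$.

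Next I would shell $R_c$: order its singular triples by decreasing singular value and write $R_c = R_1 + R_2 + \cdots$, where $R_1$ carries the largest $3r$ singular values, $R_2$ the next $3r$, and so on. The usual averaging estimate gives $\norm{R_{j+1}}_F \le \norm{R_j}_* / \sqrt{3r}$, hence $\sum_{j \ge 2}\norm{R_j}_F \le \norm{R_c}_* / \sqrt{3r} \le \norm{R_T}_* / \sqrt{3r} \le \sqrt{2/3}\,\norm{R_T}_F$, using the cone condition and $\norm{R_T}_* \le \sqrt{2r}\,\norm{R_T}_F$. Put $R_0 = R_T + R_1$; it has rank at most $5r$, and $\norm{R_0}_F \ge \norm{R_T}_F$ since the blocks $R_j$ lie in $T^\perp$ and are therefore Frobenius-orthogonal to $R_T$. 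Since $\A(R) = 0$ we have $\A(R_0) = -\sum_{j \ge 2}\A(R_j)$, and applying the restricted isometry bounds at orders $5r$ and $3r$ to the two sides gives
\[
(1 - \delta_{5r})\norm{R_T}_F \;\le\; (1 - \delta_{5r})\norm{R_0}_F \;\le\; (1 + \delta_{3r})\sum_{j \ge 2}\norm{R_j}_F \;\le\; (1 + \delta_{3r})\sqrt{2/3}\,\norm{R_T}_F .
\]
Because $\delta_{3r} \le \delta_{5r} < 1/10$ and $(1 + \tfrac{1}{10})\sqrt{2/3} < 1 - \tfrac{1}{10}$, this is possible only if $\norm{R_T}_F = 0$; then $\norm{R_c}_* \le \norm{R_T}_* = 0$, so $R = 0$ and $\hat X = \Xstar$.

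The main obstacle is not the arithmetic but setting up fact (ii) cleanly: establishing that $\norm{\Xstar + R_c}_* = \norm{\Xstar}_* + \norm{R_c}_*$ whenever the row and column spaces of $R_c$ avoid those of $\Xstar$ (a short spectral lemma, e.g.\ via a common block-diagonalizing choice of orthonormal bases), since this identity is exactly what turns optimality of $\hat X$ into the cone condition that the RIP can then defeat. The only other point needing care is the constant bookkeeping: choosing the shell size $3r$ so that the ranks that appear are $3r$ and $2r + 3r = 5r$, matching the hypothesis $\delta_{5r} < 1/10$, which then holds with a little slack.
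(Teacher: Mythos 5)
Your argument is correct: the uniqueness part is the immediate rank-$2r$ null-space observation, and the recovery part is the standard tangent-space decomposition, cone condition $\norm{R_c}_* \leq \norm{R_T}_*$, and $3r$-shelling argument, with the constants working out since $(1+\delta_{3r})\sqrt{2/3} < 1-\delta_{5r}$ when $\delta_{5r} < 1/10$. Note that the paper itself gives no proof of this theorem—it is quoted directly from \citet{RecFazPar10}—and your proof is essentially the original argument from that source (its null-space decomposition lemma and block-partition of the residual), so it matches the intended proof rather than offering a different route.
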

In other words, since $\delta_{2r} \leq \delta_{5r}$,
if $\delta_{5r} < 1 / 10$ holds for the transformation $\A$ and one
finds a matrix $X$ of rank $\r$ satisfying the affine constraint, then
$X$ must be positive semidefinite. Hence, one can ignore the
semidefinite constraint $X \succeq 0$ when
solving the rank minimization \eqref{eq:affine_rank_psd}. The resulting problem
then can be exactly solved by nuclear norm relaxation. 
Since the minimum rank solution is positive semidefinite, 
it then coincides with the solution of the SDP \eqref{eq:sdp_no_C}, which is a constrained nuclear norm
optimization.

The observation that one can ignore the semidefinite constraint 
justifies our experimental comparison with methods such as 
nuclear norm relaxation, \texttt{SVP}, and \texttt{AltMinSense},
described in the following section.

\section{Related Work}
\label{sec:related}
Burer and Monteiro \cite{BurMon03}
proposed a general approach for solving semidefinite programs 
using factored, nonconvex optimization, giving mostly experimental 
support for the convergence of the algorithms.
The first nontrivial guarantee for solving affine rank minimization problem 
is given by 
\citet{RecFazPar10}, based on replacing the rank function by the
convex surrogate nuclear norm, as already mentioned in the previous section.
While this is a convex
problem, solving it in practice is nontrivial, and a variety of methods have been
developed for efficient nuclear norm minimization.  The most popular algorithms
are proximal methods that perform singular value thresholding
\cite{CaiCanShe10} at every iteration.  While effective for small
problem instances, the computational expense of the SVD prevents the
method from being useful for large scale
problems. 

Recently, \citet{JaiMekDhi10} proposed a projected gradient descent algorithm
\texttt{SVP} (Singular Value Projection) that solves
\[
    \begin{aligned}
         & \min_{X \in \R^{n \times p} } && \norm{\A(X) - b}^2\\
         & \text{subject to} && \rank(X) \leq \r,
    \end{aligned}
\]
where $\norm{\cdot}$ is the $\ell_2$ vector norm and $\r$ is the input rank.
In the $(t+1)$th iteration, \texttt{SVP} updates $X^{t+1}$ as
the best rank $\r$ approximation to the gradient update $X^t - \mu
\A^\T(\A(X^t) - b)$, which is constructed from the SVD.
If $\rank(X^\star) = \r$, then \texttt{SVP} can recover $X^\star$ under
a similar RIP condition as the nuclear norm heuristic, and enjoys a linear
numerical rate of convergence. Yet \texttt{SVP} suffers from the expensive
per-iteration SVD for large problem instances.

Subsequent work of \citet{JaiNetSan13} proposes an alternating least
squares algorithm \texttt{AltMinSense} that avoids the per-iteration SVD.
\texttt{AltMinSense} factorizes $X$ into two factors $U \in \R^{n \times \r}, V
\in \R^{p \times \r}$ such that $X = UV^\T$ and minimizes
the squared residual $\norm{\A(UV^\T) - b}^2$ by updating $U$ and $V$
alternately. Each update is a least squares problem. The authors show that the
iterates obtained by \texttt{AltMinSense} converge to $X^\star$ linearly under
a RIP condition. However, the least squares problems are
often ill-conditioned, it is difficult to observe 
\texttt{AltMinSense} converging to $X^\star$ in practice.

As described above,
considerable progress has been made on algorithms for
rank minimization and certain semidefinite programming problems.  Yet truly efficient, scalable and
provably convergent algorithms have not yet been obtained.  In the specific setting that
$\Xstar$ is positive semidefinite, our algorithm exploits this
structure to achieve these goals. We note that 
recent and independent work of \citet{TuBocSol15} proposes a hybrid algorithm
called \emph{Procrustes Flow} (\texttt{PF}), which uses a few
iterations of \texttt{SVP} as initialization, and then applies gradient descent. 

\section{A Gradient Descent Algorithm for Rank Minimization}
\label{sec:algo}
Our method is described in Algorithm \ref{alg:gd}. It is parallel to the
\emph{Wirtinger Flow} (\texttt{WF}) algorithm for phase retrieval
\cite{CanLiSol14}, to recover a complex vector
$x \in \C^n$ given the squared magnitudes of its linear measurements $b_i =
\abs{\ip{a_i, x} }^2, \; i \in [m]$, where $a_1, \ldots, a_m \in \C^n$.
\citet{CanLiSol14} propose a first-order method to minimize the sum of
squared residuals
\begin{equation}
    \label{eq:rank1_phaseretrieval}
    f_{\texttt{WF}}(z) = \sum_{i=1}^n \left( \abs{\ip{a_i, z} }^2 -
        b_i \right)^2.
\end{equation}
The authors establish the convergence of \texttt{WF} to the global
optimum---given sufficient measurements, the iterates of
\texttt{WF} converge linearly to $x$ up to a global phase, with high probability.

If $z$ and the $a_i$s are real-valued, the function $f_{\texttt{WF}}(z)$ 
can be expressed as 
$$
f_{\texttt{WF}}(z) = \sum_{i=1}^n \left( z^\T a_i a_i^\T z - x^\T a_i a_i^\T x
\right)^2,$$ which is
a special case of $f(Z)$ where $A_i = a_i a_i^\T$ and each of $Z$ and $\Xstar$
are rank one. See Figure \ref{fig:rank1_objfunc} for an
illustration;  Figure \ref{fig:rank2_conv} shows the convergence rate of
our method.  Our
methods and results are thus generalizations of Wirtinger flow for
phase retrieval.

Before turning to the presentation of our technical results in
the following section, we present some intuition and remarks about how and why
this algorithm works.  For simplicity, let us assume that the rank is specified correctly.

Initialization is of course crucial in nonconvex optimization, as 
many local minima may be present.  To obtain a sufficiently 
accurate initialization, we use a spectral method, similar to those
used in \cite{NetJaiSan13,CanLiSol14}.  The starting point is
the observation that a linear combination of the constraint values
and matrices yields an unbiased estimate of the solution.

\begin{lemma}
    \label{lem:rank1_M_expectation}
    Let $M = \frac{1}{m} \sum_{i=1}^m b_i A_i$. Then $\frac{1}{2}\E(M)
    = \Xstar$, where the expectation is with respect to the randomness
    in the measurement matrices $A_i$.
\end{lemma}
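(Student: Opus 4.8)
The plan is to reduce the statement to the expectation of a single term and then evaluate that expectation entrywise using the second-moment structure of the Gaussian Orthogonal Ensemble.

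First I would use that the measurement matrices are i.i.d.\ and that $b_i = \trace(A_i \Xstar)$, so each summand $b_i A_i$ has the same distribution and
\[
\E(M) \;=\; \frac{1}{m}\sum_{i=1}^m \E\!\big(\trace(A_i \Xstar)\,A_i\big) \;=\; \E\!\big(\trace(A \Xstar)\,A\big),
\]
where $A$ denotes a generic GOE matrix. Expanding the trace as $\trace(A\Xstar) = \sum_{j,k} A_{jk}(\Xstar)_{kj}$, the $(p,q)$ entry of the right-hand side is $\sum_{j,k}(\Xstar)_{kj}\,\E(A_{jk}A_{pq})$, so everything comes down to the covariances $\E(A_{jk}A_{pq})$.

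The key computation is that for a GOE matrix one has the uniform identity $\E(A_{jk}A_{pq}) = \delta_{jp}\delta_{kq} + \delta_{jq}\delta_{kp}$. This follows because the entries on and above the diagonal are independent and centered, with $\E(A_{jk}^2)=1$ for $j\neq k$ and $\E(A_{jj}^2)=2$, together with the symmetry $A_{jk}=A_{kj}$; in particular the diagonal case $j=k=p=q$ correctly yields $2$, so no separate bookkeeping for diagonal versus off-diagonal terms is needed. Substituting gives $\big(\E(\trace(A\Xstar)A)\big)_{pq} = (\Xstar)_{qp} + (\Xstar)_{pq} = 2(\Xstar)_{pq}$, the last step using that $\Xstar$ is symmetric. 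Hence $\E(M) = \E(\trace(A\Xstar)A) = \Xstar + {\Xstar}^{\T} = 2\Xstar$, and therefore $\tfrac12 \E(M) = \Xstar$.

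I do not expect a genuine obstacle here: the argument is a short linearity-plus-covariance calculation. The only point that requires care is the factor of $2$ in the diagonal variance of a GOE matrix; if one used the normalization $(A_i)_{jj}\sim\N(0,1)$ instead, the diagonal of $\E(M)$ would pick up the wrong constant and the clean identity $\E(\trace(A\Xstar)A) = 2\Xstar$ would fail. Tracking this constant is exactly what makes the factor $\tfrac12$ in the statement appear.
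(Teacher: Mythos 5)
Your proof is correct and is essentially the same argument as the paper's: an entrywise second-moment computation for GOE entries, with the factor $2$ on the diagonal accounting for the $\tfrac12$ in the statement. The only cosmetic difference is that the paper first decomposes $b_i=\sum_s {\zstar_s}^\T A_i \zstar_s$ and proves the rank-one identity $\E\big((x^\T A x)A\big)=2xx^\T$, whereas you evaluate $\E\big(\trace(A\Xstar)A\big)$ directly via the covariance identity $\E(A_{jk}A_{pq})=\delta_{jp}\delta_{kq}+\delta_{jq}\delta_{kp}$ and the symmetry of $\Xstar$.
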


Based on this fact, let $\Xstar = U^\star \Sigma {U^\star}^\T$ be the eigenvalue decomposition of $\Xstar$, where
$U^\star = [u^\star_1, \ldots, u^\star_\r]$ and $\Sigma = \diag(\sigma_1, \ldots, \sigma_\r)$
such that $\sigma_1 \geq \ldots \geq \sigma_\r$ are the nonzero eigenvalues of
$\Xstar$. Let $\Zstar = U^\star \Sigma^{\frac{1}{2}}$.
Clearly, $u^\star_s = \zstar_s / \norm{\zstar_s} $ is the top $s$th eigenvector of $\E(M)$
associated with eigenvalue $2\norm{\zstar_s}^2$.
Therefore, we initialize according to $z^0_s = \sqrt{\frac{|\lambda_s|}{2}} v_s$ where $(v_s,
\lambda_s)$ is the top $s$th eigenpair of $M$.
For sufficiently large $m$, it is reasonable to expect that 
$Z^0$ is close to $\Zstar$; this is confirmed by concentration of
measure arguments.

Certain key properties of $f(Z)$ will be seen to yield a linear rate
of convergence. In the analysis of convex functions, \citet{Nes04}
shows that for unconstrained optimization, the gradient descent scheme with sufficiently small step
size will converge linearly to the optimum if the objective function
is strongly convex and has a Lipschitz continuous gradient. However, these two
properties are global and do not hold for our objective function $f(Z)$.
Nevertheless, we expect that similar conditions
hold for the local area near $\Zstar$.  If so, then if we start close enough
to $\Zstar$, we can achieve the global optimum.

In our subsequent analysis, we establish the
convergence of Algorithm \ref{alg:gd} with a constant step size of the
form $\mu / \norm{\Zstar}^2_F$, where $\mu$ is a small constant. Since
$\norm{\Zstar}_F$ is unknown, we replace it by $\norm{Z^0}_F$.

\begin{figure}[tb]
    \centering
    \subfloat[]{
        \includegraphics[width=0.45\textwidth]{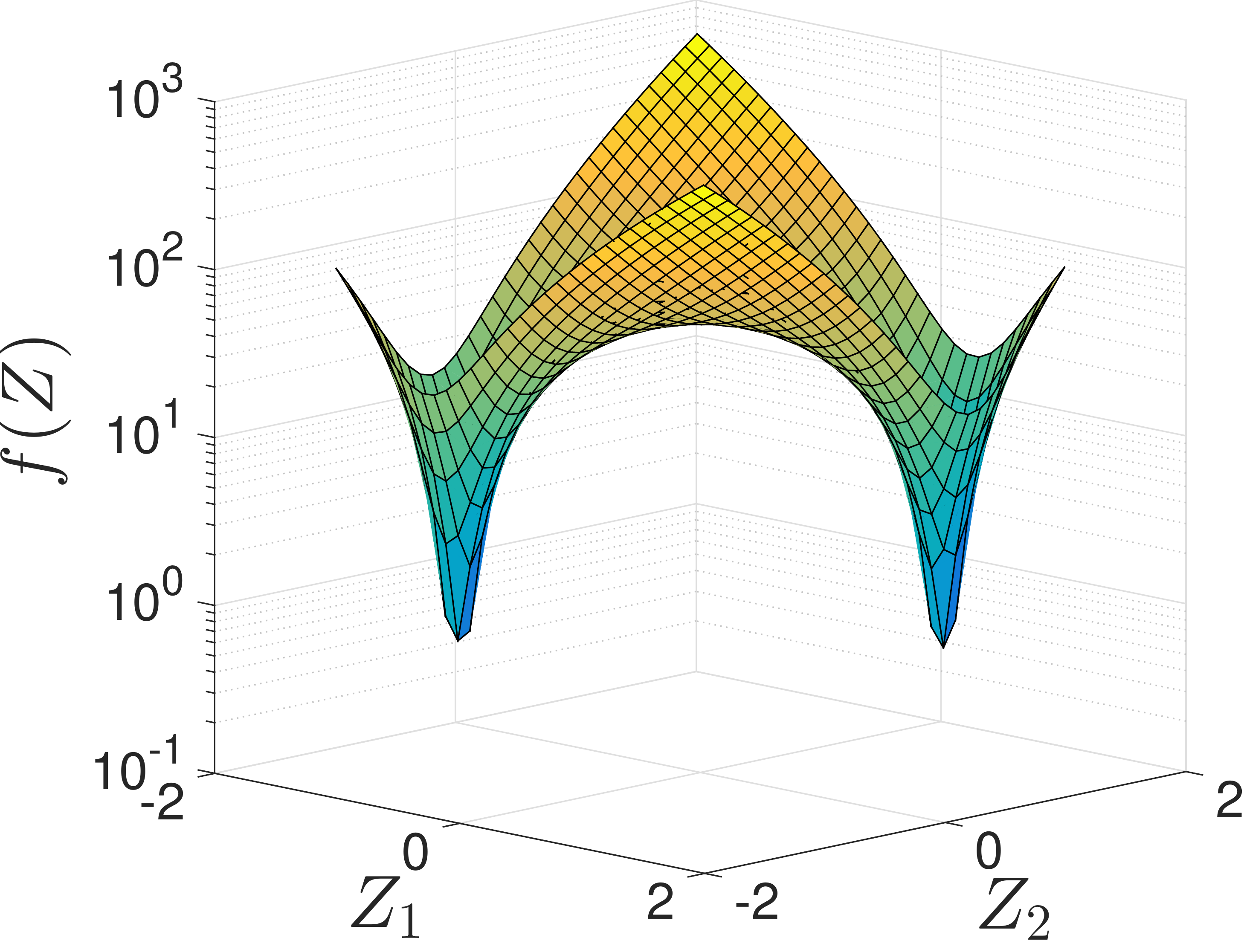}
        \label{fig:rank1_objfunc}
    }
    \hspace{1cm}
    \subfloat[]{
        \includegraphics[width=0.4\textwidth]{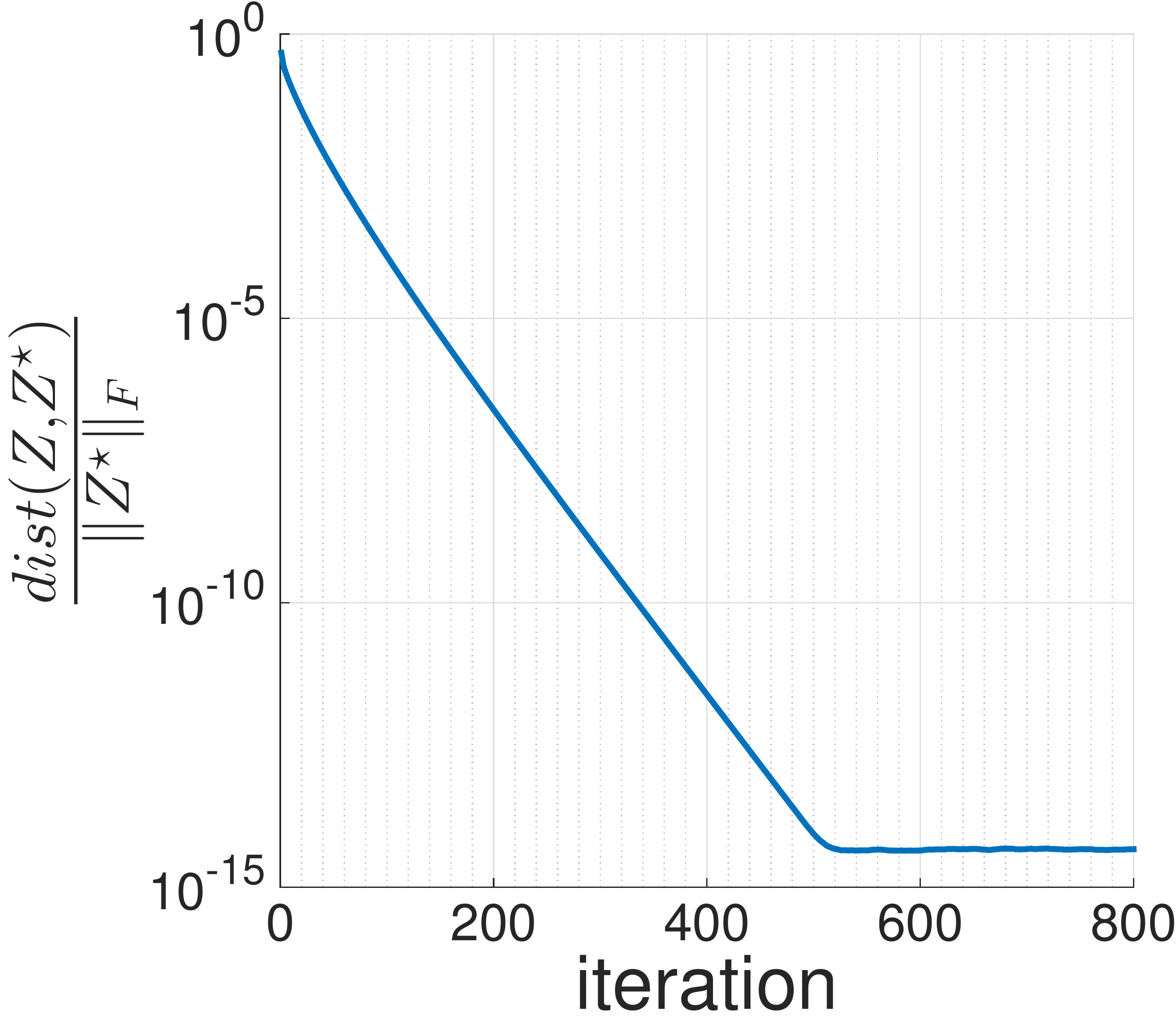}
        \label{fig:rank2_conv}
    }
    \caption{(a) An instance of $f(Z)$
        where $\Xstar \in \R^{2 \times 2}$ is rank-$1$ and $Z \in \R^2$. The underlying truth is
        $\Zstar = [1, 1]^\T$. Both $\Zstar$ and $-\Zstar$ are minimizers.
        (b) Linear convergence of the gradient scheme, for $n = 200$, $m=1000$
        and $\r=2$. The distance metric is given in Definition~\ref{def:dist}.
    }
\end{figure}

\begin{algorithm}[tb]
    \caption{Gradient descent for rank minimization}
    \label{alg:gd}
    \SetKwInOut{Input}{input}
    \Input{ $\{A_i, b_i\}_{i=1}^m, \r, \mu$ }
    \textbf{initialization} \\
    \hspace{0.5cm} Set $(v_1, \lambda_1), \ldots, (v_\r, \lambda_\r)$
    to the top $\r$ eigenpairs of $\frac{1}{m} \sum_{i=1}^{m} b_i A_i$ s.t. $\abs{\lambda_1} \geq \cdots \geq \abs{\lambda_\r}$\\
    \hspace{0.5cm} $Z^0 = [z^0_1, \ldots, z^0_\r]$ where $z^0_s =
    \sqrt{\frac{|\lambda_s|}{2}} \cdot v_s$, $s \in [\r]$\\
    \hspace{0.5cm} $k \leftarrow  0$\\
    \Repeat {convergence} {
        $\nabla f(Z^k) = \frac{1}{m} \sum\limits_{i=1}^m \left( \trace({Z^k}^\T
            A_i Z^k) - b_i \right)A_i Z^k $\\
        \vspace{3pt}
        $Z^{k+1} = Z^k - \dfrac{\mu}{\sum_{s=1}^\r |\lambda_s| / 2} \nabla f(Z^k)$\\
        $k \leftarrow k+1$
    }
    \SetKwInOut{Output}{output}
    \Output{$\hat{X} = Z^k{Z^k}^\T$ }
\end{algorithm}

\section{Convergence Analysis}
\label{sec:rst}
In this section we present our main result analyzing the gradient descent algorithm,
and give a sketch of the proof.  To begin, note that the 
symmetric decomposition of $X^\star$ is not unique, since $X^\star = (\Zstar U)(\Zstar U)^\T$ for any $\r \times \r$
orthonormal matrix $U$. Thus, the solution set is
\[
    \tS = \set{\Ztilde \in \R^{n\times \r} \; | \; \Ztilde = \Zstar
        U \;\; \text{for some $U$ with $UU^\T = U^\T U = I$}}.
\]
Note that $\| \Ztilde \|^2_F = \norm{X^\star}_*$ for any $\Ztilde \in
\tS$. We define the distance to the optimal solution in terms of this set.
\begin{defn} Define the distance between $Z$ and $\Zstar$ as
    \[d(Z, \Zstar) = \min_{UU^\T = U^\T U =I} \norm{Z - \Zstar U}_F =
        \min_{\Ztilde \in \tS} \big \| Z - \Ztilde \big \|_F. \] 
\label{def:dist}
\end{defn} 
Our main result for exact recovery is stated below, assuming that the
rank is correctly specified. Since  
the true rank is typically unknown in practice, one can start from a very low rank and
gradually increase it.  
\begin{theorem}
    \label{thm:rank1_exact} 
Let the condition number $\kappa = \sigma_1/\sigma_r$ denote the ratio of the largest to
the smallest nonzero eigenvalues of $\Xstar$. 
There exists a universal constant $c_0$ such that if $m \geq c_0
\kappa^2 \r^3 n \log n$, with high probability the initialization $Z^0$ satisfies
    \begin{equation}
        \label{eq:rank1_rst_thm_init}
            d(Z^0, \Zstar) \leq \sqrt{\frac{3}{16} \sigma_\r}.
    \end{equation}
Moreover, there exists a universal constant
$c_1$ such that when using constant step size $\mu / \norm{\Zstar}^2_F $ 
with $\mu \leq \dfrac{c_1}{\kappa n}$ and initial value
$Z^0$ obeying \eqref{eq:rank1_rst_thm_init}, the $k$th step of Algorithm~1 satisfies
\[ d(Z^k, \Zstar) \leq \sqrt{\frac{3}{16} \sigma_\r }\left(1 - \frac{\mu}{12\kappa\r}
        \right)^{k/2} \]
with high probability.
\end{theorem}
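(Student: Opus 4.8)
The plan is to prove the two assertions separately and then chain them. First I would show that the spectral initialization lands in the basin $d(Z^0,\Zstar)\le\sqrt{3\sigma_\r/16}$ with high probability; then, working on the event that $Z^k$ lies in this basin, I would show that one gradient step contracts the distance by the factor $(1-\tfrac{\mu}{12\kappa\r})^{1/2}$, so the claimed bound follows by induction on $k$ (the base case being the initialization bound). The randomness in the $A_i$ enters only through a handful of uniform concentration events, which I would establish once and condition on throughout.

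For the initialization, I would start from Lemma~\ref{lem:rank1_M_expectation}, which gives $\E(M)=2\Xstar$, and prove a matrix concentration bound $\|M-2\Xstar\|_{\mathrm{op}}\le\delta$. Since $b_iA_i=\trace(A_i\Xstar)A_i$ is a product of correlated Gaussians, the summands are only sub-exponential, so I would use a truncated matrix Bernstein inequality: truncate $b_iA_i$ at level $\sim\|\Xstar\|_F\sqrt{n\log n}$, bound the variance proxy $\|\tfrac1m\sum\E(b_iA_i)^2\|$, and control the truncation bias. This yields $\delta\lesssim\|\Xstar\|_F\sqrt{n\log n/m}$ up to lower-order terms, which is $\lesssim\sigma_\r/\sqrt{\r}$ once $m\gtrsim\kappa^2\r^3 n\log n$. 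Then Weyl's inequality gives $|\lambda_s-2\sigma_s|\le\delta$ for $s\le\r$, a Davis--Kahan $\sin\Theta$ bound controls the top-$\r$ eigenspace of $M$, and combining the eigenvalue and eigenvector errors --- equivalently, bounding $\|Z^0(Z^0)^\T-\Xstar\|_F$, since $Z^0(Z^0)^\T$ is the best rank-$\r$ approximation to $M/2$, and then invoking the standard inequality $d(Z,\Zstar)^2\le\frac{1}{2(\sqrt2-1)\sigma_\r}\|ZZ^\T-\Xstar\|_F^2$ --- gives $d(Z^0,\Zstar)\lesssim\delta\sqrt{\r/\sigma_\r}$, which falls below $\sqrt{3\sigma_\r/16}$ for suitable $c_0$.

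For the linear convergence, write $\eta=\mu/\|Z^0\|_F^2$ and note $\|Z^0\|_F^2=\sum_s|\lambda_s|/2$ is within a constant factor of $\|\Zstar\|_F^2=\trace(\Xstar)$ by the initialization bound, so the surrogate step size is well calibrated. Assuming inductively $d(Z^k,\Zstar)\le\sqrt{3\sigma_\r/16}\,(1-\tfrac{\mu}{12\kappa\r})^{k/2}$, let $\Ztilde=\Zstar U\in\tS$ be closest to $Z^k$, set $\Delta=Z^k-\Ztilde$, and expand
\[
d(Z^{k+1},\Zstar)^2\le\|Z^{k+1}-\Ztilde\|_F^2=\|\Delta\|_F^2-2\eta\ip{\nabla f(Z^k),\Delta}+\eta^2\|\nabla f(Z^k)\|_F^2 .
\]
The argument then rests on two estimates holding uniformly over $\{Z:d(Z,\Zstar)\le\sqrt{3\sigma_\r/16}\}$ with high probability: (i) a restricted strong convexity bound $\ip{\nabla f(Z),\Delta}\ge c\,\sigma_\r\|\Delta\|_F^2$, and (ii) a local smoothness bound $\|\nabla f(Z)\|_F^2\le C\,n\,\sigma_1\trace(\Xstar)\,\|\Delta\|_F^2$. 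Estimate (ii) is a Cauchy--Schwarz computation reducing to $\tfrac1m\sum\trace(A_i(ZZ^\T-\Xstar))^2\lesssim\sigma_1\|\Delta\|_F^2$ and $\tfrac1m\sum\|A_iZ\|_F^2=\langle\tfrac1m\sum A_i^2,ZZ^\T\rangle\lesssim n\trace(\Xstar)$, both from GOE concentration. For (i), writing $ZZ^\T-\Xstar=\Ztilde\Delta^\T+\Delta\Ztilde^\T+\Delta\Delta^\T$ and expanding, the dominant term is $\tfrac{1}{2m}\sum\trace(A_i(\Ztilde\Delta^\T+\Delta\Ztilde^\T))^2$, which a rank-$2\r$ restricted isometry property for the GOE measurements turns into $\approx\|\Ztilde\Delta^\T+\Delta\Ztilde^\T\|_F^2\ge2\sigma_\r\|\Delta\|_F^2$, the last inequality using the first-order optimality condition $\Ztilde^\T\Delta=\Delta^\T\Ztilde$ of the Procrustes problem defining $\Ztilde$; the remaining cross and fourth-order-in-$\Delta$ terms are absorbed using $\|\Delta\|_F\lesssim\sqrt{\sigma_\r}$. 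Plugging (i) and (ii) in gives $d(Z^{k+1},\Zstar)^2\le(1-2\eta c\sigma_\r+\eta^2 Cn\sigma_1\trace(\Xstar))\|\Delta\|_F^2$; the choice $\mu\le c_1/(\kappa n)$ makes the quadratic term at most half the linear one, and using $\|Z^0\|_F^2\le2\r\sigma_1$ yields $d(Z^{k+1},\Zstar)^2\le(1-\eta c\sigma_\r)\|\Delta\|_F^2\le(1-\tfrac{\mu}{12\kappa\r})d(Z^k,\Zstar)^2$, closing the induction.

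The main obstacle will be estimate (i) in uniform form: one needs a restricted isometry / concentration statement for the GOE ensemble acting on rank-$O(\r)$ matrices strong enough to (a) replace the empirical quadratic forms $\tfrac1m\sum\trace(A_iW)^2$ by $2\|W\|_F^2$ with multiplicative error small compared to $1$, and (b) bound the cross terms $\tfrac1m\sum\trace(A_i(\Ztilde\Delta^\T+\Delta\Ztilde^\T))\trace(A_i\Delta\Delta^\T)$ and $\tfrac1m\sum\trace(A_i\Delta\Delta^\T)^2$, all simultaneously over an $\epsilon$-net of the rank-$2\r$ matrices. The covering cost of this net, together with the sub-exponential tails of $\trace(A_iW)$, is what drives the sample requirement to $m\gtrsim\kappa^2\r^3 n\log n$; pinning down the $\kappa$ and $\r$ dependence and correctly propagating the surrogate $\|Z^0\|_F^2$ for the unknown $\|\Zstar\|_F^2$ through the step size are the places that demand the most care.
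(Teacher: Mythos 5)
Your proposal follows the same skeleton as the paper's proof: spectral initialization analyzed via concentration of $M$ around $2\Xstar$, Weyl's inequality, the fact that $Z^0{Z^0}^\T$ is the best rank-$\r$ approximation of $\tfrac12 M$, and the inequality $d(Z,\Zstar)^2 \lesssim \norm{ZZ^\T-\Xstar}_F^2/\sigma_\r$ (the paper uses exactly this, with constant $2\sqrt2-2$, following Tu et al.); then a local curvature plus local smoothness (regularity) condition on the basin $\set{d(Z,\Zstar)\le\sqrt{3\sigma_\r/16}}$, a one-step contraction identical to the paper's Theorem~\ref{thm:rank1_geo}, and induction. Where you genuinely differ is in the probabilistic machinery: you establish the needed uniform estimates through a rank-$2\r$ restricted isometry property for the GOE ensemble (net argument with sub-exponential tails) and a truncated matrix Bernstein bound applied directly to $b_iA_i$, whereas the paper instead conditions on the two events \textbf{A1}/\textbf{A2} (concentration of $\tfrac1m\sum_i(u^\T A_iu)A_i$ for bounded $u$, and of the Hessian blocks on the solution set), each proved by reducing to coordinate vectors via rotation invariance and then applying matrix Bernstein with an operator-norm truncation; the initialization is then obtained by applying \textbf{A1} column-by-column rather than by a separate Bernstein bound for $M$. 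Both routes work; the RIP/net route is essentially the Procrustes Flow analysis and is what yields the sharper $O(\r n)$ sample complexity for the regularity condition, while the paper's pointwise events avoid nets over low-rank matrices at the cost of the extra factors of $\r$ in \textbf{A1}/\textbf{A2}. Your step-size bookkeeping with $\norm{Z^0}_F^2$ is extra work the stated theorem does not require, since it assumes step size $\mu/\norm{\Zstar}_F^2$; it is harmless but changes constants.

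One place in your sketch needs more care than the phrase ``absorbed using $\norm{\Delta}_F\lesssim\sqrt{\sigma_\r}$'' suggests: the cross term $\tfrac{3}{2m}\sum_i\trace(A_iD)\trace(A_i\Delta\Delta^\T)$ with $D=\Ztilde\Delta^\T+\Delta\Ztilde^\T$. If you bound it by Cauchy--Schwarz and then use $\norm{D}_F\le 2\sqrt{\sigma_1}\norm{\Delta}_F$, you get a term of order $\sqrt{\sigma_1\sigma_\r}\norm{\Delta}_F^2$, which overwhelms the desired $c\,\sigma_\r\norm{\Delta}_F^2$ curvature for large $\kappa$. The fix is to absorb the cross term into the dominant quadratic term via Young/AM--GM---pay a constant fraction of $\tfrac1m\sum_i\trace(A_iD)^2$ plus a multiple of the quartic term, which the basin radius $\sqrt{3\sigma_\r/16}$ then controls by a small multiple of $\sigma_\r\norm{\Delta}_F^2$---exactly what the paper does with its $\left(p-\tfrac{3}{2\sqrt2}q\right)^2\ge0$ step; with that maneuver (and reasonably tight constants, since the slack is not large) your estimate (i) goes through with a universal constant and the rest of your argument closes as claimed.
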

We now outline the proof, giving full details in the supplementary material.
The proof has four main steps.
The first step is to give a regularity condition 
under which the algorithm converges linearly if
we start close enough to $\Zstar$.  This provides
a local regularity property that is similar to the \citet{Nes04}
criteria that the objective function is
strongly convex and has a Lipschitz continuous gradient.

\begin{defn}
    \label{def:RC}
Let $\Zbar = \argmin_{\Ztilde \in \tS} \big \| Z - \Ztilde \big \|_F $ denote the matrix
closest to $Z$ in the solution set.
    \label{def:rank1_rc}
    We say that $f$ satisfies the \textit{regularity condition} $RC(\epsilon, \alpha, \beta)$ if
        there exist constants $\alpha$, $\beta$ such that for any $Z$
        satisfying $d(Z, \Zstar) \leq \epsilon$, we have
\[  \ip{\nabla f(Z), Z - \Zbar } \geq \frac{1}{\alpha} \sigma_\r \norm{Z - \Zbar
    }^2_F + \frac{1}{\beta \norm{\Zstar}^2_F } \norm{\nabla f(Z)}^2_F. \]
\end{defn}

Using this regularity condition, we show that the iterative step of
the algorithm moves closer to the optimum, if the current
iterate is sufficiently close.

\begin{theorem}
    \label{thm:rank1_geo}
Consider the update $Z^{k+1} = Z^k - \dfrac{\mu}{\norm{\Zstar}^2_F} \nabla f(Z^k)$.
If $f$ satisfies $RC(\epsilon, \alpha, \beta)$, $d(Z^{k},\Zstar)
\leq \epsilon$, and $0 < \mu < \min(\alpha /2, 2/\beta)$, then 
\[
    d(Z^{k+1}, \Zstar) \leq \sqrt{ 1 - \frac{2\mu }{\alpha \kappa \r} } d(Z^k, \Zstar).
\]
\end{theorem}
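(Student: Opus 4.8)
The plan is a direct descent-lemma style computation, where the only nonroutine ingredient is handling the rotational invariance built into the distance $d(\cdot,\Zstar)$. Write $\Zbar^k = \argmin_{\Ztilde \in \tS}\norm{Z^k - \Ztilde}_F$ for the solution-set element closest to $Z^k$, so that $d(Z^k,\Zstar) = \norm{Z^k - \Zbar^k}_F$. Since $\Zbar^k \in \tS$, it is a valid (though in general suboptimal) comparison point for the next iterate, which gives
\[
d(Z^{k+1},\Zstar)^2 \;\le\; \norm{Z^{k+1} - \Zbar^k}_F^2 .
\]
This sidesteps having to re-solve the orthogonal Procrustes problem at step $k+1$, and it is exactly the rotation with respect to which the regularity condition is phrased.

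Next I would expand the update. Using $Z^{k+1} = Z^k - \frac{\mu}{\norm{\Zstar}^2_F}\nabla f(Z^k)$,
\[
\norm{Z^{k+1} - \Zbar^k}_F^2 = \norm{Z^k - \Zbar^k}_F^2 - \frac{2\mu}{\norm{\Zstar}^2_F}\ip{\nabla f(Z^k),\, Z^k - \Zbar^k} + \frac{\mu^2}{\norm{\Zstar}^4_F}\norm{\nabla f(Z^k)}_F^2 .
\]
Because $d(Z^k,\Zstar)\le\epsilon$, the regularity condition $RC(\epsilon,\alpha,\beta)$ applies with exactly this $\Zbar^k$, lower-bounding the inner product by $\frac{1}{\alpha}\sigma_\r\norm{Z^k-\Zbar^k}_F^2 + \frac{1}{\beta\norm{\Zstar}^2_F}\norm{\nabla f(Z^k)}_F^2$. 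Substituting, the coefficient of $\norm{\nabla f(Z^k)}_F^2$ becomes $\frac{\mu}{\norm{\Zstar}^4_F}\big(\mu - \frac{2}{\beta}\big)$, which is nonpositive precisely because $\mu < 2/\beta$; so I can discard that term and retain only
\[
d(Z^{k+1},\Zstar)^2 \;\le\; \Bigl(1 - \frac{2\mu\sigma_\r}{\alpha\norm{\Zstar}^2_F}\Bigr)\, d(Z^k,\Zstar)^2 .
\]

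To finish, I would convert the factor $\sigma_\r/\norm{\Zstar}^2_F$ into the stated $1/(\kappa\r)$. Since $\norm{\Zstar}^2_F = \trace(\Xstar) = \sum_{s=1}^\r \sigma_s \le \r\sigma_1 = \r\kappa\sigma_\r$, we get $\sigma_\r/\norm{\Zstar}^2_F \ge 1/(\kappa\r)$, hence $1 - \frac{2\mu\sigma_\r}{\alpha\norm{\Zstar}^2_F} \le 1 - \frac{2\mu}{\alpha\kappa\r}$; and the hypothesis $\mu < \alpha/2$ together with $\sigma_\r \le \norm{\Zstar}^2_F$ ensures the original factor is nonnegative, so taking square roots is legitimate and yields the claim. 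The only step demanding care is the first one --- fixing a single rotation $\Zbar^k$ shared between the two consecutive iterates --- because a naive bound on $d(Z^{k+1},\Zstar)$ through the optimal rotation for $Z^{k+1}$ would not mesh with the regularity condition, which is stated in terms of $Z$'s own nearest solution-set point; everything after that reduction is bookkeeping.
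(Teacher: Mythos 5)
Your proposal is correct and follows essentially the same argument as the paper: bound $d(Z^{k+1},\Zstar)$ by $\norm{Z^{k+1}-\Zbar^k}_F$ with $\Zbar^k$ the nearest solution-set point to $Z^k$, expand the update, apply $RC(\epsilon,\alpha,\beta)$ to the inner product, drop the gradient term using $\mu < 2/\beta$, and bound $\sigma_\r/\norm{\Zstar}^2_F \geq 1/(\kappa\r)$ via $\norm{\Zstar}^2_F = \sum_s \sigma_s \leq \r\sigma_1$. Your explicit check that the contraction factor is nonnegative (so the square root is valid) is a small addition the paper leaves implicit.
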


In the next step of the proof, we condition on two events that will be shown to
hold with high probability using concentration results. Let $\delta$
denote a small value to be specified later.
\begin{enumerate}
    \item[\textbf{A1}]\quad For any $u \in \R^{n}$ such that $\norm{u} \leq \sqrt{\sigma_1}$,
        \[\norm{\frac{1}{m}\sum\limits_{i=1}^m (u^\T A_i u)
        A_i - 2uu^\T} \leq \frac{\delta}{\r}.\]
    \item[\textbf{A2}]\quad For any $\Ztilde \in \tS$,
        \[ \norm{\dfrac{\partial^2 f (\Ztilde)  }{\partial \ztilde_s \partial \ztilde_k^\T }  - 
            \E\left[ \dfrac{\partial^2 f  (\Ztilde) }{\partial \ztilde_s
                \partial \ztilde_k^\T } \right] } \leq
    \frac{\delta}{\r}, \quad \mbox{for all $s, k$}
    \in [\r].\]
\end{enumerate}
Here the expectations are with respect to the random measurement
matrices.  Under these assumptions, we can show 
that the objective satisfies the regularity condition with high probability.
\vskip5pt

\begin{theorem}
    \label{thm:rank1_rc}
    Suppose that \textbf{A1} and \textbf{A2} hold.
    If $\delta \leq \frac{1}{16}\sigma_\r$, then $f$ satisfies the regularity
    condition $RC(\sqrt{\tfrac{3}{16}\sigma_\r}, 24, 513\kappa n )$
    with probability at least $1 - m C e^{-\rho n}$,
        where $C$, $\rho$ are universal constants. 
\end{theorem}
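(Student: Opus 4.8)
The plan is to deduce the regularity condition from two local estimates, each valid for every $Z$ with $d(Z,\Zstar)\le\epsilon:=\sqrt{\tfrac3{16}\sigma_\r}$; throughout write $\Delta:=Z-\Zbar$. The first is a restricted strong-convexity bound $\ip{\nabla f(Z),\Delta}\ge\sigma_\r\norm{\Delta}_F^2$, with an \emph{absolute} constant far larger than the $\tfrac1{24}$ that $RC$ needs; the second is a local smoothness bound $\norm{\nabla f(Z)}_F\le c_2\sqrt{n\sigma_1}\,\norm{\Zstar}_F\,\norm{\Delta}_F$ for an absolute constant $c_2$. Granting both, the $\norm{\Zstar}_F^2$ cancels and $\sigma_1/\kappa=\sigma_\r$, so $\tfrac1{513\kappa n\norm{\Zstar}_F^2}\norm{\nabla f(Z)}_F^2\le\tfrac{c_2^2}{513}\sigma_\r\norm{\Delta}_F^2\le\tfrac{23}{24}\sigma_\r\norm{\Delta}_F^2$, hence $\ip{\nabla f(Z),\Delta}\ge\sigma_\r\norm{\Delta}_F^2\ge\tfrac1{24}\sigma_\r\norm{\Delta}_F^2+\tfrac1{513\kappa n\norm{\Zstar}_F^2}\norm{\nabla f(Z)}_F^2$, which is exactly $RC(\epsilon,24,513\kappa n)$. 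The two estimates are established from \textbf{A1}, \textbf{A2} and the event $\mathcal E:=\{\max_{i\in[m]}\norm{A_i}_{\mathrm{op}}\le C\sqrt n\}$; since the $A_i$ are from the GOE, $\P(\norm{A_i}_{\mathrm{op}}>C\sqrt n)\le e^{-\rho n}$, and the union bound over $i\in[m]$ for $\mathcal E$ — which is also what the concentration arguments behind \textbf{A1} and \textbf{A2} rely on — produces the factor $m$ in the claimed probability.

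For the strong-convexity bound I would use that $\Zbar\in\tS$ is a global minimizer (so $f(\Zbar)=0$ and $\nabla f(\Zbar)=0$); Taylor-expanding $t\mapsto\nabla f(\Zbar+t\Delta)$ and integrating yields the exact identity
\[
\ip{\nabla f(Z),\Delta}=\frac2m\sum_{i=1}^m p_i^2+\frac3m\sum_{i=1}^m p_i\rho_i+\frac1m\sum_{i=1}^m\rho_i^2,\qquad p_i:=\ip{A_i,\Zbar\Delta^\T},\quad\rho_i:=\ip{A_i,\Delta\Delta^\T}.
\]
The leading term equals $2\sum_{s,t}\delta_s^\T M_{st}\delta_t$ with $M_{st}:=\tfrac1m\sum_i A_i\zbar_s\zbar_t^\T A_i$, which is half the $(s,t)$ Hessian block of $f$ at $\Zbar$; \textbf{A2} lets me replace each $M_{st}$ by its GOE mean $\zbar_t\zbar_s^\T+(\zbar_s^\T\zbar_t)I$ with operator-norm error at most $\tfrac\delta{2\r}$. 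The first-order optimality condition — that $S:=\Zbar^\T\Delta$ is symmetric — then collapses the mean part to $\sum_{s,t}\delta_s^\T\bigl[\zbar_t\zbar_s^\T+(\zbar_s^\T\zbar_t)I\bigr]\delta_t=\norm{S}_F^2+\ip{\Zbar^\T\Zbar,\Delta^\T\Delta}\ge\norm{S}_F^2+\sigma_\r\norm{\Delta}_F^2$ (using $\Zbar^\T\Zbar\succeq\sigma_\r I$), while the \textbf{A2} deviation contributes at most $\delta\norm{\Delta}_F^2$ to $\tfrac2m\sum_i p_i^2$; hence $\tfrac2m\sum_i p_i^2\ge2\norm{S}_F^2+(2\sigma_\r-\delta)\norm{\Delta}_F^2$. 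For the cross term I would write $\tfrac1m\sum_i p_i\rho_i=\ip{\Delta\Delta^\T,\tfrac1m\sum_i p_iA_i}$ and, polarizing \textbf{A1} over the $\r$ column pairs $(\zbar_s,\delta_s)$, obtain $\tfrac1m\sum_i p_iA_i=\Zbar\Delta^\T+\Delta\Zbar^\T+E'$ with $\norm{E'}_{\mathrm{op}}\le\tfrac\delta2$, so $\tfrac3m\sum_i p_i\rho_i=6\,\trace(S\Delta^\T\Delta)+3\ip{\Delta\Delta^\T,E'}$ and $\bigl|\tfrac3m\sum_i p_i\rho_i\bigr|\le6\norm{S}_F\norm{\Delta}_F^2+\tfrac{3\delta}2\norm{\Delta}_F^2$. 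Dropping $\tfrac1m\sum_i\rho_i^2\ge0$, cancelling $6\norm{S}_F\norm{\Delta}_F^2$ against $2\norm{S}_F^2$ via $6ab\le2a^2+\tfrac92b^2$, and inserting $\delta\le\tfrac1{16}\sigma_\r$ and $\norm{\Delta}_F^2\le\epsilon^2=\tfrac3{16}\sigma_\r$ leaves $\ip{\nabla f(Z),\Delta}\ge\bigl(2-\tfrac52\cdot\tfrac1{16}-\tfrac92\cdot\tfrac3{16}\bigr)\sigma_\r\norm{\Delta}_F^2=\sigma_\r\norm{\Delta}_F^2$.

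For the smoothness bound I would write $\nabla f(Z)=\bigl(\tfrac1m\sum_i r_iA_i\bigr)Z$ with $r_i=\ip{A_i,W}$ and $W:=ZZ^\T-\Xstar$ of rank $\le2\r$, so $\norm{\nabla f(Z)}_F\le\bigl(\max_i\norm{A_i}_{\mathrm{op}}\bigr)\bigl(\tfrac1m\sum_i r_i^2\bigr)^{1/2}\norm{Z}_F=2\sqrt{f(Z)}\,\bigl(\max_i\norm{A_i}_{\mathrm{op}}\bigr)\norm{Z}_F$. Expanding $W$ in its eigenbasis and applying \textbf{A1} to each rank-one piece gives the low-rank isometry bound $\tfrac1m\sum_i\ip{A_i,W}^2\le3\norm{W}_F^2$, so $f(Z)\le\tfrac34\norm{W}_F^2$; with $\norm{W}_F\le2\sqrt{\sigma_1}\norm{\Delta}_F+\norm{\Delta}_F^2\le3\sqrt{\sigma_1}\norm{\Delta}_F$, $\norm{Z}_F\le\norm{\Zstar}_F+\epsilon\le\tfrac32\norm{\Zstar}_F$ (since $\epsilon\le\tfrac12\sqrt{\sigma_1}\le\tfrac12\norm{\Zstar}_F$), and $\mathcal E$, this gives $\norm{\nabla f(Z)}_F\le c_2\sqrt{n\sigma_1}\,\norm{\Zstar}_F\,\norm{\Delta}_F$ with $c_2$ an absolute constant (the bookkeeping keeps $c_2$ below $\sqrt{513}$, which is exactly what fixes $\beta=513\kappa n$).

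The step I expect to be the real obstacle is the cross term $\tfrac3m\sum_i p_i\rho_i$. Bounding it by Cauchy--Schwarz as $3\bigl(\tfrac1m\sum_i p_i^2\bigr)^{1/2}\bigl(\tfrac1m\sum_i\rho_i^2\bigr)^{1/2}$ loses a factor $\sqrt\kappa$ — since $\tfrac1m\sum_i p_i^2$ can be of order $\sigma_1\norm{\Delta}_F^2$ — which would force $\epsilon\sim\sqrt{\sigma_\r/\kappa}$ and break the $\kappa$-free radius in the statement. The way around it is the one used above: keep the leading term in the sharp form $2\norm{S}_F^2+(2\sigma_\r-\delta)\norm{\Delta}_F^2$, rewrite the cross term \emph{exactly} as $6\,\trace(S\Delta^\T\Delta)$ plus an \textbf{A1} error so that it is controlled by $\norm{S}_F\norm{\Delta}_F^2$, and absorb it into the $\norm{S}_F^2$ by AM--GM. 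This is also where the first-order optimality of $\Zbar$ is indispensable: without the symmetry of $S=\Zbar^\T\Delta$, the quantity $\norm{S}_F^2$ would instead be $\trace(S^2)$, which can be as negative as $-\sigma_1\norm{\Delta}_F^2$, so the leading term would cease to dominate. A minor technical wrinkle is that the polarization of \textbf{A1} needs it at a radius slightly larger than $\sqrt{\sigma_1}$ (as $\norm{\zbar_s\pm\delta_s}$ may exceed $\sqrt{\sigma_1}$), handled by a harmless rescaling.
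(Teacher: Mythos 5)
Your proposal is correct in substance and shares the paper's overall architecture: derive a local curvature bound and a local smoothness bound from \textbf{A1}, \textbf{A2} and the GOE operator-norm event $\max_i \norm{A_i}\le 3\sqrt{n}$ (which is indeed the sole source of the $1-mCe^{-\rho n}$ probability), then combine the two. The execution, however, differs at both key steps. For curvature, the paper controls the cross term $\tfrac{3}{m}\sum_i \trace(H^\T A_i \Zbar)\trace(H^\T A_i H)$ by Cauchy--Schwarz followed by completing the square, which absorbs it into $\tfrac{1}{m}\sum_i\trace(H^\T A_i\Zbar)^2$ at the cost of a constant multiple of $\tfrac{1}{m}\sum_i\trace(H^\T A_i H)^2$; so the ``obstacle'' you describe---that Cauchy--Schwarz loses a factor $\sqrt{\kappa}$ and would shrink the radius---does not actually arise, and the paper reaches $\ip{\nabla f(Z),H}\ge \tfrac{27}{64}\sigma_\r\norm{H}_F^2+\norm{H^\T\Zbar}_F^2$ this way. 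Your alternative (polarizing \textbf{A1} to write $\tfrac1m\sum_i p_i A_i \approx \Zbar\Delta^\T+\Delta\Zbar^\T$ and absorbing $6\norm{S}_F\norm{\Delta}_F^2$ into $2\norm{S}_F^2$) is also valid and buys a larger curvature constant, which is precisely what lets you drop the $\norm{H^\T\Zbar}_F^2$ term. For smoothness, the paper expands $\ip{\nabla f(Z),W}$ into four terms, keeps $\norm{H^\T\Zbar}_F^2$ explicitly, and balances the two conditions with a weight $\gamma=1/3$; you instead use $\nabla f(Z)=\bigl(\tfrac1m\sum_i r_iA_i\bigr)Z$, an RIP-type consequence of \textbf{A1} for the rank-$2\r$ matrix $ZZ^\T-\Xstar$, and $\norm{\nabla f(Z)}_F\le 2\sqrt{f(Z)}\,\max_i\norm{A_i}\,\norm{Z}_F$, which is cleaner and avoids tracking $\norm{H^\T\Zbar}_F^2$ altogether. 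Two caveats on constants: applying \textbf{A1} at radius $\norm{\zbar_s\pm\delta_s}\le(1+\sqrt{3/16})\sqrt{\sigma_1}$ inflates the error by the square of that factor, so your $\norm{E'}$ is about $\delta$ rather than $\delta/2$ and the curvature constant is closer to $0.9\,\sigma_\r$ than $\sigma_\r$; and the combination only closes because $c_2^2/513$ must stay below the curvature constant minus $1/24$, which requires the tighter bookkeeping you allude to (e.g.\ $\norm{W}_F\le(2+\sqrt{3/16})\sqrt{\sigma_1}\norm{\Delta}_F$ and $\norm{Z}_F\le(1+\sqrt{3}/4)\norm{\Zstar}_F$). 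The margins are real but thin, so the argument works, just with less slack than your summary suggests.
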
 
Next we show that under \textbf{A1}, a good initialization can be found.

\begin{theorem} \label{thm:rank1_init_2}
    Suppose that \textbf{A1} holds. Let $\set{v_s, \lambda_s}_{s=1}^\r$ be the top
    $r$ eigenpairs of $M = \frac{1}{m}\sum\limits_{i=1}^m b_i A_i$ such
    that $\abs{\lambda_1} \geq \cdots
    \geq \abs{\lambda_\r}$. Let $Z^0 = [z_1, \ldots, z_\r]$ where $z_s = 
    \sqrt{\tfrac{\abs{\lambda_s}}{2}} \cdot v_s$, $s \in [\r]$. If $\delta \leq \tfrac{\sigma_\r}{4\sqrt{r}}$, then
    \[ d(Z^0, \Zstar) \leq \sqrt{3\sigma_\r/16}. \]
\end{theorem}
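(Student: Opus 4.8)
The plan is to argue deterministically, conditioned on \textbf{A1}; no further randomness is needed. First I would rewrite $M$ in a form that exposes its structure: since the measurements are noiseless, $b_i = \trace(A_i\Xstar) = \sum_{s=1}^{\r}(\zstar_s)^\T A_i\zstar_s$, so writing $W := \frac{1}{2} M$ (which has mean $\Xstar$ by Lemma~\ref{lem:rank1_M_expectation}),
\[
    W - \Xstar \;=\; \frac{1}{2}\sum_{s=1}^{\r}\Big(\frac{1}{m}\sum_{i=1}^m\big((\zstar_s)^\T A_i\zstar_s\big)A_i \;-\; 2\,\zstar_s(\zstar_s)^\T\Big).
\]
Since $\norm{\zstar_s}^2 = \sigma_s \le \sigma_1$, assumption \textbf{A1} bounds the $s$-th summand by $\delta/\r$ in operator norm, and summing over the $\r$ terms yields the key deterministic estimate $\norm{W - \Xstar} \le \delta/2$.

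Next I would convert this spectral-norm closeness of $W$ to $\Xstar$ into closeness of the factors. By Weyl's inequality the $s$-th eigenvalue $\mu_s$ of $W$ satisfies $|\mu_s - \sigma_s|\le\delta/2$ (taking $\sigma_s = 0$ for $s>\r$); since $\delta\le\sigma_\r/(4\sqrt{\r})\le\sigma_\r/4$, the top $\r$ eigenvalues of $W$ all exceed $7\sigma_\r/8>0$ while every remaining eigenvalue has magnitude at most $\delta/2<\sigma_\r/4$. Consequently, ordering the eigenpairs of $M$ by $|\lambda_s|$ coincides with ordering by value, each $|\lambda_s| = \lambda_s>0$, and $Z^0(Z^0)^\T = \sum_{s=1}^{\r}(\lambda_s/2)\,v_sv_s^\T = W_\r$, the best rank-$\r$ approximation of $W$. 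Because $\Xstar$ has rank at most $\r$, optimality of $W_\r$ gives $\norm{W-W_\r}\le\norm{W-\Xstar}\le\delta/2$, hence $\norm{W_\r-\Xstar}\le\delta$; and since $W_\r-\Xstar$ has rank at most $2\r$, this upgrades to $\norm{Z^0(Z^0)^\T-\Xstar}_F\le\sqrt{2\r}\,\delta$.

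Finally I would bound $d(Z^0,\Zstar)$ using the factorization perturbation inequality $d(Z^0,\Zstar)^2 \le \frac{1}{2(\sqrt{2}-1)\,\sigma_\r}\norm{Z^0(Z^0)^\T-\Xstar}_F^2$, whose proof reduces the left-hand side to an orthogonal Procrustes alignment (the optimal rotation is the orthogonal factor in the polar decomposition of $(\Zstar)^\T Z^0$) and then compares the expansions of the two Frobenius norms; cf.\ \citet{TuBocSol15}. Plugging in $\delta\le\sigma_\r/(4\sqrt{\r})$ gives
\[
    d(Z^0,\Zstar)^2 \;\le\; \frac{2\r\,\delta^2}{2(\sqrt{2}-1)\,\sigma_\r} \;\le\; \frac{\sigma_\r}{16(\sqrt{2}-1)} \;\le\; \frac{3\sigma_\r}{16},
\]
since $1/(16(\sqrt{2}-1))\approx 0.151 < 0.1875$, which is the claim. \emph{The main obstacle} is precisely this last perturbation inequality: the rest is bookkeeping with Weyl and the triangle inequality, but controlling the $\r$-dimensional rotational ambiguity built into the solution set $\tS$ needs a genuine argument. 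A Davis--Kahan bound on the top-$\r$ eigenspace of $W$ together with the eigenvalue perturbation $|\mu_s-\sigma_s|\le\delta/2$ would be an alternative route, but it is messier, since it must simultaneously track the eigenspace rotation and the $\sqrt{|\lambda_s|/2}$ rescaling in the definition of $Z^0$.
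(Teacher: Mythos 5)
Your proposal is correct and follows essentially the same route as the paper: apply \textbf{A1} columnwise to get $\norm{\tfrac12 M - \Xstar}\leq \delta/2$, use Weyl's inequality to identify $Z^0{Z^0}^\T$ with the best rank-$\r$ approximation of $\tfrac12 M$ and convert to a $\sqrt{2\r}\,\delta$ Frobenius bound, then invoke the factorization perturbation inequality $\norm{Z^0{Z^0}^\T-\Xstar}_F^2\geq 2(\sqrt2-1)\sigma_\r\, d(Z^0,\Zstar)^2$ of \citet{TuBocSol15}. The only part you leave as a sketch---the Procrustes-based proof of that last inequality via symmetry of $H^\T\Zbar^0$ and positive semidefiniteness of ${Z^0}^\T\Zbar^0$---is exactly the step the paper carries out in detail, also following \citet{TuBocSol15}, and your constants and final numerics match.
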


Finally, we show that conditioning on \textbf{A1} and \textbf{A2} is
valid since these events have high probability
as long as $m$ is sufficiently large.

\begin{theorem}
    \label{thm:rank1_sample_init}
    If the number of samples $m \geq \dfrac{42}{\min( \delta^2 / \r^2 \sigma^2_1,
        \; \delta / \r \sigma_1)}  n\log n$, then for
    any $u \in \R^n$ satisfying $\norm{u} \leq \sqrt{\sigma_1}$,
\[ \norm{ \frac{1}{m} \sum_{i=1}^m (u^\T A_i u) A_i - 2 u u^\T} \leq
             \frac{\delta}{\r} \] 
holds with probability at least $1 - m C e^{-\rho n} - \frac{2}{n^2}$, 
where $C$ and $\rho$ are universal constants. 
\end{theorem}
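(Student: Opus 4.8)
The statement is precisely the assertion that event \textbf{A1} holds with high probability, so the plan is a truncated matrix Bernstein argument, carried out pointwise in $u$. Fix $u\in\R^n$ with $\norm{u}\le\sqrt{\sigma_1}$ and write $Y_i=(u^\T A_i u)A_i$. Applying the rank-one case of Lemma~\ref{lem:rank1_M_expectation} with $\Xstar=uu^\T$ and $b_i=u^\T A_i u$ gives $\E[Y_i]=2uu^\T$, so the quantity to control is $\norm{\tfrac1m\sum_{i=1}^m(Y_i-2uu^\T)}$, the deviation of a sum of i.i.d.\ mean-zero random matrices from its mean. The obstacle to a direct application of Bernstein's inequality is that the $Y_i$ are unbounded: $u^\T A_i u$ is a quadratic form in Gaussians and $A_i$ itself has Gaussian entries.

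To get around this I would truncate. Let $\mathcal E=\{\norm{A_i}_{\mathrm{op}}\le C_0\sqrt n\ \text{for all } i\in[m]\}$; the standard concentration bound for the largest eigenvalue of a GOE matrix gives $\P(\norm{A_i}_{\mathrm{op}}>C_0\sqrt n)\le Ce^{-\rho n}$ for suitable universal $C_0,C,\rho$, hence $\P(\mathcal E^c)\le mCe^{-\rho n}$, which is exactly the first term in the claimed failure probability. Set $\tilde Y_i=Y_i\,\mathbf{1}\{\norm{A_i}_{\mathrm{op}}\le C_0\sqrt n\}$; then $Y_i=\tilde Y_i$ for every $i$ on $\mathcal E$, and the truncated summands are deterministically bounded, $\norm{\tilde Y_i}_{\mathrm{op}}\le\norm{u}^2\norm{A_i}_{\mathrm{op}}^2\le C_0^2\sigma_1 n$. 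Because the truncation discards only an event of exponentially small probability while $(u^\T A u)A$ has polynomially bounded moments, the resulting bias satisfies $\norm{\E\tilde Y_i-2uu^\T}\le\E[\,|u^\T A_iu|\,\norm{A_i}_{\mathrm{op}}\mathbf{1}\{\norm{A_i}_{\mathrm{op}}>C_0\sqrt n\}\,]=O(\sigma_1 n\,e^{-\rho n/2})$, which is negligible compared with $\delta/\r$ (and if $\delta/\r$ were itself exponentially small the claimed sample size would be exponentially large, so there is nothing to prove in that regime).

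I would then apply the matrix Bernstein inequality to the bounded i.i.d.\ mean-zero matrices $W_i=\tilde Y_i-\E\tilde Y_i$. This needs two inputs: the range bound $\norm{W_i}_{\mathrm{op}}\le 2C_0^2\sigma_1 n$ just obtained, and a bound on the matrix variance $\norm{\sum_i\E[W_i^2]}\le m\,\norm{\E[(u^\T A_i u)^2A_i^2]}$, using $\E[\tilde Y_i^2]\preceq\E[(u^\T A_iu)^2A_i^2]$ (the latter being PSD since $A_i^2\succeq0$). The crux of the whole argument is the estimate
\[
\bigl\|\E[(u^\T A u)^2A^2]\bigr\|_{\mathrm{op}}\ \le\ C'\,\sigma_1^2\,n,
\]
obtained by a direct computation of the relevant mixed fourth moments of the GOE entries; this is where the numerical constant $42$ in the statement ultimately originates. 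Granting it, matrix Bernstein applied to $W_1,\dots,W_m$ (absorbing the negligible bias) yields
\[
\P\!\left(\Bigl\|\tfrac1m{\textstyle\sum_{i=1}^m}(\tilde Y_i-2uu^\T)\Bigr\|\ge\tfrac{\delta}{\r}\right)\ \le\ 2n\exp\!\left(-\,c\,m\,\min\!\Bigl(\tfrac{\delta^2}{\r^2\sigma_1^2},\ \tfrac{\delta}{\r\sigma_1}\Bigr)\frac1n\right)
\]
for a universal $c>0$, the two branches of the minimum coming respectively from the variance and the range terms in Bernstein.

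Finally, using that $Y_i=\tilde Y_i$ for all $i$ on $\mathcal E$, the probability that $\norm{\tfrac1m\sum_{i=1}^m(Y_i-2uu^\T)}>\delta/\r$ is at most $\P(\mathcal E^c)$ plus the Bernstein bound above, i.e.\ at most $mCe^{-\rho n}+2n\exp(-c\,m\min(\delta^2/\r^2\sigma_1^2,\ \delta/\r\sigma_1)/n)$. Choosing the universal constants so that $42c\ge 3$, the hypothesis $m\ge\frac{42}{\min(\delta^2/\r^2\sigma_1^2,\ \delta/\r\sigma_1)}\,n\log n$ forces the exponent to be at least $3\log n$, so the second term is at most $2n\cdot n^{-3}=2/n^2$, which is exactly the claimed bound. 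The main difficulty is thus the fourth-moment computation behind $\norm{\E[(u^\T A u)^2A^2]}\lesssim\sigma_1^2 n$ with an explicit constant; the GOE operator-norm tail, the truncation bias estimate, and the bookkeeping with Bernstein's inequality are all routine once that estimate is in hand.
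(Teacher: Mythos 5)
Your proposal is correct and follows essentially the same route as the paper: control the summands' operator norms via the GOE largest-eigenvalue tail (giving the $mCe^{-\rho n}$ term), bound the matrix variance at scale $\sigma_1^2 n$, apply matrix Bernstein, and union bound, with the stated sample size turning the Bernstein exponent into $2/n^2$. The only differences are cosmetic: the paper first reduces to $u=e_1$ by orthogonal invariance of the GOE, which makes the variance computation an explicit diagonal fourth-moment calculation ($\E(a_{11}^2A^2)=\diag(2n+10,2n+2,\ldots,2n+2)$), whereas you keep a general $u$ and assert the equivalent bound, and your explicit truncation-plus-bias argument is a slightly more careful treatment of the unboundedness than the paper's conditioning on the high-probability norm bound.
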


\begin{theorem}
\label{thm:rank1_sample_hess}
For any $x \in \R^n$, if $m \geq \dfrac{128}{\min(\delta^2 / 4\r^2\sigma^2_1,
     \; \delta / 2\r\sigma_1)} n\log n$, then for any $\Ztilde \in \tS$
    \[ \norm{\frac{\partial^2 f  (\Ztilde)}{\partial \ztilde_s \partial \ztilde_k^\T }  - 
            \E\left[ \frac{\partial^2 f (\Ztilde)}{\partial \ztilde_s \partial \ztilde_k^\T } \right] } \leq
        \frac{\delta}{\r}, \quad \text{for all} \; s, k \in [\r],
    \]
with probability at least $1 - 6me^{-n} - \frac{4}{n^2}$.
\end{theorem}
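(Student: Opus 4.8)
The plan is to reduce the statement to a concentration inequality for an i.i.d.\ sum of rank-one products of jointly Gaussian vectors, and then to prove that inequality by a truncation argument combined with a matrix Bernstein bound, made uniform over $\tS$ by a low-dimensional covering argument. Differentiating $f$ twice gives, for the $(s,k)$ block,
\[
 \frac{\partial^2 f(Z)}{\partial z_s\,\partial z_k^\T}
 = \frac{2}{m}\sum_{i=1}^m (A_i z_s)(A_i z_k)^\T
 + \frac{\mathbf{1}\{s=k\}}{m}\sum_{i=1}^m\big(\trace(Z^\T A_i Z)-b_i\big)\,A_i .
\]
When $\Ztilde\in\tS$ we have $\Ztilde\Ztilde^\T=\Xstar$, so $\trace(\Ztilde^\T A_i\Ztilde)=\trace(A_i\Xstar)=b_i$ and the second term vanishes, leaving $\tfrac2m\sum_i(A_i\ztilde_s)(A_i\ztilde_k)^\T$. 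Since each $A_i$ is GOE, $\E[(A_i u)(A_i v)^\T]=\langle u,v\rangle I+v u^\T$, so this block has mean $2(\langle\ztilde_s,\ztilde_k\rangle I+\ztilde_k\ztilde_s^\T)$ and it remains to bound $\bigl\|\tfrac1m\sum_i(A_i u)(A_i v)^\T-(\langle u,v\rangle I+vu^\T)\bigr\|$ for $u=\ztilde_s$, $v=\ztilde_k$. Writing $\Ztilde=\Zstar U$ with $U$ orthogonal, each column is $\ztilde_s=\Zstar w_s$ with $\|w_s\|=1$, so it suffices to obtain this bound uniformly over $u,v$ ranging in the $\r$-dimensional set $\{\Zstar w:\|w\|\le1\}$, whose elements have norm at most $\|\Zstar\|=\sqrt{\sigma_1}$.

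For a fixed such pair $u,v$, set $Y_i=(A_iu)(A_iv)^\T$; I would treat this non-symmetric matrix through its Hermitian dilation (or, equivalently, polarize $uv^\T$ into a combination of $ww^\T$ with $w\in\{u,v,u+v\}$ so as to work only with PSD rank-one terms). The obstruction to applying a bounded-matrix inequality directly is that $\|Y_i\|=\|A_iu\|\,\|A_iv\|$ is of order $n\sigma_1$, so $Y_i$ is only sub-exponential. I would therefore condition on $\mathcal{E}=\{\|A_i\|\le c_0\sqrt n\ \text{for all }i\in[m]\}$, which by a standard Gaussian operator-norm tail bound fails with probability at most a constant times $m e^{-\rho n}$ (this accounts for the $6me^{-n}$ term, after also covering the vectors $u,v,u+v$). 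On $\mathcal{E}$ one has the deterministic bound $\|Y_i\|\le c_0^2 n\sigma_1=:R$, while a direct Gaussian moment computation shows the matrix variance parameter of $\tfrac1m\sum_i(Y_i-\E Y_i)$ is only of order $n\sigma_1^2/m$. Matrix Bernstein then gives, for any $t>0$,
\[
 \Pr\Bigl(\bigl\|\tfrac1m\textstyle\sum_i Y_i-\E Y_1\bigr\|>t\Bigr)\ \le\ 2n\exp\!\Bigl(-c_1\,\frac{m t^2}{n\sigma_1^2+Rt}\Bigr),
\]
the bias from conditioning on $\mathcal E$ (i.e.\ replacing $Y_i$ by $Y_i\mathbf{1}\{\|A_i\|\le c_0\sqrt n\}$) being $O(e^{-\rho n/2})\ll t$ and harmlessly absorbed. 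Taking $t=\delta/\r$ (so that $Rt\lesssim n\sigma_1^2$ in the relevant regime), the hypothesis $m\gtrsim \r^2\sigma_1^2\delta^{-2}\,n\log n$—which is exactly the stated lower bound on $m$—makes the exponent exceed a fixed multiple of $\log n$, so a single pair $(u,v)$ fails with probability $O(n^{-4})$.

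To pass from a fixed $(u,v)$ to all of $\tS$, it suffices by the reduction above to control $(u,v)\mapsto\tfrac1m\sum_i(A_iu)(A_iv)^\T$ uniformly over $u,v$ in a ball of a fixed $\r$-dimensional subspace. I would cover this set at scale $\eta\asymp(\delta/\r)/(c_0^2 n\sqrt{\sigma_1})$, a net of cardinality $e^{O(\r\log n)}$; on $\mathcal E$ the map is Lipschitz with constant $\lesssim(\tfrac1m\sum_i\|A_i\|^2)\sqrt{\sigma_1}\le c_0^2 n\sqrt{\sigma_1}$, so the net estimate extends to the whole ball at an additional cost of at most $\eta\cdot c_0^2 n\sqrt{\sigma_1}\le\delta/\r$. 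Because the Bernstein exponent exceeds a fixed multiple of $\log n$ once $m$ meets the stated bound, it dominates $\log$(net cardinality)$=O(\r\log n)$ together with the factor $2n$, so the union bound over the net leaves a failure probability at most $4/n^2$. Combining this with the $O(me^{-\rho n})$ chance that $\mathcal E$ fails, restoring the overall factor $2$ from the Hessian computation, and taking a further union over the $\r^2$ index pairs $(s,k)$ (absorbed into the constants) yields the claim.

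The main obstacle is the concentration step: because $\|A_iu\|\asymp\sqrt{n\sigma_1}$, the summands $Y_i$ have operator norm of order $n$ rather than $O(1)$, so no bounded-difference or Hoeffding-type estimate is available; one must exploit that the matrix \emph{variance} is only of order $n\sigma_1^2/m$ and invoke the variance-sensitive matrix Bernstein inequality on the truncated summands, while separately verifying that truncation perturbs the relevant means by only $e^{-\Omega(n)}$. (Equivalently, if one handles the operator norm by an $\eta$-net on $S^{n-1}$ rather than by Bernstein, the $\Theta(n)$-Lipschitz dependence of the quadratic functional forces a polynomially fine net, whose $\Theta(n\log n)$ log-cardinality is precisely what produces the $n\log n$—as opposed to $n$—sample complexity.) Making all of this hold uniformly over the continuum $\tS$, rather than at a single point, is the secondary technical point, handled by the low-dimensional covering above.
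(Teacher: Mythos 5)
Your core argument is the same as the paper's: reduce the $(s,k)$ Hessian block at $\Ztilde\in\tS$ (where the residual term vanishes) to concentration of $\frac{1}{m}\sum_i (A_i u)(A_i v)^\T$ about $\ip{u,v}I + vu^\T$ for vectors of norm at most $\sqrt{\sigma_1}$, handle the unbounded summands by conditioning on an operator-norm event of probability $1-O(me^{-n})$, compute a matrix variance of order $n\sigma_1^2$ per sample, and invoke matrix Bernstein with $t=\delta/(2\r\sigma_1)$ so that $m\gtrsim n\log n/\min(t^2,t)$ gives failure probability $O(n^{-2})$. The paper's route differs in one structural point you replace by a net: instead of covering the $\r$-dimensional ball $\{\Zstar w:\norm{w}\le 1\}$, it uses orthogonal invariance of the GOE to reduce any fixed unit vector to $e_1$ (and, after splitting $y=\beta u+\beta_\perp u_\perp$, the cross term to the pair $e_1,e_2$), so the summands become explicit rank-one products of columns of $A_i$, bounded via $\chi^2$ tails, and \emph{no union bound over a continuum is ever taken} -- the theorem is proved for each fixed $\Ztilde$ (and fixed $s,k$), which is how the bound is consumed downstream via the triangle inequality over $s$. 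Your polarization/Hermitian-dilation treatment of the non-symmetric block is a fine substitute for the paper's parallel/orthogonal decomposition, and your explicit control of the truncation bias is if anything more careful than the paper, which simply intersects with the bounded-norm event.

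The one step that does not close as written is your uniformity-over-$\tS$ bookkeeping. With the stated sample size the Bernstein exponent is only a \emph{fixed} constant multiple of $\log n$ (this is exactly what yields the per-point failure probability $\le 2/n^2$; it is not $O(n^{-4})$ in general, since $\delta_0=\delta/(2\r\sigma_1)$ already absorbs the $\r$ and $\sigma_1$ factors), whereas your net over pairs $(u,v)$ in a $2\r$-dimensional ball at scale $\eta\asymp \delta/(\r n\sqrt{\sigma_1})$ has log-cardinality of order $\r\log(n\r\sigma_1/\delta)$. A constant multiple of $\log n$ does not dominate $O(\r\log n)$ once $\r$ exceeds a constant, so the union bound cannot leave a residual failure probability of $4/n^2$ for general $\r$ under the theorem's hypotheses; making the uniform statement rigorous by this route would require $m$ larger by roughly a factor of $\r$ (or a correspondingly weaker probability). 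Since the paper's own proof establishes only the fixed-$\Ztilde$ version, you should either drop the covering step and use the rotation-invariance reduction, or state explicitly that you are proving a stronger uniform claim and pay for it in the sample complexity.
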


Note that since we need $\delta \leq \min\left( \frac{1}{16},
\frac{1}{4\sqrt{\r}} \right) \sigma_\r $, we have $\frac{\delta}{\r  \sigma_1} \leq 1$,
and the number of measurements required by our algorithm scales as $O(\r^3
\kappa^2  n \log n)$, while only $O(\r^2 \kappa^2 n \log n)$ samples are required by the
regularity condition. We conjecture this bound could be
further improved to be $O(\r n \log n)$;
this is supported by the experimental results presented below.

Recently, \citet{TuBocSol15} establish a tighter $O(r^2 \kappa^2 n)$ bound
overall. Specifically,
when only one single \texttt{SVP} step is used in preprocessing, the
initialization of \texttt{PF} is
also the spectral decomposition of $\frac{1}{2}M$. The authors show that $O(r^2
\kappa^2 n)$ measurements are sufficient for the initial solution to satisfy
$d(Z^0, \Zstar) \leq O(\sqrt{\sigma_r})$ with high probability, and
demonstrate an $O(rn)$ sample complexity for the regularity condition.

\section{Experiments}
\label{sec:expr}
In this section we report the results of experiments on synthetic
datasets.  We compare our gradient descent algorithm with nuclear norm
relaxation, \texttt{SVP} and \texttt{AltMinSense} for which we drop
the positive semidefiniteness constraint, as justified by the
observation in Section \ref{sec:conn_sdp}.  We use ADMM for the
nuclear norm minimization, based on the algorithm for the mixture
approach in \citet{TomHayKas10}; see Appendix \ref{sec:admm}.  For
simplicity, we assume that \texttt{AltMinSense}, \texttt{SVP} and the
gradient scheme know the true rank. Krylov subspace techniques such as
the Lanczos method could be used compute the partial eigendecomposition;
we use the randomized algorithm
of \citet{HalMarTro11} to compute the low rank SVD.
All methods are implemented in
MATLAB and the experiments were run on a MacBook Pro with a 2.5GHz
Intel Core i7 processor and 16 GB memory.

\subsection{Computational Complexity}
It is instructive to compare the per-iteration cost of the different
approaches; see Table \ref{table:periter_complexity}. Suppose that the density
(fraction of nonzero entries) of each $A_i$ is $\rho$. 
For \texttt{AltMinSense}, the cost of solving the least
squares problem is $O(mn^2\r^2 + n^3\r^3 + mn^2\r \rho)$.
The other three methods have $O(mn^2 \rho)$ cost to compute
the affine transformation.  For the nuclear norm approach, the $O(n^3)$ cost is
from the SVD and the $O(m^2)$ cost is due to the update of the dual variables.
The gradient scheme requires $2 n^2 \r$ operations to compute $Z^k {Z^k}^\T$ and 
to multiply $Z^k$ by $n \times n$ matrix to obtain the gradient.
\texttt{SVP} needs $O(n^2 \r)$ operations to compute the top $\r$ singular
vectors. However, in practice this partial SVD is more expensive than
the $2n^2 \r$ cost required for the matrix multiplies in the gradient scheme.

\begin{table}[htb]
    \centering
    \begin{tabular}{c l c}
        \toprule
        Method & Complexity\\ 
        \midrule
        nuclear norm minimization via ADMM &  $O(mn^2 \rho + m^2 + n^3)$\\
        gradient descent & $O(mn^2\rho) + 2n^2\r$ \\
        \texttt{SVP} & $O(mn^2\rho + n^2\r)$\\
        \texttt{AltMinSense} & $O(mn^2\r^2 + n^3\r^3 + mn^2\r \rho)$\\
        \bottomrule
    \end{tabular}
    \caption{Per-iteration computational complexities of different methods.}
    \label{table:periter_complexity}
\end{table}

Clearly, \texttt{AltMinSense} is the least efficient. For the other approaches, in the
dense case ($\rho$ large), the affine transformation dominates the computation.
Our method removes the overhead caused by the SVD. In the sparse case ($\rho$
small), the other parts dominate and our method enjoys a low cost.

\subsection{Runtime Comparison}
We conduct experiments for both dense and sparse measurement matrices.
\texttt{AltMinSense} is indeed slow, so we do not include it here.

In the first scenario, we randomly generate a $400 \times 400$ rank-$2$ matrix
$X^\star = x x^\T + y y^\T$ where $x, y \sim \N(0, I)$.  We also
generate $m=6n$ matrices $A_1,
\ldots, A_m$ from the GOE, and then take $b = \A(X^\star)$. We report the relative
error measured in the Frobenius norm defined as $ \|\wh{X} -  X^\star \|_F/ \|
X^\star \|_F$.  For the nuclear norm approach, we set the regularization
parameter to $\lambda = 10^{-5}$. We test three values $\eta = 10, 100, 200$ for the penalty
parameter and select $\eta = 100$ as it leads to the fastest convergence.  
Similarly, for \texttt{SVP} we evaluate the three values $5\times 10^{-5}, 10^{-4}, 2\times 10^{-4}$ for the
step size, and select $10^{-4}$ as the largest for which
\texttt{SVP} converges. For our approach, we test the three values
$0.6, 0.8, 1.0$ for $\mu$ and select $0.8$ in the same way.

In the second scenario, we use a more general and practical setting.  We
randomly generate a rank-$2$ matrix $X^\star \in \R^{600 \times 600}$ as before. We generate
$m=7n$ sparse $A_i$s whose entries are i.i.d.~Bernoulli:
\[  
    (A_i)_{jk} = 
    \begin{cases}
        1 & \text{with probability} \; \rho, \\
        0 & \text{with probability} \; 1-\rho,
    \end{cases}
\]
where we use $\rho=0.001$. For all the methods we use the same strategies as before
to select parameters.
For the nuclear norm approach, we try three values $\eta = 10, 100, 200$ and
select $\eta = 100$.
For $\texttt{SVP}$, we test the three values $5 \times 10^{-3},  2 \times
10^{-3},  10^{-3}$ for the step size and select $10^{-3}$.
For the gradient algorithm, we check the three values $0.8, 1, 1.5$ for $\mu$ and
choose $1$.

The results are shown in Figures \ref{fig:nips_rank2_runtime} and
\ref{fig:nips_rank2_runtime_sparse}. In the dense case, our method is faster
than the nuclear norm approach and slightly outperforms \texttt{SVP}. In the sparse
case, it is significantly faster than the other approaches.

\begin{figure}[tb]
    \hspace{-0.9cm}
    \subfloat[] {
        \includegraphics[width=0.34\textwidth]{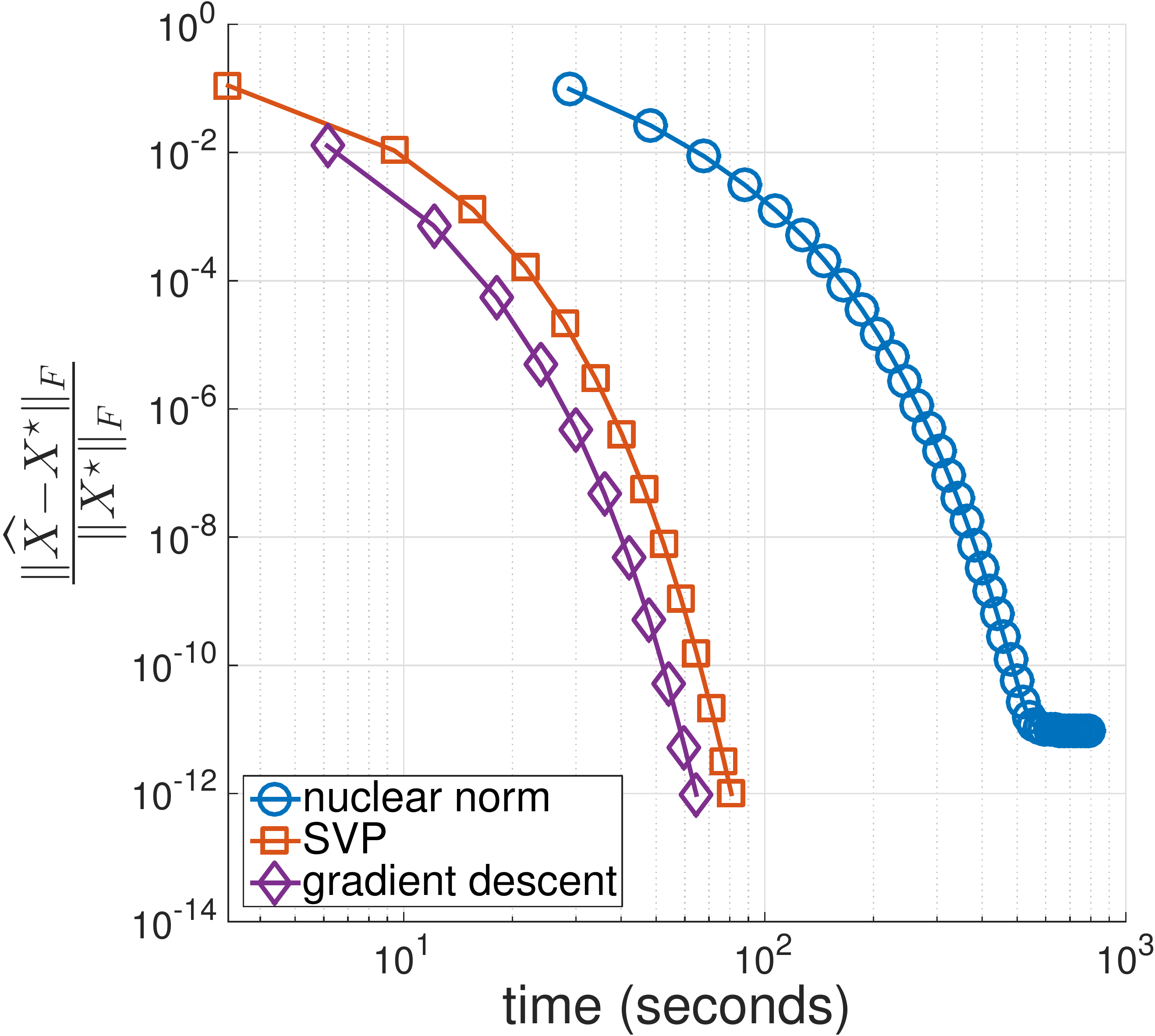}
        \label{fig:nips_rank2_runtime}
    }
    \subfloat[]{
        \includegraphics[width=0.33\textwidth]{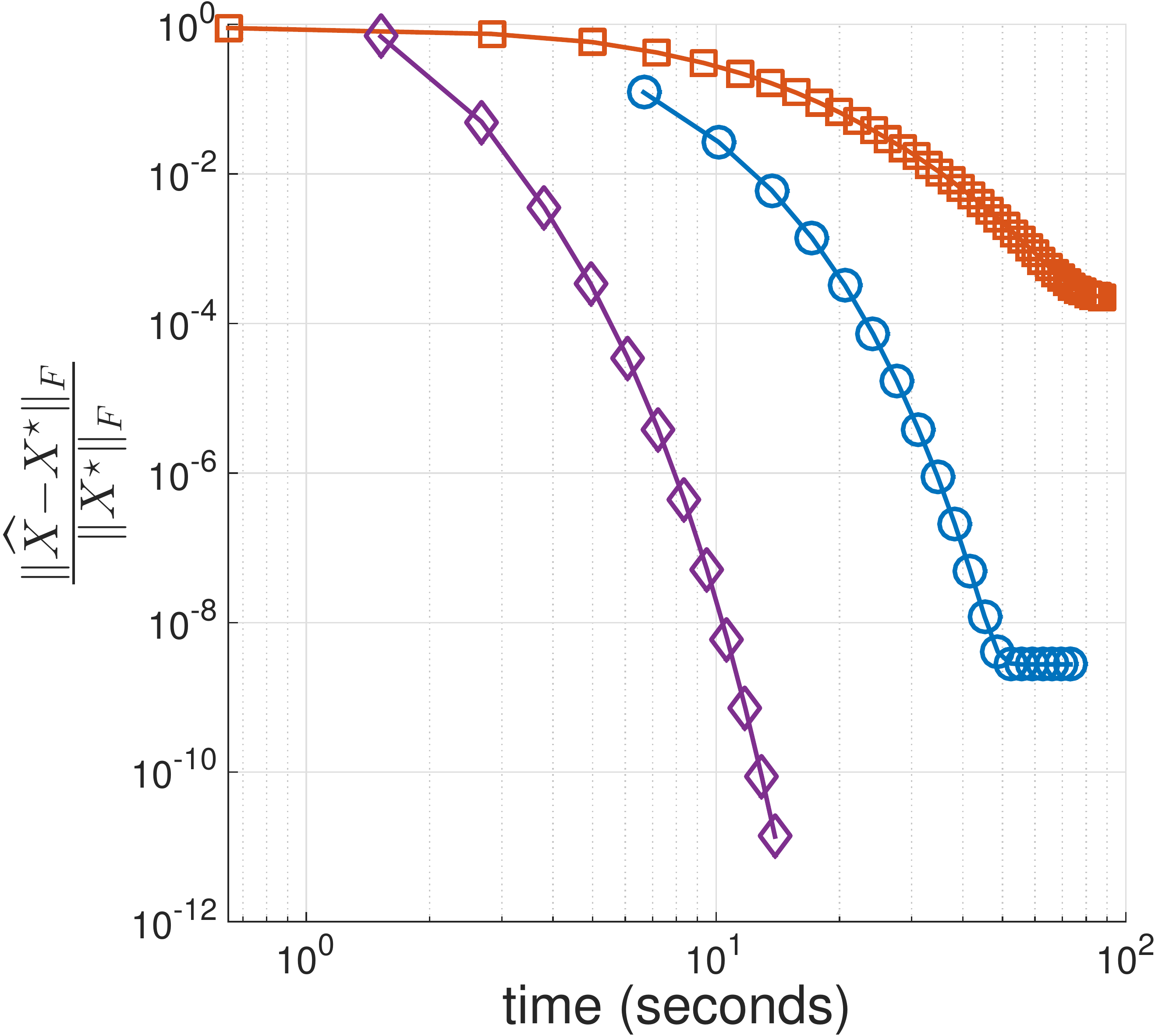}
        \label{fig:nips_rank2_runtime_sparse}
    }
    \subfloat[]{
        \includegraphics[height=0.3\textwidth]{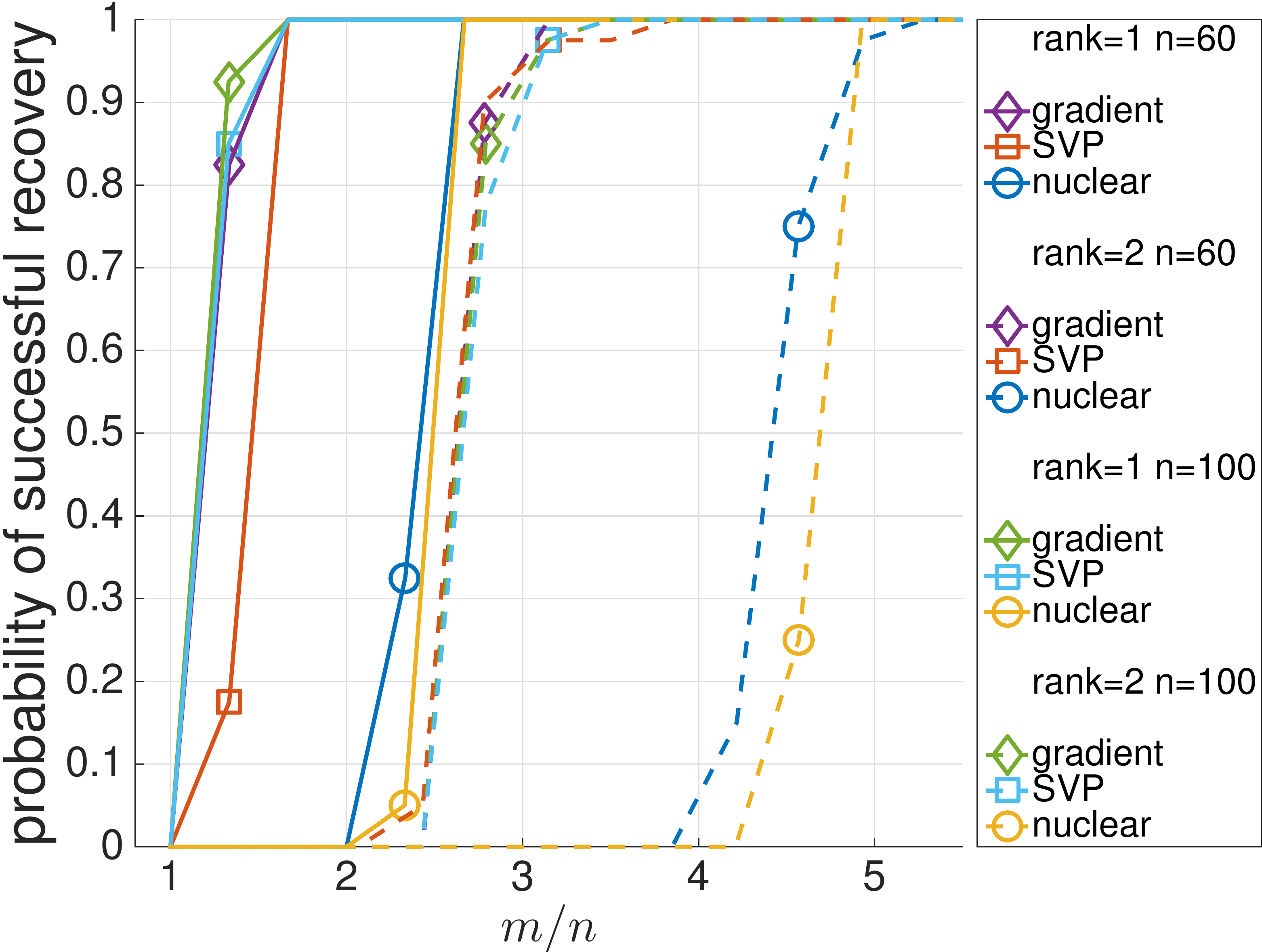}
        \label{fig:rank2_sample_phasetransition}
    }
    \caption{(a) Runtime comparison where $\Xstar \in \R^{400 \times 400}$ is
        rank-$2$ and $A_i$s are dense. (b)
        Runtime comparison where $\Xstar \in \R^{600 \times 600}$ is rank-$2$
        and $A_i$s are sparse. (c) Sample complexity comparison. }
\end{figure}

\subsection{Sample Complexity}
We also evaluate the number of measurements required by each method to
exactly recover $X^\star$, which we refer to as the \emph{sample complexity}.
We randomly generate the true matrix $\Xstar \in \R^{n \times n}$ and compute the
solutions of each method given $m$ measurements, where the $A_i$s are
randomly drawn from the GOE.  A solution with relative error below $10^{-5}$ is
considered to be successful. We run 40 trials and compute the
empirical probability of successful recovery. 

We consider cases where $n=60$ or $100$ and $\Xstar$ is of rank one or two.
The results are shown in Figure \ref{fig:rank2_sample_phasetransition}. 
For \texttt{SVP} and our approach, the phase transitions happen around $m=1.5n$
when $\Xstar$ is rank-$1$ and $m=2.5n$ when $\Xstar$ is rank-$2$.
This scaling is close to the number of degrees of freedom in each case;
this confirms that the sample complexity scales linearly with the rank $\r$.
The phase transition for the nuclear norm approach occurs later. 
The results suggest that the sample complexity of our method should also scale as $O(r n
\log n)$ as for \texttt{SVP} and the nuclear norm approach \cite{JaiMekDhi10,
RecFazPar10}.

\section{Conclusion}
\label{sec:conclude}

We connect a special case of affine rank minimization to a class of
semidefinite programs with random constraints. Building on a recently
proposed first-order algorithm for phase retrieval \cite{CanLiSol14},
we develop a gradient descent procedure for rank minimization and
establish convergence to the optimal solution with $O(\r^3 n \log n)$
measurements. We conjecture that $O(\r n \log n)$ measurements are
sufficient for the method to converge, and that the conditions on the
sampling matrices $A_i$ can be significantly weakened. More broadly,
the technique used in this paper---factoring the semidefinite matrix
variable, recasting the convex optimization as a nonconvex
optimization, and applying first-order algorithms---first proposed by
Burer and Monteiro \cite{BurMon03}, may be effective for a much wider
class of SDPs, and deserves further study.

\section*{Acknowledgements}

Research supported in part by NSF grant IIS-1116730 and ONR grant
N00014-12-1-0762. The authors thank Afonso Bandeira,
Ryota Tomioka and the authors of \citet{TuBocSol15} for helpful comments on this work.

\bibliography{main}
\bibliographystyle{plainnat}
\newpage
\appendix
\section{Proof of Lemma \ref{lem:rank1_M_expectation}}
    Let $A = (a_{ij})$ be a random matrix that is GOE distributed;
    thus $a_{ij} \sim \N(0,1)$ for $i\neq j$ and $a_{ii} \sim \N(0,2)$. We have
    $\E(M) = \sum_{s=1}^\r \E( ({\zstar_s}^\T A \zstar_s) A )$. Hence, it
    suffices to show that $\E( (x^\T A x) A ) = 2 xx^\T$ for any $x \in \R^n$.
    The $(i,j)$ entry of $ (x^\T A x) A $ has expected value
        \begin{align*}
            \E( (x^\T A x) a_{ij}) & =  \E\left( \sum_k \sum_l x_k x_l a_{kl} a_{ij} \right ) \\
            & =  \sum_k \sum_l x_k x_l \E( a_{kl} a_{ij} ) \\
            & =  \sum_k \sum_l x_k x_l \cdot \begin{cases}
                0 & \text{if} \; (k, l) \neq (i,j) \wedge (k, l) \neq (j, i)\\
                \E(a_{kl}^2) & \text{otherwise}
            \end{cases} \\
            & =  \begin{cases} 
                2 x_i x_j \E(a^2_{ij}) & \text{if} \; i \neq j \\
                x^2_i \E(a^2_{ii}) & \text{otherwise}
            \end{cases} \\
            & =  \begin{cases} 
                2 x_i x_j  & \text{if} \; i \neq j,  \\
                2 x^2_i   & \text{otherwise},
            \end{cases} \\
        \end{align*}
    where we use that the variance of $a_{ii}$ is 2 and the variance of
    $a_{ij}$ is 1 for any $ i \neq j$. In matrix form, this is 
    $\E((x^\T A x) A) = 2 xx^\T$.

\section{Ingredients}
\label{sec:ingredients}
We first present some technical lemmas that will be needed later. Recall
Definition \ref{def:RC} that for any $Z$, $\Zbar = \argmin_{\Ztilde \in \tS}
\big \| Z - \Ztilde \big \|_F $.  Let $H = Z - \Zbar$. 
The $s$th column of $Z$, $\Zbar$, $\Zstar$, $H$ are denoted by $z_s$, $\zbar_s$, $\zstar_s$, $h_s$ respectively.
We shall use the following formulas for the gradient and second order partial
derivatives:
\begin{align*}
    \nabla f(Z) &= \frac{1}{m} \sum_{i=1}^m \left( \trace(H^\T A_i H) + 2\trace(\Zbar^\T A_i H)  \right) ( A_i H + A_i \Zbar )  ,\\
    \frac{ \partial^2  f(Z)  } {\partial z_s \partial z_s^\T } &=
    \frac{1}{m}\sum_{i=1}^m \left( 2 A_i z_s z_s^\T A_i^\T + \left(\trace(Z^\T
            A_i Z) - b_i\right) A_i\right), \quad \forall s \in [\r], \\
    \frac{ \partial^2  f(Z)  } {\partial z_s \partial z_k^\T } &=
\frac{1}{m}\sum_{i=1}^m 2 A_i z_s z_k^\T A_i^\T, \quad \forall s, k \in [\r] \; 
\text{such that } \; s \neq k.
\end{align*}

The next ingredient we need is the expectation of the second order partial
derivatives with respect to the random measurement matrices.
\begin{lemma} Let $A = (a_{ij})$ be a GOE distributed random matrix. For any two
    fixed vectors $x$ and $y$, we have
$\E \left[ A x y A \right] = x^\T y I + yx^\T$.
\label{lem:rank2_hess_cross_expectation}
\end{lemma}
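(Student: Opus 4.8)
The plan is to compute the expectation entrywise, following exactly the template of the proof of Lemma~\ref{lem:rank1_M_expectation}. Here $xy$ should be read as the rank-one matrix $xy^\T$, and since $A$ is symmetric the quantity of interest is $A(xy^\T)A$. Its $(i,j)$ entry is $\sum_{k,l} a_{ik}\, x_k y_l\, a_{lj}$, so by linearity
\[
    \E\big[(A x y^\T A)_{ij}\big] \;=\; \sum_{k}\sum_{l} x_k y_l\, \E[a_{ik} a_{lj}],
\]
and the whole problem reduces to the second-moment structure of the GOE. The key fact I would use is that $a_{ik}$ and $a_{lj}$ are independent---so $\E[a_{ik}a_{lj}]=0$---unless $\{i,k\}=\{l,j\}$ as unordered pairs, in which case $\E[a_{ik}a_{lj}]=\E[a_{ik}^2]$, equal to $1$ when $i\neq k$ and to $2$ when $i=k$ (the diagonal entries having variance $2$).

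Given this, the remaining work is a short case analysis on whether $i=j$. When $i=j$, the matching condition $\{i,k\}=\{l,i\}$ forces $l=k$, so the double sum collapses to $\sum_k x_k y_k\,\E[a_{ik}^2] = \sum_{k\neq i} x_k y_k + 2x_i y_i = x^\T y + x_i y_i$. When $i\neq j$, the condition $\{i,k\}=\{l,j\}$ together with $i\neq j$ forces $l=i$ and $k=j$ (the other matching is impossible), giving the contribution $x_j y_i\,\E[a_{ij}^2] = x_j y_i$. Collecting these, the diagonal of $\E[Axy^\T A]$ reads $x^\T y + x_i y_i = x^\T y\,[I]_{ii} + [yx^\T]_{ii}$ and its off-diagonal $(i,j)$ entry reads $x_j y_i = [yx^\T]_{ij}$, which in matrix form is exactly $\E[Axy^\T A] = x^\T y\,I + yx^\T$.

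There is no genuine obstacle here beyond careful bookkeeping of the GOE covariance: one must not forget the variance-$2$ diagonal entries, and one must correctly count the symmetry-induced coincidence $a_{ik}=a_{ki}$, which is precisely what produces the extra $yx^\T$ term rather than just $x^\T y\,I$. As an alternative organization, I would first establish the slightly more general identity $\E[AMA]=\trace(M)\,I + M^\T$ for any fixed matrix $M$ by the same entrywise argument, and then specialize to $M=xy^\T$; this would simultaneously supply the formula needed for the diagonal Hessian blocks $\tfrac{\partial^2 f}{\partial z_s \partial z_s^\T}$ through $\E[A z_s z_s^\T A] = \norm{z_s}^2 I + z_s z_s^\T$.
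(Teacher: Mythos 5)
Your proposal is correct and follows essentially the same route as the paper's own proof: an entrywise computation of $\E[(Axy^\T A)_{ij}]$ using the GOE second-moment structure, with a case split on $i=j$ versus $i\neq j$, where the symmetry coincidence $a_{ik}=a_{ki}$ produces the extra $yx^\T$ term. Your explicit matching condition $\{i,k\}=\{l,j\}$ and the optional generalization $\E[AMA]=\trace(M)I+M^\T$ are just cleaner bookkeeping of the same argument, so no further comparison is needed.
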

\begin{proof}
    The expectation of $(i,j)$ entry of $Axy^\T A$ is
    \[\E[ (Axy^\T A)_{ij}] = \E\left( \sum_{k\;l} a_{ik} a_{jk} x_k y_l  \right ).\]
    If $i = j$, then we have
    \[ \E[ (Axy^\T A)_{ii} ]= \E\left( \sum_k  a^2_{ik} x_k y_k  \right ) =
        \sum_k x_k y_k + x_i y_i, \]
        since $\Var(a^2_{ii}) = 2$ and $\Var(a^2_{ik}) = 1$ if $k \neq i$.
    On the other hand, if $i \neq j$, then
    \[ \E[ (Axy^\T A)_{ij} ] = \E \left( \sum_{kl} a_{ik} a_{jl} x_k y_l \right)
        = \E( a^2_{ij} x_j y_i) = x_j y_i. \]
    Therefore, $\E(Axy^\T A) = x^\T y I + yx^\T$. 
\end{proof}

\begin{lemma} For all $s \in [r]$, it holds that $\E \left[ \dfrac{ \partial^2  f(Z)  } {\partial z_s \partial z_s^\T }
    \right] = 2\norm{z_s}^2 I + 2 z_s z_s^\T + 2ZZ^\T - 2X^\star$
    and $ \E \left[ \dfrac{ \partial^2 f(Z)  } {\partial z_s \partial z_k^\T }\right] =
2z^\T_s z_k I + 2 z_k z_s^\T $ for all $ k \in [r] $ such that $ k
\neq s$, where the expectation is over the random measurement matrices.
\label{lem:rank2_hess_expectation}
\end{lemma}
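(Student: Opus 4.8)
The plan is to take expectations term-by-term in the closed-form expressions for $\partial^2 f(Z)/\partial z_s \partial z_k^\T$ recorded above, using linearity of expectation over the independent measurement matrices together with Lemma \ref{lem:rank2_hess_cross_expectation}. Since each $A_i$ is an independent GOE matrix, averaging over $i$ reduces every quantity to the expectation of a single GOE matrix $A$, and $A = A^\T$.

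For the off-diagonal blocks $s \neq k$, the formula gives $\E\left[\partial^2 f(Z)/\partial z_s \partial z_k^\T\right] = 2\,\E\!\left[A z_s z_k^\T A\right]$, and Lemma \ref{lem:rank2_hess_cross_expectation} (with $x = z_s$, $y = z_k$) yields $\E[A z_s z_k^\T A] = z_s^\T z_k\, I + z_k z_s^\T$, hence $2 z_s^\T z_k I + 2 z_k z_s^\T$, as claimed. For the diagonal blocks I would split $\partial^2 f(Z)/\partial z_s \partial z_s^\T$ into the ``Gram'' piece $\frac{2}{m}\sum_i A_i z_s z_s^\T A_i$ and the ``residual'' piece $\frac{1}{m}\sum_i \bigl(\trace(Z^\T A_i Z) - b_i\bigr) A_i$. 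The Gram piece has expectation $2\bigl(z_s^\T z_s\, I + z_s z_s^\T\bigr) = 2\norm{z_s}^2 I + 2 z_s z_s^\T$, again by Lemma \ref{lem:rank2_hess_cross_expectation}.

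For the residual piece I would first use $b_i = \trace(A_i \Xstar)$ to write $\trace(Z^\T A_i Z) - b_i = \trace\!\bigl(A_i (ZZ^\T - \Xstar)\bigr)$, and then establish the auxiliary identity $\E\bigl[\trace(A W)\,A\bigr] = 2 W$ for every fixed symmetric $W$. This can be obtained either by a direct entrywise computation using $\E[a_{kl} a_{ij}]$ (which equals $2$ when $i=j=k=l$, equals $1$ when $\{k,l\} = \{i,j\}$ with $i \neq j$, and $0$ otherwise), or by diagonalizing $W = \sum_l \mu_l w_l w_l^\T$ and invoking the identity $\E[(x^\T A x) A] = 2 x x^\T$ already proved in Lemma \ref{lem:rank1_M_expectation}, since $\trace(AW) = \sum_l \mu_l w_l^\T A w_l$. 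Applying this with $W = ZZ^\T - \Xstar$ shows the residual piece has expectation $2 ZZ^\T - 2\Xstar$, and adding the two contributions gives $\E\left[\partial^2 f(Z)/\partial z_s \partial z_s^\T\right] = 2\norm{z_s}^2 I + 2 z_s z_s^\T + 2ZZ^\T - 2\Xstar$.

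This argument is essentially bookkeeping once the two scalar GOE identities are available, so I do not expect a genuine obstacle; the only places that need a little care are noticing that $b_i$ absorbs precisely the $\Xstar$ term, so the residual block contributes $2(ZZ^\T - \Xstar)$ rather than something involving $b_i$ explicitly, and the passage from $\E[(x^\T A x)A] = 2xx^\T$ to $\E[\trace(AW)A] = 2W$ by linearity/diagonalization.
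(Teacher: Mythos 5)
Your proposal is correct and follows essentially the same route as the paper: the cross term is handled directly by Lemma \ref{lem:rank2_hess_cross_expectation}, and the diagonal block is split into the Gram piece (again Lemma \ref{lem:rank2_hess_cross_expectation}) plus the residual piece, whose expectation $2(ZZ^\T - \Xstar)$ the paper likewise deduces from the identity $\E[(x^\T A x)A] = 2xx^\T$ of Lemma \ref{lem:rank1_M_expectation} extended by linearity. Your explicit statement of the auxiliary identity $\E[\trace(AW)A] = 2W$ just makes precise what the paper leaves implicit.
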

\begin{proof}
    The case where $k \neq s$ is a direct result of Lemma
    \ref{lem:rank2_hess_cross_expectation}. For the other case,
    let $A = (a_{ij})$ be a GOE distributed random matrix. 
    It follows from Lemma \ref{lem:rank1_M_expectation} that
        \[ \E \left[ \frac{ \partial^2 f(Z)} {\partial z_s \partial z_s^\T } \right]
    = 2 \E( Az_s z_s^\T A) + 2ZZ^\T - 2X^\star. \]
    By Lemma \ref{lem:rank2_hess_cross_expectation}, we have
    \[ \E( Az_sz_s^\T A) = \norm{z_s}^2 I + z_sz_s^\T. \]
    Substituting this back into the above equation, we obtain the lemma.
\end{proof}

We next recall a concentration result for the operator (spectral) norm of the
random measurement matrices.
\begin{lemma}(\citet[Theorem 1]{LedRid10})
    \label{lem:GOE_operator_norm}
  There exists two absolute constants $C$ and $\rho = \frac{1}{\sqrt{8}C}$
 such that with probability at least $1 - C e^{-\rho n}$, 
\[ \norm{A_i} \leq 3\sqrt{n}.\]
\end{lemma}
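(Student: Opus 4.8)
Lemma~\ref{lem:GOE_operator_norm} is a verbatim restatement of Theorem~1 of \citet{LedRid10} for a single matrix drawn from the GOE, so the shortest route is to check that the hypotheses there apply to each $A_i$ and simply cite it. For completeness I would also give the standard self-contained argument, which relies on Gaussian concentration of measure and has three steps. First, reduce to the two extreme eigenvalues: since each $A_i$ is symmetric, $\norm{A_i} = \max\{\lambda_{\max}(A_i),\, -\lambda_{\min}(A_i)\}$, so it suffices to bound each of $\lambda_{\max}(A_i)$ and $-\lambda_{\min}(A_i)$ by $3\sqrt n$ and union-bound over the two events.

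Second, set up concentration about the mean. Write $A_i$ as a linear image of its $\binom{n+1}{2}$ independent standard-Gaussian coordinates $g_{jk}$ ($j\le k$), with $a_{jj} = \sqrt 2\,g_{jj}$ and $a_{jk}=g_{jk}$ for $j<k$; then $\norm{A_i-A_i'}_F = \sqrt 2\,\norm{g-g'}_2$, and by Weyl's inequality $|\lambda_{\max}(A_i)-\lambda_{\max}(A_i')| \le \norm{A_i-A_i'} \le \norm{A_i-A_i'}_F$, so $\lambda_{\max}$ (and likewise $-\lambda_{\min}$) is a $\sqrt 2$-Lipschitz function of $g$. The Gaussian concentration inequality then gives, for every $t>0$,
\[ \P\big(\lambda_{\max}(A_i) \ge \E\,\lambda_{\max}(A_i) + t\big) \le e^{-t^2/4}, \]
with the symmetric bound for $-\lambda_{\min}(A_i)$. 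Third, bound the means — $\E\,\lambda_{\max}(A_i) \le 2\sqrt n$ and $\E(-\lambda_{\min}(A_i)) \le 2\sqrt n$, either from a moment-method estimate or from the semicircle-edge behaviour of the GOE — and take $t=\sqrt n$. Combining with the display and the union bound over the two eigenvalues yields $\P(\norm{A_i} > 3\sqrt n) \le 2e^{-n/4}$, which is of the claimed form $1 - Ce^{-\rho n}$. This is a per-matrix statement; when the bound is wanted simultaneously for all $m$ of the $A_i$, one pays an extra factor of $m$ in the failure probability by a further union bound, exactly as reflected in the $1 - mCe^{-\rho n}$ terms appearing in Theorems~\ref{thm:rank1_sample_init} and~\ref{thm:rank1_sample_hess}.

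The one delicate point — and the main obstacle for a fully self-contained proof — is the clean constant $3$. A plain $\epsilon$-net argument on the unit sphere only gives $\P(\norm{A_i}\ge 2t) \le 9^n\cdot 2e^{-t^2/4}$ (using $u^\T A_i u \sim \N(0,2)$ for unit $u$ and the comparison $\norm{A_i} \le 2\max_u |u^\T A_i u|$ over a $1/4$-net), and the union bound over a net of cardinality $9^n$ then forces $\norm{A_i} \le c\sqrt n$ with a constant $c$ noticeably larger than $3$. Reaching $3\sqrt n$ requires the sharp mean bound $\E\,\lambda_{\max} \le 2\sqrt n$ together with dimension-free concentration around the mean, which is precisely the content imported from \citet{LedRid10}. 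For the purposes of this paper the precise value $3$ is immaterial: only $\norm{A_i} = O(\sqrt n)$ is used in the later arguments, and any absolute constant would serve provided it is propagated through the subsequent estimates.
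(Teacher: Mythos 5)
Your proposal is correct, and on the main point it coincides with the paper: the paper offers no proof of this lemma at all, treating it purely as an imported result of \citet[Theorem 1]{LedRid10}, which is exactly your "shortest route." The extra self-contained sketch you give is sound and goes beyond what the paper provides: the reduction to $\lambda_{\max}$ and $-\lambda_{\min}$, the $\sqrt{2}$-Lipschitz dependence of these on the underlying standard Gaussian coordinates (since $\norm{A-A'}_F=\sqrt{2}\,\norm{g-g'}$ with the paper's GOE normalization), Gaussian concentration giving $e^{-t^2/4}$, and the mean bound $\E\,\lambda_{\max}\le 2\sqrt{n}$ (e.g.\ via Sudakov--Fernique comparison with $Y_u=2\ip{g,u}$) together yield $\P(\norm{A_i}>3\sqrt n)\le 2e^{-n/4}$, which is of the required form. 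The only mismatch is cosmetic: the lemma as stated ties the constants together as $\rho=1/(\sqrt{8}C)$, which reflects the specific form in Ledoux--Rider rather than anything your argument needs to reproduce; as you note, only $\norm{A_i}=O(\sqrt n)$ with some absolute constants enters the later estimates (Corollary~\ref{lem:Ai_operator_norm} and the smoothness bounds), and the union bound over the $m$ matrices is handled there, not in this lemma. Your caveat that a crude $\epsilon$-net argument would not recover the constant $3$ is also accurate and correctly identifies what the citation (or the sharp mean bound plus dimension-free concentration) is buying.
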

A tighter upper bound is actually given in the \textit{Tracy-Widow
law}: w.h.p. $ \norm{A_i} = O(2\sqrt{n} + n^{1/6})$.
\begin{coro}
    \label{lem:Ai_operator_norm}
    With probability at least $ 1 - m C e^{-\rho  n } $, the average of the
    squared operator norm of the random measurement matrices is upper bounded by $9n$.
\end{coro}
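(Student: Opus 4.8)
The plan is to reduce the statement to Lemma~\ref{lem:GOE_operator_norm} by a union bound. First I would invoke Lemma~\ref{lem:GOE_operator_norm} for a single fixed index $i$: it gives $\norm{A_i} \leq 3\sqrt{n}$ with probability at least $1 - Ce^{-\rho n}$, and squaring both sides of this inequality yields $\norm{A_i}^2 \leq 9n$ on that event. Since each $A_i$ is itself GOE distributed (the $A_i$ need not be independent of each other for what follows, though in our setting they are i.i.d.), the same tail bound applies to every $i \in [m]$ with the same constants $C$ and $\rho$.

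Next I would apply the union bound over $i = 1, \ldots, m$: the event $E = \bigcap_{i=1}^m \set{\norm{A_i}^2 \leq 9n}$ fails only if at least one of the $m$ individual events fails, so
\[
    \P(E) \;\geq\; 1 - \sum_{i=1}^m \P\big(\norm{A_i}^2 > 9n\big) \;\geq\; 1 - m C e^{-\rho n}.
\]
On the event $E$, every summand in $\sum_{i=1}^m \norm{A_i}^2$ is at most $9n$, hence
\[
    \frac{1}{m}\sum_{i=1}^m \norm{A_i}^2 \;\leq\; \frac{1}{m}\cdot m \cdot 9n \;=\; 9n,
\]
which is exactly the claimed bound. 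This completes the argument.

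There is no real analytic obstacle here; the only thing to be mindful of is that the union bound is lossy, so the stated probability $1 - mCe^{-\rho n}$ is only meaningful when $m$ is at most polynomial in $n$ (or, more precisely, when $m e^{-\rho n} = o(1)$). This is harmless for the regime of interest, where $m = O(\kappa^2 r^3 n \log n)$, since the exponential $e^{-\rho n}$ dominates any polynomial factor in $n$. One could instead apply the sharper Tracy--Widom estimate $\norm{A_i} = O(2\sqrt{n} + n^{1/6})$ to get a constant closer to $4$ rather than $9$, but the cruder bound suffices for all subsequent uses of this corollary.
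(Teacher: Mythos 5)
Your proof is correct and follows essentially the same route as the paper: invoke Lemma~\ref{lem:GOE_operator_norm} for each $A_i$, take a union bound over $i \in [m]$, and note that on the intersection event every $\norm{A_i}^2 \leq 9n$, so the average is too. The remark about the regime $m e^{-\rho n} = o(1)$ is a sensible observation but not needed for the statement itself.
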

\begin{proof}
Applying a union bound we have
    \[
        \begin{aligned} 
            \P\left(\frac{1}{m} \sum_{i=1}^m \norm{A_i}^2 \leq 9n \right) 
            & \geq && 
            \P\left(\forall i,\; \norm{A_i} \leq 3\sqrt{n} \right)  \\
            & \geq && 1 - \sum_{i=1}^m \P\left( \norm{A_i} > 3 \sqrt{n} \right) \\
            & \geq && 1 - m C e^{-\rho n },
        \end{aligned} 
    \]
where we use Lemma \ref{lem:GOE_operator_norm} in the last line.
\end{proof}

The following two technical lemmas are important tools for us.
Define the set
 \[ E(\epsilon) = \set{ Z \;|\; d(Z,\Zstar) \leq \epsilon }.\]

\begin{lemma}
    \label{lem:rank2_lem_M}
    Suppose that \textbf{A1} holds: $\norm{ \frac{1}{m} \sum_{i=1}^m (u^\T A_i u) A_i - 2 u u^\T }
    \leq \frac{\delta}{\r}$, for any $u$ such that $\norm{u} \leq \sqrt{\sigma_1}$. 
    If $\delta \leq \frac{1}{16}\sigma_\r$, then for any $ Z \in E\left(\sqrt{
    \frac{3}{16} \sigma_\r}  \right)$ it holds that
   \[  
       2\norm{HH^\T}^2_F -\delta \norm{H}^2_F \leq \frac{1}{m} \sum_{i=1}^m
       \trace(H^\T A_i H)^2 \leq \delta \norm{H}^2_F + 2\norm{HH^\T}^2_F. 
   \]
\end{lemma}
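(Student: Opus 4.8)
The plan is to expand $\trace(H^\T A_i H)$ over the columns of $H$ and reduce the whole claim to $\r^2$ applications of \textbf{A1}. Writing $H=[h_1,\dots,h_\r]$, we have $\trace(H^\T A_i H)=\sum_{s=1}^\r h_s^\T A_i h_s$, so after interchanging the finite sums,
\[
\frac{1}{m}\sum_{i=1}^m \trace(H^\T A_i H)^2
=\sum_{s,k=1}^\r h_k^\T\Big(\frac{1}{m}\sum_{i=1}^m (h_s^\T A_i h_s)\,A_i\Big)h_k
=:\sum_{s,k=1}^\r h_k^\T M_s\, h_k .
\]
Each inner matrix $M_s=\frac{1}{m}\sum_i (h_s^\T A_i h_s)A_i$ is precisely the object that \textbf{A1} controls, which is why the structure of $\trace(H^\T A_i H)^2$ is the right thing to exploit here.

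Next I would check that \textbf{A1} is applicable with $u=h_s$ by verifying the norm constraint: since $Z\in E\big(\sqrt{\tfrac{3}{16}\sigma_\r}\big)$ we have $\norm{h_s}\le\norm{H}_F=d(Z,\Zstar)\le\sqrt{\tfrac{3}{16}\sigma_\r}\le\sqrt{\sigma_\r}\le\sqrt{\sigma_1}$, so \textbf{A1} gives $\norm{M_s-2h_sh_s^\T}\le\delta/\r$ for every $s\in[\r]$. Hence $h_k^\T M_s h_k = 2(h_s^\T h_k)^2+\varepsilon_{sk}$, where $\abs{\varepsilon_{sk}}\le\norm{M_s-2h_sh_s^\T}\,\norm{h_k}^2\le\tfrac{\delta}{\r}\norm{h_k}^2$.

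Summing over $s,k$, the leading term is $2\sum_{s,k}(h_s^\T h_k)^2=2\norm{H^\T H}_F^2=2\norm{HH^\T}_F^2$, using that the two Gram matrices have the same squared singular values, while the remainder obeys $\abs{\sum_{s,k}\varepsilon_{sk}}\le\tfrac{\delta}{\r}\sum_{s,k}\norm{h_k}^2=\tfrac{\delta}{\r}\cdot\r\cdot\sum_k\norm{h_k}^2=\delta\norm{H}_F^2$. This yields $\big|\tfrac{1}{m}\sum_i\trace(H^\T A_i H)^2-2\norm{HH^\T}_F^2\big|\le\delta\norm{H}_F^2$, which is exactly the asserted two-sided bound. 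I do not expect a genuine obstacle: the argument is essentially bookkeeping, and the only points requiring care are (i) verifying $\norm{h_s}\le\sqrt{\sigma_1}$ so that \textbf{A1} may legitimately be invoked — this is where the localization $Z\in E(\sqrt{3\sigma_\r/16})$ is used — and (ii) aggregating the $\r^2$ error terms so that the per-term factor $\delta/\r$, summed over the $\r$ choices of $s$ against $\sum_k\norm{h_k}^2=\norm{H}_F^2$, collapses to exactly $\delta\norm{H}_F^2$ rather than a spurious $\r\delta\norm{H}_F^2$.
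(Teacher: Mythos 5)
Your proposal is correct and follows essentially the same route as the paper's proof: both apply \textbf{A1} columnwise to $h_s$ (using $\norm{h_s}\le\norm{H}_F\le\sqrt{\tfrac{3}{16}\sigma_\r}\le\sqrt{\sigma_1}$), sandwich with quadratic forms in the columns of $H$, and identify the leading term with $2\norm{HH^\T}_F^2$ via $\sum_{s,k}(h_s^\T h_k)^2=\trace(H^\T H H^\T H)$; the only difference is that you sum the $\r^2$ per-pair errors directly rather than first aggregating over $s$ by the triangle inequality, which yields the identical bound $\delta\norm{H}_F^2$.
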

\begin{proof}
   Let $h_s$ be the $s$th column of $H$.
   Since $\max_{s \in [\r]} \norm{h_s}_2  \leq \norm{H}_F \leq 
   \sqrt{\frac{3}{16}\sigma_\r} \leq \sqrt{\sigma_1}$, it follows from the assumption
   of the lemma that
   \[\norm{ \frac{1}{m} \sum_{i=1}^m (h_s^\T A_i h_s) A_i - 2 h_s h_s^\T } \leq \frac{\delta}{\r}, \quad s = 1, \ldots, \r.\]
   By the triangle inequality, we have
   \[\norm{ \frac{1}{m} \sum_{i=1}^m \sum_{s=1}^\r (h_s^\T A_i h_s) A_i - 2
           \sum_{s=1}^\r h_s h_s^\T } \leq \delta \]
   and consequently
   \[  -\delta \norm{h_s}^2 \leq h^\T_s \left( \frac{1}{m} \sum_{i=1}^m \trace(H^\T A_i H) A_i  - 2
           HH^\T \right) h^\T_s \leq \delta \norm{h_s}^2, \; s
       = 1, \ldots, \r, \]
   where we replace $\sum\limits_{s=1}^\r h^\T_s A_i h_s$ by $\trace(H^\T A_i
   H)$ and $\sum_{s=1}^\r h_s h_s^\T$ by $HH^\T$. Taking the sum of
   the above inequalities, we obtain
   \[  -\delta \norm{H}^2_F \leq \frac{1}{m} \sum_{i=1}^m \trace(H^\T A_i
           H)^2   - 2 \trace(H^\T H H^\T H) \leq \delta \norm{H}^2_F. \]
   Note that $\trace(H^\T H H^\T H) = \norm{HH^\T}^2_F$. Therefore,
   \[  2\norm{HH^\T}^2_F -\delta \norm{H}^2_F \leq \frac{1}{m} \sum_{i=1}^m
       \trace(H^\T A_i H)^2 \leq \delta \norm{H}^2_F + 2\norm{HH^\T}^2_F. \]
\end{proof}

\begin{lemma}
    \label{lem:rank2_lem_hess}
    Suppose that \textbf{A2} holds: for any $\Ztilde$ such that $\Ztilde\Ztilde^\T = X^\star$ we have
\begin{equation}
\label{eq:assn}    
        \norm{ \dfrac{\partial^2 f(\Ztilde)  }{\partial \ztilde_s \partial \ztilde_k^\T} - \E \left[
    \dfrac{\partial^2  f(\Ztilde)  }{\partial \ztilde_s \partial \ztilde_k^\T} \right]}
    \leq \frac{\delta}{\r},
  \; \; s,k = 1,\ldots, \r. 
\end{equation}
  Then
  \[  \left(\sigma_\r - \frac{\delta}{2} \right) \norm{H}^2_F + \norm{H^\T \Zbar}^2_F
        \leq \frac{1}{m} \sum_{i=1}^m \trace(H^\T A_i \Zbar)^2  \leq
        \left( \sigma_1 + \frac{\delta}{2} \right) \norm{H}^2_F + \norm{H^\T \Zbar}^2_F.
    \]
\end{lemma}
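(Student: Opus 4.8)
The plan is to recognize $\tfrac1m\sum_i\trace(H^\T A_i\Zbar)^2$ as a contraction of the Hessian of $f$ evaluated at the reference point $\Zbar\in\tS$, to replace that Hessian by its expectation using \textbf{A2}, and to evaluate the result with Lemma~\ref{lem:rank2_hess_expectation}. \textbf{Step 1 (reduction to the Hessian).} Writing $\Zbar=[\zbar_1,\dots,\zbar_\r]$ and $H=[h_1,\dots,h_\r]$, expand $\trace(H^\T A_i\Zbar)=\sum_{s}h_s^\T A_i\zbar_s$, so that, using $A_i=A_i^\T$,
\[
  \trace(H^\T A_i\Zbar)^2=\sum_{s,k}(h_s^\T A_i\zbar_s)(h_k^\T A_i\zbar_k)=\sum_{s,k}h_s^\T(A_i\zbar_s\zbar_k^\T A_i)h_k.
\]
Since $\Zbar\Zbar^\T=X^\star$ we have $\trace(\Zbar^\T A_i\Zbar)=\trace(A_iX^\star)=b_i$, so in the diagonal block of the Hessian (the formulas for the second-order partials given above) the term $(\trace(\Zbar^\T A_i\Zbar)-b_i)A_i$ disappears, and $\frac{\partial^2 f(\Zbar)}{\partial\zbar_s\partial\zbar_k^\T}=\frac1m\sum_i 2A_i\zbar_s\zbar_k^\T A_i$ holds for \emph{all} $s,k$. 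Hence
\[
  \frac1m\sum_{i=1}^m\trace(H^\T A_i\Zbar)^2=\frac12\sum_{s,k}h_s^\T\,\frac{\partial^2 f(\Zbar)}{\partial\zbar_s\partial\zbar_k^\T}\,h_k.
\]

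\textbf{Step 2 (pass to the expectation).} By \textbf{A2}, for each $s,k$ we have $\bigl|h_s^\T\bigl(\frac{\partial^2 f(\Zbar)}{\partial\zbar_s\partial\zbar_k^\T}-\E[\frac{\partial^2 f(\Zbar)}{\partial\zbar_s\partial\zbar_k^\T}]\bigr)h_k\bigr|\le\frac{\delta}{\r}\norm{h_s}\norm{h_k}$; summing over $s,k$ and using $\sum_{s,k}\norm{h_s}\norm{h_k}=(\sum_s\norm{h_s})^2\le\r\norm{H}_F^2$ by Cauchy--Schwarz, the difference between $\frac1m\sum_i\trace(H^\T A_i\Zbar)^2$ and $\frac12\sum_{s,k}h_s^\T\E[\frac{\partial^2 f(\Zbar)}{\partial\zbar_s\partial\zbar_k^\T}]h_k$ is at most $\frac12\cdot\frac\delta\r\cdot\r\norm{H}_F^2=\frac\delta2\norm{H}_F^2$. \textbf{Step 3 (evaluate the expected form).} By Lemma~\ref{lem:rank2_hess_expectation}, at $Z=\Zbar$ the term $2ZZ^\T-2X^\star$ vanishes, so uniformly $\E[\frac{\partial^2 f(\Zbar)}{\partial\zbar_s\partial\zbar_k^\T}]=2(\zbar_s^\T\zbar_k)I+2\zbar_k\zbar_s^\T$, whence
\[
  \frac12\sum_{s,k}h_s^\T\bigl(2(\zbar_s^\T\zbar_k)I+2\zbar_k\zbar_s^\T\bigr)h_k=\trace(\Zbar^\T\Zbar\,H^\T H)+\trace\bigl((H^\T\Zbar)^2\bigr).
\]

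\textbf{Step 4 (bound the two terms).} Since $\Zbar=\Zstar U$ for an orthogonal $U$ and $(\Zstar)^\T\Zstar=\Sigma=\diag(\sigma_1,\dots,\sigma_\r)$, the matrix $\Zbar^\T\Zbar=U^\T\Sigma U$ has eigenvalues $\sigma_1,\dots,\sigma_\r$; sandwiching $\sigma_\r I\preceq\Zbar^\T\Zbar\preceq\sigma_1 I$ in the trace against $H^\T H\succeq0$ gives $\sigma_\r\norm{H}_F^2\le\trace(\Zbar^\T\Zbar H^\T H)\le\sigma_1\norm{H}_F^2$. For the second term, the first-order optimality of $\Zbar=\argmin_{\Ztilde\in\tS}\norm{Z-\Ztilde}_F$ (the orthogonal Procrustes problem) forces $Z^\T\Zbar$ to be symmetric, hence $H^\T\Zbar=Z^\T\Zbar-\Zbar^\T\Zbar$ is symmetric and $\trace((H^\T\Zbar)^2)=\norm{H^\T\Zbar}_F^2$. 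Combining the three displays and distributing the $\pm\frac\delta2\norm{H}_F^2$ slack into the coefficients of $\norm{H}_F^2$ yields exactly the claimed two-sided bound.

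The only step that is not mechanical is the symmetry of $H^\T\Zbar$: without it Step~3 leaves $\trace((H^\T\Zbar)^2)$, which for a general non-symmetric matrix is not controlled by $\norm{H^\T\Zbar}_F^2$. I would therefore record as a preliminary fact that, for any $Z$, the closest point $\Zbar\in\tS$ satisfies $\Zbar^\T Z=Z^\T\Zbar\succeq0$, which follows from the standard analysis of $\max_{UU^\T=U^\T U=I}\trace(Z^\T\Zstar U)$ via the SVD of $Z^\T\Zstar$. Everything else is bookkeeping with Cauchy--Schwarz and the eigenvalues of $\Zbar^\T\Zbar$.
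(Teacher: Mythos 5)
Your proposal is correct and follows essentially the same route as the paper's proof: interpret $\frac1m\sum_i\trace(H^\T A_i\Zbar)^2$ through the second-order partials at $\Zbar$, use \textbf{A2} with $\sum_{s,k}\norm{h_s}\norm{h_k}\le\r\norm{H}_F^2$ to pass to the expected Hessian (Lemma~\ref{lem:rank2_hess_expectation}), invoke the Procrustes symmetry of $H^\T\Zbar$ (the paper's Lemma~\ref{lem:rank2_lem_hess_helper3}) to get $\norm{H^\T\Zbar}_F^2$, and bound the $\trace(\Zbar^\T\Zbar H^\T H)$ term by the extreme eigenvalues $\sigma_1,\sigma_\r$. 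The only difference is presentational: you treat all $(s,k)$ blocks in one unified sum, whereas the paper separates the diagonal and cross terms before recombining.
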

\begin{proof}
    Our goal is to bound $\frac{1}{m} \sum\limits_{i=1}^m \trace(H^\T A_i
    \Zbar)^2$. This can be expanded as
    \[\frac{1}{m}\sum_{i=1}^m \left( \sum_{s=1}^\r (h^\T_s A_i \zbar_s) \right)^2 = 
        \frac{1}{m}\sum_{i=1}^m \sum_{s=1}^\r (h^\T_s A_i x_s)^2 + 
        \frac{1}{m}\sum_{i=1}^m \sum_{s<k} 2 (h^\T_s A_i x_s)(h^\T_k A_i x_k). \]

    We first bound the sum of the quadratic terms. For any $s \in [\r]$, we have
    \[
        \begin{aligned} 
            \frac{\partial^2  f(\Zbar)  }{\partial \zbar_s \partial \zbar_s^\T} & = \frac{1}{m}
            \sum_{i=1}^m 2 A_i \zbar_s \zbar_s^\T A_i, \\
            \E\left[  \frac{\partial^2 f(\Zbar)   }{\partial \zbar_s \partial \zbar_s^\T}
            \right] & = 2\norm{\zbar_s }^2 I + 2 \zbar_s \zbar_s^\T.
        \end{aligned}
    \]
    It follows from assumption \eqref{eq:assn} that for any $s \in [\r]$,
    \[
        -\frac{\delta}{\r} \norm{h_s}^2 \leq \frac{1}{m} \sum_{i=1}^m 2 (h_s^\T A_i \zbar_s)^2 -   2\norm{\zbar_s }^2
        \norm{h_s}^2 - 2 (h^\T_s \zbar_s)^2 \leq \frac{\delta}{\r} \norm{h_s}^2.
    \]
    Taking the sum of above inequalities, we obtain
    \begin{equation}
        \label{eq:rank2_lem_hess_helper1}
        -\frac{\delta}{2\r} \sum_{s=1}^\r \norm{h_s}^2 \leq \frac{1}{m} \sum_{i=1}^m
        \sum_{s=1}^\r  (h_s^\T A_i \zbar_s)^2 -   \sum_{s=1}^\r \norm{\zbar_s }^2
        \norm{h_s}^2 -  \sum_{s=1}^\r (h^\T_s \zbar_s)^2 \leq \frac{\delta}{2\r} \sum_{s=1}^\r
        \norm{h_s}^2.
    \end{equation}
    Similarly, we bound the sum of the cross terms. For any fixed $s,
    k$ such that $s \neq k$, we have
    \[
        \begin{aligned} 
            \frac{\partial^2 f(\Zbar)  }{\partial \zbar_s \partial \zbar_k^\T} & = \frac{1}{m}
                f(\Zbar)
            \sum_{i=1}^m 2 A_i \zbar_s \zbar_k^\T A_i, \\
            \E\left[  \frac{\partial^2  f(\Zbar)  }{\partial \zbar_s \partial \zbar_k^\T}
            \right] & = 2 \zbar_s^\T \zbar_k I + 2 \zbar_k \zbar_s^\T,
        \end{aligned}
    \]
    and consequently
    \begin{align}
        \label{eq:rank2_lem_hess_helper2}
        - \frac{\delta}{\r} \sum_{s<k} \norm{h_s} \norm{h_k} & \leq \frac{1}{m}
        \sum_{i=1}^m \sum_{s<k} 2 (h_s^\T A_i
        \zbar_s)(h_k^\T A_i \zbar_k) -   2 \sum_{s<k} \zbar_s^\T \zbar_k h_s^\T h_k -
        2\sum_{s<k} h^\T_s \zbar_k \zbar_s^\T
        h_k \\
\nonumber
& \leq \frac{\delta}{\r} \sum_{s<k} \norm{h_s}\norm{h_k}.
    \end{align}
    We combine equations \eqref{eq:rank2_lem_hess_helper2} and
    \eqref{eq:rank2_lem_hess_helper1} to get
    \begin{equation}
        \label{eq:inner_step_lem_sank2_hess}
        - \frac{\delta}{2\r} \sum_{sk} \norm{h_s} \norm{h_k} \leq \frac{1}{m}
        \sum_{i=1}^m \trace(H^\T A_i \Zbar)^2 - 
         \sum_{sk} \zbar_s^\T \zbar_k h_s^\T h_k -
        \sum_{sk} h^\T_s \zbar_k \zbar_s^\T
    h_k \leq \frac{\delta}{2\r} \sum_{sk} \norm{h_s}\norm{h_k}. 
\end{equation}
    Note that
    $\sum_{sk} h^\T_s \zbar_k \zbar_s^\T h_k = \trace(H^\T \Zbar H^\T \Zbar)$,
        $\sum_{sk} \zbar_s^\T \zbar_k h_s^\T h_k = \norm{\Zbar H^\T}^2_F$ and
    \[
        \sum_{sk} \norm{h_s}\norm{h_k} = \left( \sum_{s=1}^\r \norm{h_s}
        \right)^2 \leq \r \sum_{s=1}^\r \norm{h_s}^2 = \r \norm{H}^2_F.
    \]
    By Lemma \ref{lem:rank2_lem_hess_helper3}, $\trace(H^\T \Zbar H^\T \Zbar) = \norm{H^\T \Zbar}^2_F$.
    Replacing those terms in equation \eqref{eq:inner_step_lem_sank2_hess} gives us
    \[  - \frac{\delta}{2} \norm{H}^2_F +
        \norm{\Zbar H^\T}^2_F + \norm{H^\T \Zbar}^2_F
        \leq \frac{1}{m} \sum_{i=1}^m \trace(H^\T A_i \Zbar)^2  \leq
        \frac{\delta}{2} \norm{H}^2_F + \norm{\Zbar H^\T}^2_F + \norm{H^\T \Zbar}^2_F.
    \]
    Finally, we obtain the claim by noticing that
    \[\sqrt{\sigma_\r} \norm{H}_F \leq \norm{ \Zbar H^\T}_F \leq
    \sqrt{\sigma_1} \norm{H}_F, \]
    where
    $\sqrt{\sigma_1} = \sigma_{\max} (\Zbar) \geq \cdots \geq
    \sigma_{\min} (\Zbar) = \sqrt{\sigma_\r}$ are the singular values of $\Zbar$.
\end{proof}

\begin{lemma} 
    \label{lem:rank2_lem_hess_helper3}
    $\trace(H^\T \Zbar H^\T \Zbar) = \norm{H^\T \Zbar}^2_F$.
\end{lemma}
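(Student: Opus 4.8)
The identity fails for a general matrix $M$, since $\trace(MM)$ need not equal $\trace(M^\T M)=\norm{M}_F^2$; the statement can only be true because $M=H^\T\Zbar$ has special structure. The plan is therefore to show that $H^\T\Zbar$ is symmetric, after which the claim is immediate: for any symmetric matrix $M$ one has $\trace(MM)=\trace(M^\T M)=\norm{M}_F^2$.

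To see that $H^\T\Zbar$ is symmetric, recall that $\Zbar=\Zstar U^\star$ where $U^\star = \argmin_{UU^\T=U^\T U=I}\norm{Z-\Zstar U}_F$. The first step is to extract the first-order optimality condition of this orthogonal Procrustes problem. Expanding $\norm{Z-\Zstar U}_F^2 = \norm{Z}_F^2 + \norm{\Zstar U}_F^2 - 2\trace(U^\T\Zstar^\T Z)$ and noting that $\norm{\Zstar U}_F=\norm{\Zstar}_F$ is constant over the (compact) orthogonal group, minimizing amounts to maximizing $\trace(U^\T\Zstar^\T Z)$. Writing the SVD $\Zstar^\T Z = W\Sigma V^\T$, the maximizer is $U^\star = WV^\T$, and then $\Zbar^\T Z = {U^\star}^\T\Zstar^\T Z = VW^\T W\Sigma V^\T = V\Sigma V^\T$, which is symmetric (indeed positive semidefinite). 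Alternatively, and equivalently, one can argue intrinsically: the Euclidean gradient of $U\mapsto\norm{Z-\Zstar U}_F^2$ is $-2\Zstar^\T Z + 2\Zstar^\T\Zstar U$, and stationarity on the orthogonal group forces $U^\T(-2\Zstar^\T Z+2\Zstar^\T\Zstar U)$ to be symmetric; since $U^\T\Zstar^\T\Zstar U$ is symmetric, this again yields that $\Zbar^\T Z = {U^\star}^\T\Zstar^\T Z$ is symmetric.

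Given that $\Zbar^\T Z$ is symmetric and $\Zbar^\T\Zbar$ is trivially symmetric, the matrix $\Zbar^\T H = \Zbar^\T Z - \Zbar^\T\Zbar$ is symmetric, hence so is $H^\T\Zbar = (\Zbar^\T H)^\T$. Substituting $M=H^\T\Zbar$ into $\trace(MM)=\trace(M^\T M)$ then gives $\trace(H^\T\Zbar H^\T\Zbar)=\norm{H^\T\Zbar}_F^2$, as desired. The only mildly delicate point is the clean derivation of the symmetry of $\Zbar^\T Z$; this is where I would be careful to justify that the minimum over the orthogonal group is attained and to correctly identify the tangent-space (or, in the SVD-based argument, to handle the case of repeated or zero singular values of $\Zstar^\T Z$, which does not affect symmetry of the resulting $\Zbar^\T Z$).
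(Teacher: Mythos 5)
Your proposal is correct and follows essentially the same route as the paper: both reduce the claim to the symmetry of $H^\T \Zbar$, identify the optimal orthogonal factor as $\tilde{U}\tilde{V}^\T$ from the SVD of ${\Zstar}^\T Z$ (you via the Procrustes expansion, the paper via the inequality $\ip{A,B} \leq \norm{A}_* \norm{B}$), and conclude that $Z^\T \Zbar$ is symmetric positive semidefinite so that $H^\T \Zbar = Z^\T\Zbar - \Zbar^\T\Zbar$ is symmetric. The additional stationarity argument you sketch is a fine alternative but not needed.
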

\begin{proof}
    Let $\Ubar = \argmin_{UU^\T = U^\T U = I} \norm{Z - \Zstar U}^2_F =
    \argmax_{UU^\T = U^\T U = I} \ip{U, {\Zstar}^\T Z}$.  Note that $\ip{A, B}
    \leq \norm{A}_* \norm{B} $ for any matrices $A, B$ that are of the same
    size. The equality holds when $B = U_A V_A^\T$ where $A = U_A \Sigma_A
    V_A^\T$ is the SVD of $A$. Hence, $\Ubar = \tilde{U} \tilde{V}^\T$ where
    $\tilde{U} \tilde{S} \tilde{V}^\T$ is the SVD of ${\Zstar}^\T Z$; $\Zbar =
    \Zstar \Ubar$.
    Therefore, $Z^\T \Zbar = Z^\T \Zstar \Ubar = \tilde{V} \tilde{S}
    \tilde{V}^\T$ is symmetric and positive semidefinite. Thus, $H^\T \Zbar = Z^\T \Zbar - {\Zbar}^\T
    \Zbar$ is also symmetric. This implies that $\trace(H^\T \Zbar H^\T \Zbar) =
    \norm{H^\T \Zbar}^2_F$.
\end{proof}

\section{Linear Convergence}
\textbf{Proof of Theorem \ref{thm:rank1_geo}}

Let $H^k = Z^k - \Zbar^k$. Then we have that
 \begin{align*}
        & \norm{Z^{k+1} - \Zbar^k }^2_F  =  \norm{Z^k - \frac{\mu}{\norm{\Zstar}^2_F}\nabla
            f(Z^k) - \Zbar^k}^2_F \\
       & \qquad  = \norm{H^k}^2_F + \frac{\mu^2}{\norm{\Zstar}^4_F} \norm{ \nabla
            f(Z^k)}^2_F -  \frac{2\mu}{\norm{\Zstar}^2_F}
        \ip{ \nabla f(Z^k), H^k} \\
        & \qquad  \leq   \norm{H^k}^2_F + \frac{\mu^2}{\norm{\Zstar}^4_F} \norm{ \nabla
            f(Z^k)}^2_F -
        \frac{2\mu}{\norm{\Zstar}^2_F}\left(\frac{1}{\alpha} \sigma_\r \norm{H^k}^2_F + \frac{1}{\beta \norm{\Zstar}^2_F}\norm{\nabla
                f(Z^k)}^2_F \right)\\
        & \qquad  =  \left( 1 - \frac{2\mu}{\alpha}\cdot
                \frac{\sigma_\r}{\sum_{s=1}^\r\sigma_s} \right) \norm{H^k}^2_F + \frac{\mu (\mu -
            2/\beta)}{\norm{\Zstar}^4_F} \norm{\nabla f(Z^k)}^2_F\\
        & \qquad  \leq \left( 1 - \frac{2\mu}{\alpha} \cdot
            \frac{\sigma_\r}{\r \sigma_1}\right)\norm{H^k}^2_F \\
        & \qquad  =  \left(1 - \frac{2\mu}{\alpha \kappa \r}\right) d(Z^k, \Zstar)^2,
\end{align*}
where we use the definition of $RC(\epsilon, \alpha, \beta)$ in the third line,
$\norm{\Zstar}^2_F = \norm{X^\star}_* = \sum_{s=1}^\r \sigma_s$ in the third to last
line and $0 < \mu < \min\set{ \alpha / 2,  2 / \beta }$ in the second to last line.
Therefore, 
\[ d(Z^{k+1}, \Zstar) = \min_{\Ztilde \in \tS} \norm{Z^{k+1} - \Ztilde}^2_F \leq
    \sqrt{1 - \frac{2\mu}{\alpha \kappa \r} }d(Z^k, \Zstar).\]

\section{Regularity Condition}
As mentioned before, \citet[Theorem 2.1.11]{Nes04} shows
that the gradient scheme converges linearly under a condition
similar to the regularity condition, which is satisfied if the function is
strongly convex and has a Lipschitz continuous gradient (\emph{strongly
    smooth}).
In order to prove Theorem \ref{thm:rank1_rc}, we show that
with high probability the function $f$ satisfies 
the local curvature condition, which is analogous to strong
convexity, and the local smoothness condition, which is analogous to strong
smoothness. 

\begin{enumerate}
    \item[\textbf{C1}] \quad \emph{Local Curvature Condition} 

        \vspace{0.3cm}
        There exists a constant $C_1$ such that for any $Z$ satisfying $d(Z, \Zstar)
        \leq \sqrt{\frac{3}{16}\sigma_\r} $, 
    \[ \ip{\nabla f(Z), Z - \Zbar } \geq C_1 \norm{Z - \Zbar}^2_F + 
        \norm{(Z - \Zbar)^\T \Zbar}^2_F. \]

    \item[\textbf{C2}] \quad \emph{Local Smoothness Condition} 

        \vspace{0.3cm}
    There exist constants $C_2, C_3$ such that  for any $Z$ satisfying $d(Z,
    \Zstar) \leq \sqrt{\frac{3}{16}\sigma_\r}$,
    \[ \norm{\nabla f(Z)}^2_F \leq C_2 \norm{Z - \Zbar }^2_F + C_3 
        \norm{(Z - \Zbar)^\T \Zbar}^2_F. \]
\end{enumerate}

\subsection{Proof of the Local Curvature Condition}
\[
    \begin{aligned}
        \ip{\nabla f(Z), H} & = && \overbrace{\frac{2}{m} \sum_{i=1}^m \trace(H^\T A_i
        \Zbar)^2 } ^ {p^2}+
        \overbrace{\frac{1}{m} \sum_{i=1}^m \trace(H^\T A_i H)^2 } ^{q^2}+ 
        \frac{3}{m} \sum_{i=1}^m \trace(H^\T A_i \Zbar)\trace(H^\T A_i H) \\
        & \geq && p^2 + q^2 - \frac{3}{m} \sqrt{ \sum_{i=1}^m \trace(H^\T A_i
        \Zbar)^2 } \sqrt{ \sum_{i=1}^m \trace(H^\T A_i H)^2 } \\ 
        & = && p^2 + q^2 - \frac{3}{\sqrt{2}} \overbrace{ \sqrt{ \frac{2}{m}
                \sum_{i=1}^m \trace(H^\T A_i
            \Zbar)^2 } }^ p \overbrace{   \sqrt{\frac{1}{m} \sum_{i=1}^m
            \trace(H^\T A_i H)^2 } } ^ q\\
        & = && \left(p - \frac{3}{2\sqrt{2}}q \right)^2 - \frac{1}{8}q^2 \\
        & \geq && \left(\frac{p^2}{2} - \frac{9}{8}q^2 \right) - \frac{1}{8}q^2 \\
        & = && \frac{p^2}{2} - \frac{5}{4}q^2 = \frac{1}{m} \sum_{i=1}^m
        \trace(H^\T A_i \Zbar)^2  - \frac{5}{4}
        \frac{1}{m} \sum_i \trace(H^\T A_i H)^2 \\
        & \geq && \left(\sigma_\r - \frac{\delta}{2}\right) \norm{H}^2_F + \norm{H^\T \Zbar}^2_F - \frac{5\delta}{4}
        \norm{H}^2_F - \frac{5}{2} \norm{HH^\T}^2_F \\
        & \geq && \left(\sigma_\r -\frac{5}{2} \norm{H}^2_F - \frac{7}{4}\delta
    \right) \norm{H}^2_F + \norm{H^\T \Zbar}^2_F.  
    \end{aligned}
\]
where we use Cauchy-Schwarz inequality in the 2nd line, the inequality $(a-b)^2 \geq \frac{a^2}{2} - b^2$ in
the 5th line, Lemma \ref{lem:rank2_lem_M} and
\ref{lem:rank2_lem_hess} in the 7th line, 
and the fact that $\norm{HH^\T}_F
\leq \norm{H}^2_F$ in the 8th line.
Since $\norm{H}_F \leq \sqrt{\frac{3}{16}\sigma_\r}$ and $\delta \leq
\frac{1}{16}\sigma_\r$, we have
\begin{equation} 
    \label{eq:rank2_curvature} 
    \ip{\nabla f(Z), H} \geq  \frac{27}{64} \sigma_\r  \norm{H}^2_F +
    \norm{H^\T \Zbar}^2_F .
\end{equation} 

\subsection{Proof of the Local Smoothness Condition}
We need to upper bound $\norm{\nabla f(Z)}^2_F = \max_{\norm{W}_F = 1} \abs{ \ip{\nabla f(Z), W} }^2$.
It suffices to show that for any $W \in \R^{n \times R}$ of unit Frobenius norm, $\abs{ \ip{\nabla f(Z), W}}^2 $ is
upper bounded if $Z \in E\left( \sqrt{ \frac{3}{16}\sigma_r}\right)$.

Since $(a+b+c+d)^2 \leq 4(a^2+b^2+c^2+d^2)$, we have
\[
    \begin{aligned}
        \abs{ \ip{\nabla f(Z), W} }^2   & = && \left( \frac{1}{m} \sum_{i=1}^m
        \left( \trace(H^\T A_i H) + 2 \trace(H^\T A_i \Zbar) \right) \left(
    \trace(W^\T A_i H) + \trace(W^\T A_i \Zbar)\right) \right)^2 \\
        & = && \bigg( \frac{1}{m} \sum_{i=1}^m \trace(H^\T A_i H)\trace(W^\T A_i
    H) + 2\trace(H^\T
    A_i \Zbar)\trace(W^\T A_i H) \\
        &   && \hspace{1cm}+ \trace(H^\T A_i H)\trace(W^\T A_i \Zbar) + 2\trace(H^\T A_i
        \Zbar)\trace(W^\T A_i \Zbar) \bigg)^2 \\
        & \leq &&  4 \left(  \frac{1}{m} \sum_{i=1}^m  \trace(H^\T A_i H)\trace(W^\T A_i H) \right)^2 +  4 \left(   \frac{2}{m}
            \sum_{i=1}^m   \trace(H^\T A_i \Zbar)\trace(W^\T A_i H)   \right) ^2 \\
        & && + 4 \left(  \frac{1}{m} \sum_{i=1}^m  \trace(H^\T A_i H)\trace(W^\T A_i
            \Zbar) \right)^2 + 4 \left(  \frac{2}{m} \sum_{i=1}^m  \trace(H^\T A_i
            \Zbar)\trace(W^\T A_i \Zbar) \right)^2. \\
    \end{aligned}  
\]
The first term in the righthand side can be upper bounded as
\[
    \begin{aligned}
        4 \left( \frac{1}{m} \sum_{i=1}^m  \trace(H^\T A_i H)\trace(W^\T A_i H)
        \right)^2  & \leq && 4 \left( \frac{1}{m} \sum_{i=1}^m \trace(H^\T A_i
            H)^2  \right) \left( \frac{1}{m} \sum_{i=1}^m \trace(W^\T A_i H)^2 \right) \\
        & \leq && 4\left(2\norm{H}^4_F + \delta \norm{H}^2_F \right)
            \left(\frac{1}{m} \sum_{i=1}^m \norm{W}^2_F
            \norm{A_iH}^2_F \right) \\ 
        & = && 4\left(2\norm{H}^4_F + \delta \norm{H}^2_F \right)
            \left(\frac{1}{m} \sum_{i=1}^m \norm{A_iH}^2_F \right) \\ 
        & \leq && 4\left(2\norm{H}_F^4 + \delta \norm{H}_F^2 \right) \left(\frac{1}{m} \sum_{i=1}^m \norm{A_i}^2\norm{H}_F^2 \right) \\
        & \leq && 36n \norm{H}_F^2 \left(2\norm{H}_F^4 + \delta \norm{H}_F^2 \right),
    \end{aligned}
\]
where we use the Cauchy-Schwarz inequality in the first and second line, Lemma
\ref{lem:rank2_lem_M} and $\norm{HH^\T}_F \leq \norm{H}^2_F$ in the third line
and Corollary \ref{lem:Ai_operator_norm} in the last line.

The other three terms are bounded similarly. For the second term,  we have
\[
    \begin{aligned}
    4 \left(   \frac{2}{m} \sum_{i=1}^m   \trace(H^\T A_i \Zbar)\trace(W^\T A_i H)   \right) ^2
        & \leq && 16 \left(\frac{1}{m} \sum_{i=1}^m  \trace(H^\T A_i \Zbar)^2 \right) 
        \left(\frac{1}{m} \sum_{i=1}^m \trace(W^\T A_i H)^2\right)\\
        & \leq && 36n \norm{H}_F^2 \left( (4\sigma_1 + 2\delta) \norm{H}_F^2 +
        4\norm{H^\T \Zbar}^2_F \right),
    \end{aligned}
\]
where we use Lemma \ref{lem:rank2_lem_hess} and \ref{lem:Ai_operator_norm}. The
third term is bounded as
\[
\begin{aligned}
    4 \left( \frac{1}{m} \sum_{i=1}^m  \trace(H^\T A_i H)\trace(W^\T A_i \Zbar) \right)^2
    & \leq && 4 \left(\frac{1}{m} \sum_{i=1}^m  \trace(H^\T A_i H)^2 \right) 
    \left(\frac{1}{m} \sum_{i=1}^m \trace(W^\T A_i \Zbar)^2\right)\\
    & \leq && 36 n \norm{\Zbar}_F^2\left(2\norm{H}^4_F +  \delta \norm{H}^2_F
    \right), \\
\end{aligned}
\]
and the fourth term is bounded as
\[
    \begin{aligned}
        4 \left( \frac{2}{m} \sum_{i=1}^m   \trace(H^\T A_i \Zbar)\trace(W^\T A_i
        \Zbar)   \right) ^2
        & \leq && 16 \left(\frac{1}{m} \sum_{i=1}^m \trace(H^\T A_i \Zbar)^2 \right) 
        \left(\frac{1}{m} \sum_{i=1}^m (W^\T A_i \Zbar)^2\right)\\
        & \leq && 36n \norm{\Zbar}^2_F \left( (4\sigma_1 + 2\delta)
        \norm{H}^2_F +
        4\norm{H^\T \Zbar}^2_F \right).
    \end{aligned}
\]
Putting these inequalities together, we have
\[
        \norm{\nabla f(Z)}^2_F 
 \leq 36 n \left(\norm{\Zbar}_F^2 + \norm{H}_F^2\right) \left(2 \norm{H}_F^4
+ (4\sigma_1
+3\delta) \norm{H}_F^2 +  4  \norm{H^\T \Zbar}^2_F \right).
\]
Hence,
\[
        \frac{ \norm{\nabla f(Z)}^2_F }{144 n \left(\norm{\Zbar}^2_F +
            \norm{H}_F^2\right) } 
    \leq 
                \left( \sigma_1 +  \frac{1}{2} \norm{H}^2_F + \frac{3}{4} \delta \right)
    \norm{H}_F^2 + \norm{H^\T \Zbar}^2_F.
\]
Since $\norm{H}_F \leq \sqrt{\frac{3}{16}\sigma_\r}$ and $\delta \leq
\frac{1}{16}\sigma_\r$, we have
\[
    \frac{ \norm{\nabla f(Z)}^2 }{ 144 n \left(\norm{\Zbar}^2_F + (3/16)\sigma_\r \right) } \leq 
    \left( \sigma_1 + \frac{9}{64} \sigma_\r \right) \norm{H}^2_F + \norm{H^\T \Zbar}^2_F.
\] 

\subsection{Proof of the Regularity Condition}
Now we combine the curvature and the smoothness conditions. 
For any $\gamma \in \left(0,\frac{\sigma_1}{\sigma_\r}\right)$, it holds that
\begin{equation} 
    \label{eq:rank2_smoothness}
    \gamma \frac{\sigma_\r}{\sigma_1} \cdot \frac{  \norm{\nabla f(Z)}^2_F }{
    144n \left(\norm{\Zbar}^2_F + (3/16)\sigma_\r \right) } \leq 
    \gamma \frac{\sigma_\r}{\sigma_1} \cdot \left( \sigma_1 + \frac{9}{64} \sigma_\r \right) \norm{H}^2_F +
    \norm{H^\T \Zbar}^2_F. 
\end{equation} 

Combining equation \eqref{eq:rank2_curvature} and \eqref{eq:rank2_smoothness},
we obtain
\[
    \begin{aligned}
        \ip{\nabla f(Z), H} & \geq && \left( \frac{27}{64}-\gamma - \gamma
        \frac{\sigma_\r}{\sigma_1} \frac{9}{64}  \right) \sigma_\r 
    \norm{H}^2_F 
    + \gamma \frac{\sigma_\r}{\sigma_1} \cdot \frac{ \norm{\nabla f(Z)}^2_F
    }{ 144n (\norm{\Zbar}^2_F + (3/16) \sigma_\r) } \\
    & \geq && \left( \frac{27}{64} - \frac{73}{64}\gamma \right) \sigma_\r \norm{H}^2_F
    + \gamma \frac{\sigma_\r}{\sigma_1} \cdot \frac{ \norm{\nabla f(Z)}^2_F
    }{ 144n (\norm{\Zbar}^2_F + (3/16) \sigma_\r) }.
    \end{aligned}
\]
If we take  $\gamma = \frac{1}{3}$, then
\[
    \begin{aligned}
    \ip{\nabla f(Z), H} & \geq && \frac{1}{24}\sigma_\r \norm{H}^2_F
    + \frac{\sigma_\r}{ \sigma_1} \cdot 
    \frac{ \norm{\nabla f(Z)}^2_F }{ 3 \cdot 144n \left(\norm{\Zbar}^2_F + (3/16)
        \sigma_\r\right) }\\
    & \geq && \frac{1}{24} \sigma_\r \norm{H}^2_F + 
      \frac{ \sigma_\r / \sigma_1 }{ 513n \norm{\Zstar}^2_F } \norm{\nabla f(Z)}^2_F,
\end{aligned}
\]
where we use $\norm{\Zbar}^2_F = \norm{\Zstar}^2_F = \norm{X^\star}_* \geq \sigma_\r$.
Thus we have
\[
    \ip{\nabla f(Z), H}  \geq  \frac{1}{\alpha}\sigma_\r \norm{H}^2_F +
    \frac{1}{\beta \norm{\Zstar}^2_F} \norm{\nabla f(Z)}^2_F
\]
for $\alpha \geq 24$ and $\beta \geq \frac{\sigma_1}{\sigma_\r} \cdot 513n$.

\section{Initialization}

\textbf{Proof of Theorem \ref{thm:rank1_init_2}} 

By assumption, we have
\[ \norm{ \frac{1}{m} \sum_{i=1}^m ({\zstar_s}^\T A_i \zstar_s) A_i - 2
        \zstar_s {\zstar_s}^\T} \leq \frac{\delta}{\r}, \quad s \in [\r].    \]
Hence,
\begin{equation} 
    \label{eq:init_spectral_concentration}
 \norm{M - 2X^\star} 
  =   \norm{ \frac{1}{m} \sum_{i=1}^m \sum_{s=1}^\r ({\zstar_s}^\T A_i \zstar_s) A_i - 2 \sum_{s=1}^\r
        \zstar_s {\zstar_s}^T}
    \leq \sum_{s=1}^\r  \norm{ \frac{1}{m} \sum_{i=1}^m ({\zstar_s}^\T A_i \zstar_s) A_i - 2
        \zstar_s {\zstar_s}^\T} \leq \delta.
\end{equation}

Let $\lambda'_1 \geq \cdots \geq \lambda'_n$ be the eigenvalues of $M$.
By Weyl's theorem, we have
\[ 
    \abs{ \lambda'_s - 2 \sigma_s } \leq \delta, \quad s \in [n].
\]
Since $\delta < \sigma_\r$, it is easy to see $\lambda'_1 \geq \cdots
\geq \lambda'_\r > \delta$ and $\abs{\lambda'_s} \leq \delta, s = \r+1, \ldots, n$.
Hence, $\lambda_s = \lambda'_s $, $s \in [\r]$, and $Z^0 {Z^0}^\T$ is the best
rank $r$ approximation of $\frac{1}{2} M$. Therefore,

\[
    \begin{aligned}
        \norm{Z^0 {Z^0}^\T - \Zstar {\Zstar}^\T}_F 
    & \leq && \sqrt{2r} \norm{Z^0 {Z^0}^\T - \Zstar {\Zstar}^\T} \\
    &  =   && \sqrt{2r} \norm{Z^0 {Z^0}^\T - \frac{1}{2}M + \frac{1}{2}M - \Zstar {\Zstar}^\T} \\
    & \leq && \sqrt{2r} \left( \norm{Z^0 {Z^0}^\T - \frac{1}{2}M} + \norm{\frac{1}{2}M - \Zstar {\Zstar}^\T} \right)\\
    & \leq && \sqrt{2r} \delta,
\end{aligned}
\]
where we used $\norm{A}_F \leq \sqrt{\rank(A)} \norm{A}$ in first line, 
the fact $\norm{Z^0 {Z^0}^\T - \frac{1}{2}M} = \frac{1}{2} \abs{\lambda_{r+1}}
\leq \frac{1}{2} \delta$ and inequality
\eqref{eq:init_spectral_concentration} in the last line.

Let $H = Z^0  - \Zbar^0$. We want to bound $d(Z^0, \Zstar)^2 = \norm{H}^2_F$. According to the
discussion in Lemma \ref{lem:rank2_lem_hess_helper3}, $H^\T \Zbar^0$ is
symmetric and ${Z^0}^\T {\Zbar^0}$ is positive semidefinite.

The following step closely follows \cite{TuBocSol15}. It holds that
\[
    \begin{aligned}
        \norm{Z^0 {Z^0}^\T - \Zstar {\Zstar}^\T}^2_F &= &&
        \norm{Z^0 {Z^0}^\T - \Zbar^0 {\Zbar^0}^\T}^2_F\\
        & = && \norm{H {\Zbar^0}^\T + {\Zbar^0} H^\T + H H^\T}^2_F\\
        & = && \trace \bigg ( {\Zbar^0} H^\T H {\Zbar^0}^\T + H {\Zbar^0}^\T H {\Zbar^0}^\T + H
        H^\T {\Zbar^0}^\T \\
        & && \hspace{0.4cm} + {\Zbar^0} H^\T {\Zbar^0} H^\T + H {\Zbar^0}^\T {\Zbar^0} H +  H
        H^\T {\Zbar^0} H^\T \\
        & && \hspace{0.4cm} + {\Zbar^0} H^\T H H^\T + H {\Zbar^0}^\T H H^\T + H H^\T H
        H^\T \bigg ) \\
        & = && \trace \bigg ( (H^\T H)^2 + 2 (H^\T {\Zbar^0})^2 + 2 (H^\T H) ({\Zbar^0}
        ^\T {\Zbar^0}) + 4 (H^\T H) (H^\T {\Zbar^0}) \bigg) \\
        & = && \trace \bigg( \left(H^\T H + \sqrt{2} H^\T \Zbar^0\right)^2 + (4 -
        2\sqrt{2}) (H^\T H) (H^\T \Zbar^0) + 2 (H^\T H) ({\Zbar^0 }^\T \Zbar^0)
        \bigg)\\
        & \geq && \trace \left((4-2\sqrt{2}) (H^\T H) (H^\T \Zbar^0) + 2 (H^\T
            H) (\Zbar^\T \Zbar)      \right) \\
        & = && \trace \left( (4-2\sqrt{2}) (H^\T H) ({Z^0}^\T \Zbar^0)
        \right) + \trace \left(
            (2\sqrt{2} - 2) (H^\T H) (\Zbar^\T \Zbar)  \right), \\
    \end{aligned}
\]
where in the fourth line we used the property that the trace is
invariant under cyclic permutations and $H^\T \Zbar^0 = {\Zbar^0}^\T H$.

Since ${Z^0}^\T \Zbar^0$ is positive semidefinite, $\trace((H^\T H) ({Z^0}^\T
\Zbar^0) )$ is nonnegative. Hence,
\[
    \begin{aligned}
        \norm{Z^0 {Z^0}^\T - \Zstar {\Zstar}^\T}^2_F & \geq &&
        (2\sqrt{2} - 2)\trace\left( (H^\T H) (\Zbar^\T \Zbar)  \right) \\
        & = && ( 2\sqrt{2} - 2) \norm{ H \Zbar^\T}^2_F \\
        & \geq && ( 2\sqrt{2} - 2) \norm{H}^2_F \sigma_\r \\
        & = && ( 2\sqrt{2} - 2)  \sigma_\r d(Z^0, \Zstar)^2. 
    \end{aligned}
\]
If $\delta \leq \frac{\sigma_\r}{4\sqrt{\r}}$, then
\[
    d(Z^0, \Zstar)^2 \leq 
    \frac{ \norm{ Z^0 Z^0 - \Zstar {\Zstar}^\T}^2_F
   }{ (2\sqrt{2} - 2) \sigma_\r }     \leq
    \frac{ 2r \delta^2}{ (2\sqrt{2} - 2) \sigma_\r } \leq
    \frac{3}{16} \sigma_\r.
\]

\section{Sample Complexity}
\label{sec:sample}
In this section, we verify that our assumptions hold with high probability if $m
\geq c n \log n$, where $c$ is a constant that depends on
$\delta$, $\r$, and $\kappa$. Our proof relies on the following
concentration inequality.
\begin{theorem}(Matrix Bernstein Inequality \cite{Tro15})
    \label{thm:matrix_berstein}
    Let $S_1, \ldots, S_m$ be independent random matrices with dimension $n \times n$. Assume that
    $\E(S_i) = 0$ and $\norm{S_i} \leq L$, for all $i \in [m]$. Let $\nu^2 = \max \set{ \norm{
        \sum_{i=1}^m \E(S_i S_i^\T) },  \norm{\sum_{i=1}^m \E(S_i^\T S_i)} }   $. Then
    for all $\delta \geq 0$,
    \[
        \P\left( \norm{ \frac{1}{m} \sum_{i=1}^m S_i } \geq \delta \right) \leq 2n \exp \left(
            \frac{-m^2\delta^2}{\nu^2 + L m \delta / 3   }
        \right).
    \]
\end{theorem}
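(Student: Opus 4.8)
\textbf{Proof proposal for Theorem \ref{thm:matrix_berstein} (Matrix Bernstein).}
The plan is to reproduce the matrix Laplace transform argument of \citet{Tro15}. There are four ingredients: (i) a Hermitian dilation step that reduces the general (rectangular) case to a sum of mean‑zero bounded Hermitian matrices; (ii) the matrix Chernoff bound $\P(\lambda_{\max}(Y)\ge t)\le\inf_{\theta>0}e^{-\theta t}\,\E\trace e^{\theta Y}$ for Hermitian $Y$; (iii) subadditivity of matrix cumulant generating functions, $\E\trace\exp(\sum_i\theta X_i)\le\trace\exp(\sum_i\log\E e^{\theta X_i})$, which rests on Lieb's concavity theorem; and (iv) a per‑summand bound on the matrix moment generating function obtained from a scalar Bernstein estimate via the transfer rule. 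Optimizing over $\theta$ at the end produces the stated exponential tail.

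\emph{Reduction.} Replace each $S_i$ by its Hermitian dilation $\mathcal{H}(S_i)=\begin{pmatrix}0&S_i\\ S_i^\T&0\end{pmatrix}\in\R^{2n\times 2n}$. This is Hermitian, satisfies $\norm{\mathcal{H}(S_i)}=\norm{S_i}\le L$, is additive ($\mathcal{H}(\sum_iS_i)=\sum_i\mathcal{H}(S_i)$), and obeys $\lambda_{\max}(\mathcal{H}(\sum_iS_i))=\norm{\sum_iS_i}$; moreover $\mathcal{H}(S_i)^2=\diag(S_iS_i^\T,\,S_i^\T S_i)$, so $\norm{\sum_i\E\,\mathcal{H}(S_i)^2}=\max\set{\norm{\sum_i\E S_iS_i^\T},\ \norm{\sum_i\E S_i^\T S_i}}=\nu^2$. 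Hence it suffices to prove a Hermitian tail bound for $Y=\sum_iX_i$ with $X_i=\mathcal{H}(S_i)$ mean zero, $\norm{X_i}\le L$, $\norm{\sum_i\E X_i^2}=\nu^2$, and dimension $d=2n$. Step (ii) is then Markov applied to $e^{\theta\lambda_{\max}(Y)}=\lambda_{\max}(e^{\theta Y})\le\trace e^{\theta Y}$ (for $\theta>0$), and step (iii) follows by combining Lieb's theorem (concavity of $A\mapsto\trace\exp(H+\log A)$) with Jensen's inequality, applied one summand at a time.

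\emph{Per‑summand MGF and optimization.} Using that $y\mapsto(e^{y}-1-y)/y^2$ is increasing one gets $e^{\theta x}\le 1+\theta x+g(\theta)x^2$ for every $x\le L$, with $g(\theta)=(e^{\theta L}-1-\theta L)/L^2$. Since $X_i\preceq LI$, the transfer rule gives $e^{\theta X_i}\preceq I+\theta X_i+g(\theta)X_i^2$; taking expectations, $\E e^{\theta X_i}\preceq I+g(\theta)\E X_i^2\preceq\exp(g(\theta)\E X_i^2)$, whence $\log\E e^{\theta X_i}\preceq g(\theta)\E X_i^2$ by operator monotonicity of $\log$. Summing and using monotonicity of $\trace\exp$ together with $\trace e^{M}\le d\,e^{\lambda_{\max}(M)}$, we obtain $\E\trace e^{\theta Y}\le 2n\exp(g(\theta)\nu^2)$. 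Finally, the elementary bound $e^{y}-1-y=\sum_{k\ge2}y^k/k!\le\tfrac{y^2/2}{1-y/3}$ gives $g(\theta)\le\frac{\theta^2/2}{1-L\theta/3}$ for $0<\theta<3/L$; choosing $\theta=t/(\nu^2+Lt/3)$ and setting $t=m\delta$ (so that $\norm{\tfrac1m\sum_iS_i}\ge\delta$ iff $\lambda_{\max}(Y)\ge t$) yields the claimed bound, the numerical constant in the exponent being a consequence of this optimization.

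\emph{Main obstacle.} The only genuinely deep input is Lieb's concavity theorem underpinning step (iii); the transfer rule, the scalar Bernstein estimate, and the trace manipulations are routine. Since the statement is quoted from \citet{Tro15}, one may alternatively invoke it as a black box, but the outline above records the complete argument.
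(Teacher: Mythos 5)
The paper does not prove this theorem; it is invoked as a black box with a citation to \citet{Tro15}, so there is no internal proof to compare against. Your outline is the standard (and correct) matrix Laplace transform argument: Hermitian dilation to reduce to the symmetric case, the trace-Chernoff bound, subadditivity of the matrix cumulant generating function via Lieb's concavity theorem, the per-summand MGF estimate through the transfer rule, and the final optimization over $\theta$. All of these steps are sound, and the dilation bookkeeping (norm preservation, additivity, and $\norm{\sum_i\E\,\mathcal{H}(S_i)^2}=\nu^2$) is handled correctly.

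One caveat worth flagging: the optimization you describe, with $g(\theta)\le\frac{\theta^2/2}{1-L\theta/3}$ and $\theta=t/(\nu^2+Lt/3)$, yields the standard Tropp bound
\[
\P\bigl(\lambda_{\max}(Y)\ge t\bigr)\le 2n\exp\left(\frac{-t^2/2}{\nu^2+Lt/3}\right),
\]
i.e.\ with $t=m\delta$ the exponent is $-m^2\delta^2/2$ over $\nu^2+Lm\delta/3$. The statement as quoted in the paper omits the factor $1/2$ in the numerator and is therefore stronger by a factor of $2$ in the exponent than what this argument (or Tropp's theorem) delivers; this appears to be a transcription slip in the paper rather than a gap in your proof. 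Your derivation proves the correct form; if one insists on the paper's literal statement, the constants in the sample-complexity theorems that invoke it (e.g.\ the $42$ and $128$ in Theorems \ref{thm:rank1_sample_init} and \ref{thm:rank1_sample_hess}) would need to be doubled, which does not affect any asymptotic conclusion. You should simply state the theorem with the $-t^2/2$ numerator and note the propagation of constants.
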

We first give a technical lemma that we will use later.
\begin{lemma}
\label{lem:GOE_miss}
Let $A = (a_{ij})$ be a random matrix drawn from GOE. Let $S = a_{11} A - 2 e_1
e_1^\T$. There exist absolute constants $C$, $\rho$ such that with probability at least $1 - C e^{-\rho n}$, we have
\[ \norm{S} \leq 18n. \]
\end{lemma}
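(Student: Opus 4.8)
The plan is to reduce the claim to the operator-norm concentration bound for GOE matrices already recorded in Lemma~\ref{lem:GOE_operator_norm}. Writing $S = a_{11} A - 2 e_1 e_1^\T$, the triangle inequality for the spectral norm gives $\norm{S} \le \abs{a_{11}} \, \norm{A} + 2\norm{e_1 e_1^\T} = \abs{a_{11}}\,\norm{A} + 2$, so it suffices to control $\abs{a_{11}}$ and $\norm{A}$ simultaneously.

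The one point that needs a little care is that $a_{11}$ is itself an entry of $A$, so it is not independent of $\norm{A}$; this is handled without any fuss by the observation $a_{11} = e_1^\T A e_1$, whence $\abs{a_{11}} \le \norm{A}\,\norm{e_1}^2 = \norm{A}$. Consequently, on the event $\set{\norm{A} \le 3\sqrt{n}}$ we get $\norm{S} \le \norm{A}^2 + 2 \le 9n + 2 \le 18n$ for $n \ge 1$. By Lemma~\ref{lem:GOE_operator_norm} this event has probability at least $1 - C e^{-\rho n}$ with the absolute constants $C$ and $\rho$ stated there, which is exactly the claimed bound. If one prefers to keep the scalar and matrix factors separate, the Gaussian tail $\P(\abs{a_{11}} > t) \le 2 e^{-t^2/4}$, valid since $\var(a_{11}) = 2$, gives $\abs{a_{11}} \le 5\sqrt{n}$ off an event of probability $2e^{-25n/4}$; combined with $\norm{A} \le 3\sqrt{n}$ via Lemma~\ref{lem:GOE_operator_norm} and a union bound, this yields $\norm{S} \le 15n + 2 \le 18n$ with probability $1 - 2e^{-25n/4} - C e^{-\rho n} \ge 1 - C' e^{-\rho' n}$.

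There is no substantive obstacle here — the estimate is deliberately loose (the true bound is $O(n)$ with a much smaller constant), and the only subtlety, the dependence between $a_{11}$ and $A$, is dispatched by $\abs{a_{11}} \le \norm{A}$. The purpose of this lemma is downstream: since $\E(a_{11} A) = 2 e_1 e_1^\T$ (distinct GOE entries are independent and $\E(a_{11}^2) = 2$), the matrix $S$ is centered, and the uniform bound $\norm{S} \le 18n$ supplies the almost-sure bound $L$ needed to invoke the matrix Bernstein inequality (Theorem~\ref{thm:matrix_berstein}) when verifying assumptions \textbf{A1} and \textbf{A2} in the sample-complexity arguments.
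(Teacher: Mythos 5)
Your proof is correct, and it takes a genuinely simpler route than the paper's. The paper decomposes $S = a_{11}\tilde{A} + (a_{11}^2 - 2)e_1 e_1^\T$ with $\tilde{A} = A - a_{11}e_1e_1^\T$ precisely so that the scalar $a_{11}$ is independent of $\tilde{A}$, and then assembles three separate tail bounds: the Ledoux--Rider bound of Lemma~\ref{lem:GOE_operator_norm} for a GOE matrix $D$ reconstructed from $\tilde{A}$, a Gaussian tail for the reinserted diagonal entry, and a Laurent--Massart $\chi^2$ tail for $|a_{11}^2 - 2|$, arriving at $\norm{S} \le 14n + 4\sqrt{n} \le 18n$ with probability $1 - 6e^{-n} - C'e^{-\rho' n}$. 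Your observation $|a_{11}| = |e_1^\T A e_1| \le \norm{A}$ dissolves the dependence issue entirely: on the single event $\set{\norm{A} \le 3\sqrt{n}}$ you get $\norm{S} \le \norm{A}^2 + 2 \le 9n + 2 \le 18n$, so only one invocation of Lemma~\ref{lem:GOE_operator_norm} is needed and the failure probability is exactly $Ce^{-\rho n}$. Your fallback version (separate tails for $|a_{11}|$ and $\norm{A}$ combined by a union bound) is also fine, since a union bound requires no independence --- which incidentally shows the paper's independence bookkeeping is not essential either. The only quibble is the aside that the "true bound" is $O(n)$ with a smaller constant: the typical size of $\norm{S}$ is in fact $O(\sqrt{n})$, since $|a_{11}| = O(1)$ with overwhelming probability; but this remark plays no role in the argument, which is complete as stated.
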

\begin{proof}
    Let $\tilde{A} = A - a_{11} e_1 e_1^\T$. $S = a_{11} \tilde{A} + (a_{11}^2 -
    2) e_1 e_1^\T$. Note that $a_{11}$ and $\tilde{A}$ are independent, hence
    $ \norm{S} \leq |a_{11}| \| \tilde{A} \| + | a^2_{11} - 2 |$. 
    Besides, since $a_{11} \sim \N(0, 2)$, we can see that $a^2_{11} / 2$ is $\chi^2$ distributed.

First we bound the operator norm of $\tilde{A}$. We rewrite $\| \tilde{A} \|$ as
\[
    \| \tilde{A} \| = \max_{\norm{u}=1} | u^\T \tilde{A} u |  = \max_{ \norm{u}
    = 1 } | u^\T D u - d  u^2_1 | \leq \norm{D} + | d |,
\] 
where $D = \tilde{A} + d e_1 e_1^\T$, $d \sim \N(0, 2)$. As $D$ is GOE
distributed, by Lemma \ref{lem:GOE_operator_norm}, 
\begin{align}
\label{ineq:GOE_norm}
\P\left( \norm{D} > 3\sqrt{n} \right) \leq C' e^{-\rho' n},
\end{align}
where $C'$ and $\rho'$ are absolute constants.

Using the Gaussian tail inequality, we have
\begin{align}
\label{ineq:gaussian}
\P\left( | d | > 2\sqrt{n} \right) \leq 2e^{-n}.
\end{align}
Combining inequalities \eqref{ineq:GOE_norm} and \eqref{ineq:gaussian}, we have
\begin{align}
\label{ineq:GOE_miss}
\P\left( \| \tilde{A} \|  > 5\sqrt{n} \right) \leq \P\left( \norm{D}
> 3\sqrt{n}\lor | d | > 2\sqrt{n} \right) \leq C'e^{-\rho' n} + 2e^{-n} ,
\end{align}
where the last inequality follows from the union bound.

Next we bound the deviation of the $\chi^2$ term. By the corollary of
 Lemma 1 in \citet{LauMas00}, we have
\begin{align}
\label{ineq:chi2}
    \P(| a^2_{11}   - 2 | > 4(\sqrt{n} + n) ) \leq  2 e^{-n}.
\end{align}
Since $a_{11}$ is identically distributed as $d$, inequality \eqref{ineq:gaussian} holds for
$a_{11}$ as well. Namely,
$\P\left( | a_{11} | > 2\sqrt{n} \right) \leq 2e^{-n}$.
Combining this with inequalities \eqref{ineq:chi2}, \eqref{ineq:GOE_miss},
we have
\[
\P\left( \norm{S} \leq 14 n + 4\sqrt{n} \right) \geq 1 - 6e^{-n} - C'
e^{-\rho'
n}.
\]
Finally, the statement is obtained by choosing proper $C$, $\rho$, and using
$\sqrt{n} \leq n$.
\end{proof}

\subsection{Proof of Theorem \ref{thm:rank1_sample_init}}
\begin{proof}
    It is equivalent to show that for any unit vector $u$,
    with high probability,
    \[
        \norm{\frac{1}{m}\sum_{i=1}^m (u^\T A_i u) A_i - 2 u u^\T} \leq
        \frac{\delta}{\r \sigma_1}. 
    \]
    If $P$ is an orthonormal matrix, then
    \begin{align*}
         \norm{\frac{1}{m}\sum_{i=1}^m \left( (Pu)^\T A_i (Pu) \right) A_i -
            2(Pu)(Pu)^\T}
         = &\norm{\frac{1}{m}\sum_{i=1}^m  \left( u^\T (P^\T A_i P)
                u  A_i \right)  - 2Puu^\T P^\T}\\
         = &\norm{\frac{1}{m}\sum_{i=1}^m  u^\T (P^\T A_i P) u  P^\T A_i P  - 2uu^\T} \\
         = &\norm{\frac{1}{m}\sum_{i=1}^m  u^\T \wt{A}_i  u  \wt{A}_i  - 2uu^\T}, 
    \end{align*}
    where in the second line we use unitary invariance of the
    operator norm, and in the last line we denote $ P^\T A_i P$ by $\wt{A}_i$.
    Since the GOE is invariant under orthogonal conjugation, $\wt{A}_i$ and $A_i$ are identically
    distributed. 
    Hence, it suffices to prove the claim when $u = e_1$,
    i.e.
    \[ \norm{\frac{1}{m}\sum_{i=1}^m a^{(i)}_{11}   A_i - 2 e_1 e^\T_1} \leq
    \delta_0, \]
    where $a^{(i)}_{11}$ is the $(1,1)$ entry of $A_i$ and $\delta_0 =
    \frac{\delta}{\r \sigma_1}$.

    To show this, we apply Theorem
    \ref{thm:matrix_berstein}, where
    $S_i = a^{(i)}_{11} A_i - 2 e_1 e^\T_1$.
    This requires that the operator norm of $S_i$ is bounded, for each $i$.  
    We address this by noticing that with high probability $\norm{S_i} \leq 18n$, $\forall i$.
    To be precise, by Lemma \ref{lem:GOE_miss}
    there exist constants $C, \rho$, such that 
    \[
        \P\left(\norm{S_i} > 18n \right) \leq C e^{-\rho n}, \;\; i = 1,\ldots, m.
    \]
    Taking the union bound over all the $S_i$s leads to
    \begin{equation}
        \label{ineq:Si_bounded}
        \P\left( \max_i \norm{S_i} > 18n \right) \leq m C e^{-\rho n}.
    \end{equation}
    Next, we calculate $\nu^2 = \norm{ \sum_{i=1}^m \E( S^2_i ) } = m \norm{\E(S^2_1)}$.
    Let $A = (a_{ij})$ denote $A_1$, $S$ denote $S_1$. We have $\E(S^2) = \E({ a_{11} }^2 A^2 ) - 4
    e_1 e^\T_1$, and
    \begin{align*}
        \big( a_{11}^2 A^2 \big)_{11} &= a_{11}^4 + \sum_{k=2}^n a_{11}^2 a_{1k}^2,\\
        \big( a_{11}^2 A^2 \big)_{ii} &= a_{11}^2 \left( a_{ii}^2 + \sum_{k\neq i}^n a_{ik}^2
        \right), \;\;\forall i\neq1,\\
        \big( a_{11}^2 A^2 \big)_{ij} &= a_{11}^2 \sum_{k=1}^n a_{ik}a_{jk}, \;\; \forall i
        \neq j.
    \end{align*}
    It is easy to see that $\E(a_{11}^2 A^2) = \text{diag}(2n+10, 2n+2, \ldots, 2n+2)$. Consequently,
    $\nu^2 = (2n+6)m.$

    By Theorem \ref{thm:matrix_berstein}, if
    $m \geq \frac{42}{\min(\delta^2_0, \delta_0)} \cdot n \log n$, then 
    \begin{equation}
    \begin{aligned}
        \label{ineq:m_complexity}
        \P\left( \norm{ \frac{1}{m} \sum_{i=1}^m S_i } \geq \delta_0
        \right) &\leq 2n \exp \left( \frac{-m\delta_0^2}{2n(1+3\delta_0) + 6} \right) \\
        &\leq 2n \exp \left( \frac{-m\delta_0^2}{2n(4+3\delta_0)}\right)\\
        &\leq 2n \exp \left( \frac{-m\delta_0^2}{14n \cdot \max(1, \delta_0)}\right)\\
        &\leq \frac{2}{n^2}.
    \end{aligned}
    \end{equation}
    Combining inequalities \eqref{ineq:Si_bounded} and \eqref{ineq:m_complexity},
    we conclude that
    \[
        \P\left( \norm{ \frac{1}{m} \sum_{i=1}^m a^{(i)}_{11} A_i - 2 e_1 e_1^\T
        } \leq \delta_0 \right) \geq 1 - m C e^{-\rho n} - \frac{2}{n^2}. 
    \]
\end{proof}

\subsection{Proof of Theorem \ref{thm:rank1_sample_hess}}
The formulation of the second order partial derivatives and their expectations is
given in Appendix \ref{sec:ingredients}.

It is easy to see that for any $\Zbar \in \tS$,
$\max_{s \in [\r]} \norm{\zbar_\r} \leq \sqrt{\sigma_1}$. 
Thus it is sufficient to prove that for any two unitary vector $u$ and $y$ with
high probability it holds that
\[
    \norm{\frac{1}{m}\sum_{i=1}^m  2 A_i u y^\T  A_i - 2 u^\T y I - 2 yu^\T}
    \leq \frac{\delta}{\r\sigma_1}.
\]

We can decompose $y$ as $y = \beta u + \beta_\perp u_\perp$ for a certain
unit vector $u_\perp$ that is orthogonal to $u$, where $\beta^2 + \beta^2_\perp
= 1$. Let $\delta_0 = \dfrac{\delta}{2\r \sigma_1}$. It suffices to prove the
following two claims.
\begin{enumerate}[(i)]
    \item For any unitary vector $u$, with high probability 
        \[
            \norm{\frac{1}{m}\sum_{i=1}^m  2 A_i u u^\T  A_i - 2 I - 2 uu^\T}
            \leq \delta_0. 
        \]
    \item For any two orthogonal unit vectors $u$ and $ u_\perp$, with
      high probability
        \[
            \norm{\frac{1}{m}\sum_{i=1}^m  2 A_i u u_\perp^\T  A_i - 2 u_\perp u^\T}
            \leq \delta_0. 
        \]
\end{enumerate}
\subsubsection*{Proof of (i)}
If $P$ is an orthonormal matrix, then 
\begin{align*}
    \norm{\frac{1}{m}\sum_{i=1}^m  2 A_i Pu u^\T P A_i - 2I - 2 Puu^\T P^\T}
    &  = \norm{\frac{1}{m}\sum_{i=1}^m  2 P^\T A_i Pu u^\T P^\T A_i P - 2I - 2 uu^\T}\\
    & = \norm{\frac{1}{m}\sum_{i=1}^m  2 \wt{A}_i u u^\T \wt{A}_i - 2I - 2 uu^\T},
\end{align*}
where $\wt{A}_i$ and $A_i$ have the same distribution. Hence we only need to
prove the case where $u = e_1$:
\[ \norm{ \frac{1}{m}\sum_{i=1}^m 2 v^{(i)} {v^{(i)}}^\T - 2I - 2e_1 e_1^\T }
    \leq \delta_0,  \]
where $v^{(i)} = A_i e_1$ is the first column of $A_i$.

Let $S_i = 2(v^{(i)}{ v^{(i)}}^\T  - I - e_1 e_1^\T)$. To apply Theorem
\ref{thm:matrix_berstein}, we need to show that with high probability
$\norm{S_i}$ is bounded for each $i$ and calculate $\nu^2 =
\norm{\sum_{i=1}^n \E(S_i^2)} = m \norm{\E(S^2_1)}$. 

Let $S, v, A$ denote $S_1$, $v^{(1)}$, and $A^{(1)}$ respectively. It is easy to see that 
\[
    \norm{S} \leq 2 \norm{v}^2 + 4 = 2 ( w + a_{11}^2 )  + 4,
\]
where $w= \sum_{k=2}^n  a^2_{1k}$.
As $a_{11} \sim \N(0, 2)$, $a_{1k} \sim ~ \N(0,1)$ for $k \neq 1$,  we can
see that $a^2_{11} / 2$ and $w$
are $\chi^2$ distributed with degrees of
freedom $1$ and $n-1$, respectively.
Using the $\chi^2$ tail bound, we have 
\begin{align*}
    &\P\left( a^2_{11}/2 > 2(\sqrt{n} + n) + 1 \right)  \leq  e^{-n}, \\
    &\P\left( w > 5n-1 \right)  \leq  e^{-n},  \;\; k = 2, \ldots, n.
\end{align*}
It follows from the union bound that
\[ \P\left( \norm{S} > 26n + 6 \right) \leq 2 e^{-n}, \]
and consequently
\begin{equation}
    \label{ineq:Si_bounded_hess}
    \P\left( \max_{i} \norm{S_i} > 26n+ 6 \right) \leq  2me^{-n}.
\end{equation}
To calculate $\nu^2$, we expand $\E(S^2)$ as
\begin{align*}
    \E(S^2) &= 4\E\left((v v^\T )^2\right) - 4(I+ e_1 e^\T_1)^2\\
    &= 4\E\left(\norm{v}^2 v{v}^\T \right) - 4(I+3e_1 e_1^\T).
\end{align*}
Some simple calculations show that
\begin{align*}
    & \left(\norm{v}^2 v{v}^\T \right)_{11} = {v_1}^4 + \sum_{k=2}^n
    {v_k}^2 {v_1}^2,\\
    & \left(\norm{v}^2 v{v}^\T \right)_{jj} = 
    {v_1}^2{v_j}^2 + {v_j}^4 + \sum_{k \neq 1, j}
    {v_k}^2 {v_j}^2, \;\; j = 2, \ldots, n,\\
    & \left(\norm{v}^2 v{v}^\T \right)_{jl} = \sum_{k=1}^n
    {v_k}^2 {v_j} v_l, \;\; j < l.
\end{align*}
As $v_1 \sim \N(0, 2)$, $v_j \sim \N(0, 1)$ for $j\neq 1$,
\begin{align*}
    & \E \left(\norm{v}^2 v{v}^\T \right)_{11}  = 2n + 10, \\
    & \E \left(\norm{v}^2 v{v}^\T \right)_{jj} = n+3,
    \;\; j = 2, \ldots, n, \\
    & \E \left(\norm{v}^2 v{v}^\T \right)_{jl} = 0, \;\; j < l.
\end{align*}
Hence, $\E(S^2) = \text{diag}(8n+24, 4n+8, \ldots, 4n+8)$ and thus
$\nu^2 = m(8n+24)$.

If $m \geq (128/\min(\delta^2_0, \delta_0) ) n \log n$, then by applying Theorem
\ref{thm:matrix_berstein} we can see

\begin{equation}
    \begin{aligned}
        \label{ineq:hess_complexity}
        \P\left( \norm{ \frac{1}{m} \sum_{i=1}^m  2 v^{(i)} {v^{(i)}}^\T -  2I - 2 e_1 e_1^\T
            } > \delta_0 \right) & \leq 2n \exp \left( \frac{ -m\delta^2_0  }{8n +
            24 + (\frac{26}{3}n+2)\delta_0 }
        \right)\\
        & \leq 2n \exp \left( \frac{-m\delta^2_0}{ (128/3) n \max(1, \delta_0)} \right)\\
        & \leq \frac{2}{n^2}.\\
    \end{aligned}
\end{equation}
Combining inequalities \eqref{ineq:hess_complexity} and \eqref{ineq:Si_bounded_hess}
leads to
\[ \P\left( \norm{ \frac{1}{m} \sum_{i=1}^m  2 v^{(i)} {v^{(i)}}^\T -  2I -
            2 e_1 e_1^\T} \leq \delta_0 \right) \geq 1 - 2me^{-n} - \frac{2}{n^2}. \]

\subsubsection*{Proof of (ii)}
We only need to prove the case where $u = e_1$ and $u_\perp = e_2$ due to the same reason
above. That is, 
\[
    \norm{\frac{1}{m}\sum_{i=1}^m 2 v^{(i)} {q^{(i)}}^\T - 2 e_2 e_1^\T } \leq
    \delta_0,
\]
where $v^{(i)}$ and $q^{(i)}$ are the first and second columns of $A_i$.

As before, let $S_i = 2(v^{(i)}{ q^{(i)}}^\T  - e_2 e_1^\T)$
and let $S, v, q, A$ denote $S_1$, $v^{(1)}$, $q^{(1)}$ and $A^{(1)}$ respectively. 
From the proof of (i), we can see that with probability at least $1 - 4e^{-n}$
both $\norm{v}$ and $\norm{q}$ are no larger than $\sqrt{13n + 1}$. Since 
$\norm{ S } \leq 2 \norm{v} \norm{q} + 2$, we have
\[
    \P\left(\max_{i} \norm{S_i} \geq 26n + 4 \right) \leq 4me^{-n}.
\]
Next, we calculate $\nu^2 = m \max \set{ \norm{\E(SS^\T)}, \norm{\E(S^\T S)} } $.
\[ \E(SS^\T) = 4 \E(\norm{q}^2) \E(vv^\T) + 4 e_2 e_2^\T.  \]
\[ \E(S^\T S) = 4 \E(\norm{v}^2) \E(qq^\T) + 4 e_1 e_1^\T.  \]
Some simple calculation shows that $\E(\norm{v}^2) = \E(\norm{q}^2) = n+1$,
 $\E(vv^\T) = I + e_1 e_1^\T$ and $\E(q q^\T) = I + e_2 e_2^\T$.
Hence, 
\[ \E(SS^\T) = 4(n+1)I + 4(n+1)e_1e_1^\T + 4 e_2 e_2^\T, \]
\[ \E(S^\T S) = 4(n+1)I + 4(n+1)e_2e_2^\T + 4 e_1 e_1^\T, \]
and $\nu^2 = 8(n+1)m$.
If $m\geq \frac{78}{\min(\delta_0^2, \delta_0) } n \log n$, then by applying Theorem
\ref{thm:matrix_berstein} we have
\begin{equation}
    \begin{aligned}
        \label{ineq:hess_complexity_ii}
        \P\left( \norm{ \frac{1}{m} \sum_{i=1}^m  2 v^{(i)} {q^{(i)}}^\T -  2
                e_1 e_2^\T
            } > \delta_0 \right) & \leq 2n \exp \left( \frac{ -m\delta^2_0  }{8n +
            8 + (\frac{26n + 4}{3} )\delta_0 }
        \right)\\
        & \leq 2n \exp \left( \frac{-m\delta^2_0}{ 26n \max(1, \delta_0)} \right)\\
        & \leq \frac{2}{n^2}.\\
    \end{aligned}
\end{equation}
This means,
\[ \P\left( \norm{ \frac{1}{m} \sum_{i=1}^m  2 v^{(i)} {q^{(i)}}^\T - 
            2 e_1 e_2^\T} \leq \delta_0 \right) \geq 1 - 4me^{-n} - \frac{2}{n^2}. \]

\section{ADMM for Nuclear Norm Minimization}
\label{sec:admm}
We reformulate the nuclear norm minimizing problem as
\begin{equation}
    \label{eq:trace_primal}
        \min_{X \in \R^{n \times n} } \quad \frac{1}{2\lambda} \norm{\A(X) -
            b}^2 + \norm{X}_*, \\
\end{equation}
where $\lambda > 0$ is the regularization parameter. $\lambda \rightarrow 0$
will enforce the minimizer $X^*_\text{nuc}$ satisfying the affine constraint
$\A(X^*_\text{nuc}) = b$.

We apply ADMM to the dual problem of \eqref{eq:trace_primal}:
\begin{equation}
    \label{eq:trace_dual}
    \begin{aligned}
        \min_{\alpha \in \R^m, V \in \R^{n \times n}} & \quad \frac{\lambda}{2} \norm{\alpha}^2 - \alpha^\T b \\
        \text{subject to} & \quad \norm{V} \leq 1\\
        & \quad \A^\T(\alpha) = V,
    \end{aligned}
\end{equation}
where we introduce an auxiliary variable $V$ to make this problem equality
constrained. 

The augmented Lagrangian of problem \eqref{eq:trace_dual} can be written as
\[
L_\eta (\alpha, X) = \frac{\lambda}{2} \norm{\alpha}^2 - \alpha^\T b +
\mathbf{1}_{\norm{\cdot}\leq 1} (V) + \ip{X,
    \A^\T(\alpha) - V} + \frac{\eta}{2} \norm{\A^\T(\alpha) - V}^2_F,
\]
where $X$ is the multiplier, $\eta$ is the penalty parameter, and
$\mathbf{1}_{\norm{\cdot}\leq 1}$ is the indicator function of the unit spectral
norm ball i.e. $\mathbf{1}_{\norm{\cdot}\leq 1} (V)$ equals $0$ if $\norm{V} \leq
1$ and $+\infty$ otherwise.

Let $\vec(\cdot)$  denote the vectorization of a matrix, whose inverse mapping is denoted by $\text{mat}(\cdot)$. 
We can rewrite the transformations as $\A(X) = \mA \vec(X)$ and $\A^\T(\alpha) = \text{mat}(\mA^\T
\alpha) = \sum_{i=1}^m \alpha_i A_i$,  where $\mA$ is a $m \times n^2$
matrix whose $i$th row is $\vec(A_i)^\T$.

The ADMM starts from initialization $(\alpha^0, V^0, X^0)$ and updates the three
variables alternately. The updates can be computed in close forms:
\[
    \begin{aligned}
    \alpha^{k+1} &= (\lambda I + \eta \mA\mA^\T )^{-1} \bigg(b + \mA \vec\big( \eta V^k -
    X^k\big) \bigg),\\
    V^{k+1} &= \text{proj}\bigg(\sum_{i=1}^m \alpha^{k+1}_i A_i + X^k / \eta
    \bigg),\\
    X^{k+1} & = X^k + \eta \bigg( \sum_{i=1}^m \alpha^{k+1}_i A_i - V^{k+1}
    \bigg),
    \end{aligned}
\]
where $\text{proj}(\cdot)$ is the projection onto the unit spectral norm ball.
Let $X = U\Sigma V^\T$ be the singular value decomposition of $X$,
\[ \text{proj}(X) = U \min(\Sigma, 1) V^\T.   \]
In fact, the update of $V$ can be combined with other steps without being computed
explicitly. One only has to iterate the following two steps:
\[
    \begin{aligned}
    \alpha^{k+1} &= (\lambda I + \eta \mA\mA^\T )^{-1} \bigg(b + \mA \vec \big(
    \eta \sum_{i=1} \alpha^k_i A_i + X^{k-1} - 
    2X^k\big) \bigg),\\
    X^{k+1} & = \text{prox}_\eta \bigg( \eta \sum_{i=1}^m \alpha^{k+1}_i
    A_i + X^k\bigg),
    \end{aligned}
\]
where $\text{prox}_\eta(\cdot)$ is the singular value soft-thresholding
operator defined as 
\[ \text{prox}_\eta(X) = U\max(\Sigma - \eta, 0)V^\T.   \]
The sequence of multipliers $\set{X^k}$ converges to the primal solution of
\eqref{eq:trace_primal}.
To speed up the update of $\alpha$, the Cholesky decomposition of $\lambda I + \eta
\mA\mA^\T$ is precomputed in our implementation.

\end{document}